\documentclass{article}

\PassOptionsToPackage{numbers, compress}{natbib}



     \usepackage[final]{neurips_2023}



\usepackage[utf8]{inputenc} 
\usepackage[T1]{fontenc}    
\usepackage{hyperref}       
\usepackage{url}            
\usepackage{booktabs}       
\usepackage{amsfonts}       
\usepackage{nicefrac}       
\usepackage{microtype}      
\usepackage{xcolor}         

\usepackage{titletoc}

\usepackage{graphicx}

\usepackage{amsmath}
\usepackage{amssymb}
\usepackage{mathtools}
\usepackage{amsthm}

\usepackage{wrapfig}

\usepackage{algorithmic}
\usepackage{algorithm}

\usepackage{soul}
\usepackage{multirow}

\newtheorem{theorem}{Theorem}[section]
\newtheorem{proposition}[theorem]{Proposition}
\newtheorem{lemma}[theorem]{Lemma}

\newtheorem{definition}[theorem]{Definition}

\newtheorem{remark}[theorem]{Remark}
\newcommand{\fset}{D}
\newcommand{\fnum}{d}

\DeclareMathOperator*{\argmin}{arg\,min}

\title{SHAP-IQ: Unified Approximation of any-order Shapley Interactions}

%

\author{%
  Fabian Fumagalli\thanks{denotes equal contribution}\\
  Bielefeld University, CITEC\\ 
  D-33619, Bielefeld, Germany\\
  \texttt{ffumagalli@techfak.uni-bielefeld.de}
  \And
  Maximilian Muschalik$^{*}$\\
  LMU Munich, MCML Munich\\
  D-80539, Munich, Germany\\
  \texttt{maximilian.muschalik@ifi.lmu.de}
  \And
  Patrick Kolpaczki\\
  Paderborn University\\
  D-33098, Paderborn, Germany\\
  \texttt{patrick.kolpaczki@upb.de}
  \And
  Eyke Hüllermeier\\
  LMU Munich, MCML Munich\\
  D-80539, Munich, Germany\\
  \texttt{eyke@ifi.lmu.de}
  \And
  Barbara Hammer\\
  Bielefeld University, CITEC\\
  D-33619, Bielefeld, Germany\\
  \texttt{bhammer@techfak.uni-bielefeld.de}
}


\begin{document}

\maketitle

\begin{abstract}
Predominately in explainable artificial intelligence (XAI) research, the Shapley value (SV) is applied to determine feature attributions for any black box model. Shapley interaction indices extend the SV to define any-order feature interactions. Defining a unique Shapley interaction index is an open research question and, so far, three definitions have been proposed, which differ by their choice of axioms. Moreover, each definition requires a specific approximation technique. Here, we propose SHAPley Interaction Quantification (SHAP-IQ), an efficient sampling-based approximator to compute Shapley interactions for arbitrary cardinal interaction indices (CII), i.e. interaction indices that satisfy the linearity, symmetry and dummy axiom. SHAP-IQ is based on a novel representation and, in contrast to existing methods, we provide theoretical guarantees for its approximation quality, as well as estimates for the variance of the point estimates. For the special case of SV, our approach reveals a novel representation of the SV and corresponds to Unbiased KernelSHAP with a greatly simplified calculation. We illustrate the computational efficiency and effectiveness by explaining language, image classification and high-dimensional synthetic models.
\end{abstract}


\section{Introduction}
Feature attributions are a prevalent approach to interpret black box machine learning (ML) models \cite{Adadi.2022,chen2023algorithms,Lundberg_Lee_2017}.
However, in many real-world applications, such as understanding drug-drug interactions, mutational events or complex language models, quantifying \emph{interactions} between features is essential, too \cite{Wright.2016,DBLP:journals/bmcbi/LiuZG19,Tsang.2020}.
Feature interactions provide a more comprehensive explanation, which can be seen as an enrichment of feature attributions \cite{pmlr-v206-bordt23a,Sundararajan_Dhamdhere_Agarwal_2020,Tsai_Yeh_Ravikumar_2022}.
While feature attributions quantify the contribution of \emph{single} features to the model's prediction or performance, feature interactions quantify the contribution of a \emph{group} of features to the model's prediction or performance.

In this work, we are interested in feature interactions that make use of the Shapley value (SV) and its extension to Shapley interactions.
The SV is a concept from cooperative game theory that has been used, apart from feature attributions \cite{chen2023algorithms}, as a basis for many Shapley-based explanations \cite{Jia.2019,ghorbani.2019,yeh2018representer}.
It distinguishes itself through uniqueness given a set of intuitive axioms.
A number of approaches extend Shapley-based explanations to feature interactions \cite{Grabisch_Roubens_1999,pmlr-v206-bordt23a,Sundararajan_Dhamdhere_Agarwal_2020,Tsai_Yeh_Ravikumar_2022}.
Yet, in contrast to the SV, a ``natural'' extension of the intuitive set of axioms for a unique Shapley interaction index is less clear.
Moreover, its efficient computation is challenging and, so far, approximation approaches are specifically tailored to the particular definition.

In this paper, we consider a more general class of interaction indices, known as cardinal interaction indices (CII) \cite{Grabisch_Roubens_1999}, which covers all currently proposed definitions and all other that satisfy the (generalized) linearity, symmetry and dummy axiom.
We present SHAPley Interaction Quantification (SHAP-IQ), a sampling-based unified approximation method.
It is substantiated by mathematical guarantees and can be applied to \emph{any} CII to approximate any-order interaction scores efficiently.

\paragraph{Contribution.} Our main contributions include:
\begin{itemize}
\item We consider a general form of interaction indices, known as CII (Definition \ref{def_SI}) and establish a novel representation (Theorem \ref{thm::si}), which we utilize to construct SHAP-IQ (Definition \ref{def::shapx}), an efficient sampling-based estimator.\footnote{The \emph{shapiq} package extends on the well-known \emph{shap} library and can be found at \url{https://pypi.org/project/shapiq/}.}
\item We show that SHAP-IQ is unbiased, consistent and provide a general approximation bound (Theorem~\ref{thm::unbiased_consistent}).
 We further prove that SHAP-IQ maintains the efficiency condition for n-Shapley Values \cite{pmlr-v206-bordt23a} and the Shapley Taylor Interaction Index \cite{Sundararajan_Dhamdhere_Agarwal_2020} (Theorem~\ref{thm::s-efficiency}).
\item For the SV, we find a novel representation (Theorem \ref{thm::SV_representation}). 
We further prove that SHAP-IQ is linked to Unbiased KernelSHAP \cite{Covert_Lee_2021} (Theorem \ref{thm::u_ksh}) and greatly simplifies its representation.
\item We use SHAP-IQ to compute any-order n-Shapley Values on different ML models and demonstrate that it outperforms existing baseline methods.
We further contrast different existing CIIs and compare SHAP-IQ to the corresponding baseline approximation method.
\end{itemize}

\begin{figure}[t]
    \centering
    \includegraphics[width=0.9\textwidth]{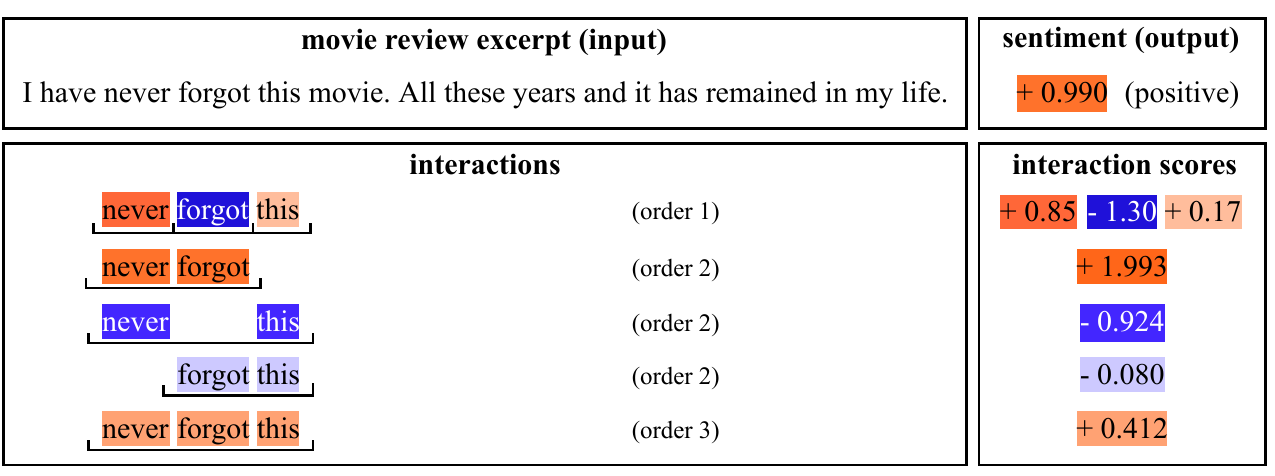}
    \caption{Interaction scores for a movie review excerpt presented to a sentiment analysis model.}
    \label{fig:intro_illustration}
\end{figure}

\section{Related Work}
The Shapley Interaction Index (SII) \cite{Grabisch_Roubens_1999}, its efficiency preserving aggregation as n-Shapley Values (n-SII) \cite{pmlr-v206-bordt23a}, the Shapley Taylor Interaction (STI) \cite{Sundararajan_Dhamdhere_Agarwal_2020} and the Faithful Shapley Interaction Index (FSI) \cite{Tsai_Yeh_Ravikumar_2022} offer different ways of extending the SV to interactions, which extend on the linearity, symmetry and dummy axiom to provide a uniquely defined interaction index. 
SII and STI extend on axiomatic properties of the weighted sum for SV \cite{Shapley.1953}, whereas FSI extends on the axiomatic properties of the Shapley interaction as the solution to a weighted least square solution \cite{Ruiz_Valenciano_Zarzuelo_1996,Ruiz_Valenciano_Zarzuelo_1998}.

In the field of cooperative game theory, interactions have also been studied from a theoretical perspective as the solution of a weighted least square problem and the sum of marginal contributions with constant weights, which both yield a generalized Banzhaf value \cite{Hammer_Holzman_1992,Grabisch_Marichal_Roubens_2000}.

In the ML community, interactions of features have been studied from a practical perspective for text \cite{Murdoch.2018} and image \cite{Tsang.2020b} data, for specific models, such as neural networks \cite{Tsang.2018,Cui.2020,Singh.2019,Janizek.2021} or tree based models \cite{Lundberg.2020}.
Other concepts of interactions have been discussed in \cite{Tsang.2020} using marginal contributions, from a statistical perspective with functional decomposition \cite{Molnar.2019} and improved white box models with interaction terms \cite{Lou.2013}.

To approximate SII and STI, a permutation-based method \cite{Sundararajan_Dhamdhere_Agarwal_2020,Tsai_Yeh_Ravikumar_2022}, as an extension of ApproShapley \cite{Castro.2009}, was suggested, whereas FSI relies on a kernel-based approximation, similar to KernelSHAP \cite{Lundberg_Lee_2017}, which utilizes a representation of the SV as the solution of a weighted least square problem \cite{Charnes_Golany_Keane_Rousseau_1988}.
Unlike these specific approaches, we consider general CIIs, which subsume all of the above mentioned measures, and we propose a generic approximation technique, which can be accompanied by mathematical guarantees.
In case of the SV, our approximation of the CII is related to Unbiased KernelSHAP \cite{Covert_Lee_2021}, which is a variant of KernelSHAP \cite{Lundberg_Lee_2017}.
It is further related to stratified sampling approximations for the SV and our sampling approach can be seen as flexible framework to find the optimum allocation for each stratum \cite{Castro.2017}.

\section{The Cardinal Interaction Index (CII) and Shapley-based Explanations}
\label{sec::shapley-based-explanations}
In this section, we review Shapley-based explanations and introduce the CII, which we aim to approximate in Section \ref{sec::shap-iq}.
We further introduce existing baseline methods for specific CIIs and Unbiased KernelSHAP for the SV, which is linked to our proposed method.

\paragraph{Notations.}
We refer to the model behavior on a set of features $\fset = \{1,\dots,\fnum\}$ as a function $\nu: \mathcal P(D) \to \mathbb{R}$, where $\mathcal P(\fset)$ refers to the power set of $\fset$.
We denote $\nu_0(T) := \nu(T)-\nu(\emptyset)$, which is the default setting in game theory and also known as \emph{set function} \cite{Grabisch_Roubens_1999,DBLP:journals/geb/FujimotoKM06}.
The subsets $S \subseteq D$ refer to the set of features (or players in game theory) of which the \emph{interaction} is computed, where we use lower case letters for the cardinality, i.e. $s := \vert S \vert$.
The maximum order interaction of interest is denoted with $s_0$ and we use the set $\mathcal T_{k} := \{T \subseteq \fset: k \leq t \leq \fnum-k\}$ and the set of interactions $\mathcal S_{s_0} := \{S \subseteq \fset \mid s \leq s_0, S \neq \emptyset\}$.
For a subset $T \subseteq \fset$, we refer to the binary representation as $Z_T=(z_1,\dots,z_\fnum) \in \{0,1\}^\fnum$ with $z_i = \mathbf{1}(i \in T)$ for $i=1,\dots,\fnum$, where $\mathbf{1}$ refers to the indicator function.
We further denote the Shapley kernel \cite{Charnes_Golany_Keane_Rousseau_1988,Lundberg_Lee_2017} as $\mu(t) := \frac{1}{\fnum-1}\binom{\fnum-2}{t-1}^{-1}$.

Removal-based explanations \cite{Covert_Lundberg_Lee_2021} consider a model that is trained on $\fnum$ features, where the goal is to examine a \emph{model behavior} that is defined on a subset of features.
The model behavior $\nu$ for a subset of features could, for instance, be a particular model prediction for one input (local explanation) or an overall measure of model performance (global explanation), if only this subset of features is known \cite{Covert_Lundberg_Lee_2021}.
To quantify the contribution for individual features, the change in model behavior is evaluated, if the feature is removed from the model.
To restrict a ML model on a subset of features, different \emph{feature removal} techniques have been proposed, such as marginalization of features or retraining the model \cite{Covert_Lundberg_Lee_2021}.
To quantify the impact of a single feature $i \in \fset$ on the model behavior $\nu$ it is then intuitive to compute the difference $\delta^\nu_{\{i\}}(T) = \nu(T\cup\{i\})-\nu(T)$ for subsets $T \subseteq D \setminus \{i\}$. 
For a distinct pair of features $(i,j)$ with $i,j \in \fset$, a natural extension is $\delta^\nu_{\{i,j\}}(T) = \nu(T \cup \{i,j\}) - \nu(T) - \delta^\nu_{\{i\}}(T) - \delta^\nu_{\{j\}}(T)$ for $T \in \fset \setminus \{i,j\}$, i.e. subtracting the contribution of single features from the joint impact of both features.
The following definition generalizes this recursion and is known as \emph{discrete derivative} or \emph{$S$-derivative} \cite{DBLP:journals/geb/FujimotoKM06}.
\begin{definition}[Discrete Derivative \cite{DBLP:journals/geb/FujimotoKM06}]
For $S \subseteq \fset$ the \emph{S-derivative of $\nu$ at $T \subseteq D \setminus S$} is
\begin{equation*}
    \delta^\nu_S(T) := \sum_{L\subseteq S}(-1)^{s-l}\nu(T \cup L).
\end{equation*}
\end{definition}
To obtain an attribution score, the marginal contributions on different subsets $T \subseteq \fset \setminus S$ are aggregated using a specific \emph{summary technique}.
In this work, we are interested in the approximation and extension of one particular summary technique for single features $i \in \fset$, called the Shapley value \cite{Shapley.1953}, independent of model behavior and feature removal.
\begin{definition}[Shapley Value (SV) \cite{Shapley.1953}]
The SV is $I^{\text{SV}}(i) = \sum_{T \subseteq \fset\setminus \{i\}}\frac{(\fnum-t-1)!t!}{\fnum!}\delta^\nu_{\{i\}}(T)$, $i \in \fset$.
\end{definition}
The SV is the unique attribution method that fulfills the axioms: symmetry (attributions are independent of feature ordering), linearity (in terms of the model behavior $\nu$), dummy (if a feature does not change $\nu$ then its attribution is zero) and efficiency (the sum of attributions are equal to $\nu_0(\fset)$) \cite{Shapley.1953}.
However, the SV  does not give any information about the interactions between two or more features.
A suitable extension of the SV for interactions of features $S \subseteq \fset$ remains an open question, as different axiomatic extensions have been proposed \cite{Grabisch_Roubens_1999,Sundararajan_Dhamdhere_Agarwal_2020,Tsai_Yeh_Ravikumar_2022}.
In this work, we thus consider a broad class of interaction indices, known as CIIs \cite{Grabisch_Roubens_1999,Tsai_Yeh_Ravikumar_2022}, which subsumes popular choices.

\begin{definition}[Cardinal Interaction Index (CII) \cite{Grabisch_Roubens_1999}]\label{def_SI}
A CII is an interaction index of the form
\begin{align*}
I^m(S) := \sum_{T \subseteq \fset\setminus S} m_{s}(t) \delta^\nu_S(T) \text{ with weights } m_s(t) \text{ for } s=1,\dots,s_0 \text{ and }t=0,\dots,\fnum-s.
\end{align*}
\end{definition}

\begin{remark}
    It was shown that every interaction index satisfying the generalized linearity, symmetry and dummy axioms can be represented as a CII \cite{Grabisch_Roubens_1999}.
    If $\sum_{t=0}^{\fnum-s}\binom{\fnum-s}{t}m_s(t)=1$, then the CII is also referred to as a cardinal-probabilistic interaction index (CPII) \cite{DBLP:journals/geb/FujimotoKM06}.
\end{remark}
In this paper, we present a unified approximation technique for arbitrary CIIs.
\subsection{Shapley Interaction Index (SII) and other CIIs}
In the following, we introduce prominent examples of CIIs.
For further details on the axioms and exact definitions, we refer to the appendix.
The SII \cite{Grabisch_Roubens_1999} is a direct extension of the SV, that relies on an additional \emph{recursive} axiom to obtain a unique CII.

\begin{definition}[Shapley Interaction Index (SII)\cite{Grabisch_Roubens_1999}]
The SII is a CII defined as
\begin{equation*}
    I^{\text{SII}}(S) := \sum_{T \subseteq \fset \setminus S} m^{\text{SII}}_{s}(t) \delta^\nu_S(T) \text{ and } m^{\text{SII}}_{s}(t) := \frac{(\fnum-t-s)!t!}{(\fnum-s+1)!}.
\end{equation*}  
\end{definition}

It has been shown that the SII is a CPII \cite{DBLP:journals/geb/FujimotoKM06}.
In contrast to the SV, the SII does not fulfill the efficiency axiom, which is a desirable property in the context of ML.
Therefore an extension of SII, as well as other interaction indices have been proposed.

\paragraph{n-Shapley Values (n-SII) and other interaction indices.}
The efficiency axiom for interaction indices of \emph{maximum interaction order} $1\leq s_0\leq \fnum$ requires that the sum of $I^m(S)$ up to order $s_0$ equals $\nu_0(\fset)$.
\begin{definition}[Efficiency \cite{Sundararajan_Dhamdhere_Agarwal_2020,Tsai_Yeh_Ravikumar_2022}]
A CII is \emph{efficient of order $s_0$}, if $\sum_{S \in \mathcal S_{s_0}} I^m(S) = \nu_0(\fset)$, where $\mathcal S_{s_0}$ is the set of interactions up to order $s_0$.
\end{definition}

In \cite{pmlr-v206-bordt23a}, an aggregation of SII was proposed to obtain n-SII $I^{\text{n-SII}}_{s_0}(S)$ of order $s_0$ that satisfies efficiency.
Other axiomatic approaches directly require efficiency together with the linearity, symmetry and dummy axioms, and omit the recursive axiom of SII.
However, in contrast to the SV, this axiom alone does not yield a unique interaction index \cite{Sundararajan_Dhamdhere_Agarwal_2020,Tsai_Yeh_Ravikumar_2022}.
The STI \cite{Sundararajan_Dhamdhere_Agarwal_2020} requires the efficiency axiom and an additional interaction distribution axiom.
On the other hand, the FSI \cite{Tsai_Yeh_Ravikumar_2022} requires the efficiency axiom and the faithfulness property, that relates the interaction index to a solution of a constrained weighted least square problem.
The choice of axioms of SII (n-SII), STI and FSI yield a unique interaction index that reduces to the SVs for $s_0=1$.
For FSI, it was shown that the top-order interactions define a CPII \cite[Proposition 21]{Tsai_Yeh_Ravikumar_2022}, which is also easily verified for STI.
All orders of interactions of FSI and STI can in general be represented as a CII, as they fulfill the linearity, symmetry and dummy axioms \cite[Proposition 5]{Grabisch_Roubens_1999}.
However, it was noted that for FSI a simple closed-form solution for lower-order interactions in terms of discrete derivatives remains unclear \cite[Lemma 70]{Tsai_Yeh_Ravikumar_2022}.

\subsection{Baseline Approximations of SII, STI and FSI.}
By definition, the number of evaluations of $\nu$ in $I$, which constitutes the limiting factor in ML, grows exponentially with $\fnum$ and thus, in practice, approximation methods are required.
Currently, there does not exist an approximation for the general CII definition, as each index (SII, STI, FSI) requires a specifically tailored technique.
Approximations of CII can be distinguished into permutation-based approximation (SII and STI) and kernel-based approximation (FSI).
Both extend on existing methods for the SV, namely permutation sampling \cite{Castro.2009} for SII and STI, and KernelSHAP \cite{Lundberg_Lee_2017} for FSI.
For a comprehensive overview of the original SV methods, we refer to the appendix.
We now briefly discuss existing approaches, which will be used as baselines in our experiments.

\paragraph{Permutation-based (PB) Approximation for STI and SII \cite{Sundararajan_Dhamdhere_Agarwal_2020,Tsai_Yeh_Ravikumar_2022}.}
The permutation-based (PB) approximation computes estimates of SII and STI based on a representation of uniformly sampled random permutations $\pi \sim \text{unif}(\mathfrak S_\fset)$, where $\mathfrak S_\fset$ is the set of all permutations, i.e. the set of all ordered sequences of the elements in $\fset$. Then,
\begin{align*}
    I^{\text{SII}}(S) = \mathbb{E}_{\pi \sim \text{unif}(\mathfrak S_\fset)}\left[\mathbf{1}(S \in \pi)\delta_S^\nu\left(u_S^-(\pi_k)\right)\right] \text{ and } I^{\text{STI}}(S) = \mathbb{E}_{\pi \sim \text{unif}(\mathfrak S_\fset)}\left[\delta_S^\nu\left(u_S^-(\pi)\right)\right].
\end{align*}
Here, $u_S^-(\pi)$ refers to the set of indices in $\pi$ preceding the first occurrence of any element of $S$ in $\pi$ and $S \in \pi$ is fulfilled, if all elements of $S$ appear as a consecutive sequence in $\pi$.
The estimators for SII and STI then compute an approximation by Monte Carlo integration by sampling $\pi \sim \text{unif}(\mathfrak S_\fset)$.

\paragraph{Kernel-based (KB) Approximation for FSI \cite{Tsai_Yeh_Ravikumar_2022,Covert_Lee_2021}.}
Kernel-based (KB) approximation estimates FSI based on the representation of $I$ as a solution to a constrained weighted least square problem
\begin{align}\label{eq::fsi-lsq}
     I^{{\text{FSI}}} =  \argmin_{\beta \in \mathbb{R}^{\fnum_{s_0}}}\mathbb{E}_{T \sim p(T)}[(\nu(T)-\sum_{\substack{S \in \mathcal S_{s_0}\\S \subseteq T}}{\beta(S)})^2] \text{ s.t. } \sum_{S \in \mathcal S_{s_0}} \beta(S)= \nu(\fset) \text{ and } \beta(\emptyset) = \nu(\emptyset),
\end{align}
where $p(T) \propto \mu(t)$, is a probability distribution over $\mathcal T_1$ and $d_{s_0} := \vert \mathcal S_{s_0} \vert$.
KB approximation for FSI estimates the expectation using Monte Carlo integration by sampling from $p(T)$ and solves the approximated least-squares problem explicitly, similar to KernelSHAP \cite{Lundberg_Lee_2017,Covert_Lee_2021,Tsai_Yeh_Ravikumar_2022}.
For more details and pseudo code, we refer to the appendix.

\subsection{Unbiased KernelSHAP (U-KSH) for the SV}
U-KSH constitutes a variant of KernelSHAP (KSH) \cite{Lundberg_Lee_2017}, which relies on KB approximation for the SV.
In contrast to KSH, U-KSH is theoretically well understood and it was shown that the estimator is unbiased and consistent \cite{Covert_Lee_2021}.
U-KSH finds an exact solution to (\ref{eq::fsi-lsq}) with $s_0=1$ as
\begin{equation*}
    I^{\text{SV}} = A^{-1}\left(b - \mathbf{1}\frac{\mathbf{1}^T A^{-1} b - \nu_0(\mathbf{1})}{\mathbf{1}^T A^{-1} \mathbf{1}} \right) \text{ where } A := \mathbb{E}[ZZ^T], b = \mathbb{E}[Z \nu_0(Z)] \text{ and } p(Z) \propto \mu(t).
\end{equation*}
U-KSH then approximates this solution using Monte Carlo integration.
\begin{definition}[Unbiased KernelSHAP (U-KSH) \cite{Covert_Lee_2021}]
Given $T_1,\dots,T_K \sim p(T) \propto \mu(t)$ with binary representation $Z_1,\dots,Z_K \in \{0,1\}^\fnum$, U-KSH is defined as
\begin{equation*}
    \hat I^{\text{SV}}_{U} := A^{-1}\left(\hat b - \mathbf{1}\frac{\mathbf{1}^T A^{-1} \hat b - \nu_0(\mathbf{1})}{\mathbf{1}^T A^{-1} \mathbf{1}} \right), \text{ where } \hat b := \frac{1}{K} \sum_{k=1}^K Z_k\nu_0(Z_k).
\end{equation*}
\end{definition}

The main idea of U-KSH is that $A$ can be computed explicitly independent of $\nu$ and only $b$ has to be estimated \cite{Covert_Lee_2021}.
By linking U-KSH to our method (Theorem \ref{thm::u_ksh}), we will show that $\hat I^{\text{SV}}_U$ can be greatly simplified to a weighted sum.

\section{SHAP-IQ: Unified Approximation of any-order CII}\label{sec::shap-iq}
So far, there exists no unified approximation technique for the general CII.
In particular, it is unknown if existing approximation techniques, such PB and KB, generalize to other indices \cite{Sundararajan_Dhamdhere_Agarwal_2020,Tsai_Yeh_Ravikumar_2022,DBLP:journals/dam/DingLCC08}.
Furthermore, PB approximation for SII and STI is very inefficient as each update of all estimates requires a significant number of model evaluations.
KB  approximation for FSI efficiently computes estimates, where one model evaluation can be used to update all interaction scores.
It is, however, impossible to compute only a selection of interaction estimates and theoretical results for the estimator are difficult to establish.
In the following, we introduce SHAP-IQ (Section \ref{sec::shap-iq-method}), a unified sampling-based approximation method that can be applied to \emph{any CII}.
SHAP-IQ is based on a Monte Carlo estimate of a novel representation of the CII and well-known statistical results are applicable.
In the special case of SV, we find a novel representation of the SV and show that SHAP-IQ is linked to U-KSH (Section~\ref{sec::SV_U_KSH}).
SHAP-IQ therefore greatly reduces the computational complexity of U-KSH.
We further show (Section~\ref{sec::s-efficiency}), that the sum of interaction estimates of SHAP-IQ remains constant and therefore maintains the efficiency property for STI and SII.
Interestingly, for FSI this property does not hold, which should be investigated in future research.
All proofs can be found in the appendix.

\subsection{SHAPley Interaction Quantification (SHAP-IQ)}\label{sec::shap-iq-method}

A key challenge in approximating the CII efficiently is that the sum changes for every interaction subset $S$.
We thus first establish a novel representation of the CII.
Based on this representation, we construct SHAP-IQ, an efficient estimator of the CII.
We show that SHAP-IQ is unbiased, consistent and provide a general approximation bound.

Our novel representation of the CII is defined as a sum over all subsets $T \subseteq \fset$. In previous works, it was shown that such a representation does exist for games with $\nu(\emptyset)=0$, if the linearity axiom is fulfilled \cite[Proposition 1]{Grabisch_Roubens_1999}. 
We now explicitly specify this representation and show that the weights, for a CII, only depend on the sizes of $T$ and the intersection $T \cap S$.\footnote{Our representation generalizes a result for SII \cite[Table 3]{Grabisch_Marichal_Roubens_2000} to functions with $\nu(\emptyset) \neq 0 $ and the class of CIIs.}

\begin{theorem}\label{thm::si}
It holds $I^{m}(S) =  \sum_{T \subseteq \fset}\nu_0(T) \gamma_s^m(t,\vert T \cap S \vert)$ with $\gamma_s^m(t,k) :=(-1)^{s-k}m_s(t-k)$.
\end{theorem}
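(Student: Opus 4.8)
The plan is to start from the definition $I^m(S) = \sum_{T \subseteq \fset \setminus S} m_s(t)\,\delta^\nu_S(T)$ and expand the discrete derivative $\delta^\nu_S(T) = \sum_{L \subseteq S}(-1)^{s-l}\nu(T \cup L)$, then re-index the double sum so that it becomes a single sum over all subsets $U \subseteq \fset$. Writing $U = T \cup L$ with $T \subseteq \fset \setminus S$ and $L \subseteq S$, the decomposition is unique: given $U$, we must have $L = U \cap S$ and $T = U \setminus S$. Hence
\begin{align*}
I^m(S) = \sum_{U \subseteq \fset} m_s(|U \setminus S|)\,(-1)^{s-|U \cap S|}\,\nu(U) = \sum_{U \subseteq \fset} m_s(u - |U \cap S|)\,(-1)^{s-|U\cap S|}\,\nu(U),
\end{align*}
which already has the shape of the claimed formula with $\nu$ in place of $\nu_0$ and $\gamma_s^m(t,k) = (-1)^{s-k} m_s(t-k)$ evaluated at $t = |U|$, $k = |U \cap S|$ — noting $|U\setminus S| = |U| - |U\cap S| = t - k$.

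The remaining work is to replace $\nu$ by $\nu_0(T) = \nu(T) - \nu(\emptyset)$, i.e. to show that the extra $-\nu(\emptyset)$ terms cancel. That is, I need $\sum_{T \subseteq \fset}\gamma_s^m(t,|T\cap S|) = 0$ for every nonempty $S$. This is where the key combinatorial identity enters: grouping subsets $T$ by $k := |T \cap S|$ and $t := |T|$, there are $\binom{s}{k}\binom{\fnum-s}{t-k}$ such subsets, so the sum equals $\sum_{k=0}^{s}\sum_{t=k}^{\fnum-s+k}\binom{s}{k}\binom{\fnum-s}{t-k}(-1)^{s-k}m_s(t-k)$. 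Substituting $j = t-k$ this factors as $\left(\sum_{k=0}^{s}\binom{s}{k}(-1)^{s-k}\right)\left(\sum_{j=0}^{\fnum-s}\binom{\fnum-s}{j}m_s(j)\right)$, and the first factor is $(1-1)^s = 0$ since $s \geq 1$. So the $\nu(\emptyset)$ contribution vanishes and we may freely write $\nu_0$ instead of $\nu$.

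I expect the main obstacle to be purely bookkeeping: being careful with the index ranges when splitting the sum over $T \subseteq \fset$ into the pair $(t, k)$, making sure the binomial coefficients $\binom{s}{k}\binom{\fnum-s}{t-k}$ correctly count the subsets and that the range of $j = t - k$ is exactly $0,\dots,\fnum-s$ so that the second factor is the full (unconstrained) sum $\sum_j \binom{\fnum-s}{j}m_s(j)$ — which need not equal $1$ for a general CII, but that is irrelevant since the first factor already kills the whole expression. Alternatively, one could cite \cite[Proposition 1]{Grabisch_Roubens_1999} for the existence of a representation $I^m(S) = \sum_T \nu_0(T)\gamma(T,S)$ under linearity and then only verify that the weights take the stated form by matching coefficients of $\nu(U)$ on both sides, which reduces to the same re-indexing step above; I would present the direct computation since it is self-contained and also yields the explicit weights immediately.
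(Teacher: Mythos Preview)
Your proof is correct and follows the same re-indexing argument as the paper: decompose each $U \subseteq \fset$ uniquely as $(U\setminus S)\cup(U\cap S)$ to convert the double sum into a single sum over all subsets with weight $\gamma_s^m(|U|,|U\cap S|)$. The only difference is in justifying $\sum_{T\subseteq \fset}\gamma_s^m(t,|T\cap S|)=0$: the paper appeals to the dummy axiom (equivalently, evaluates the already-derived identity at the constant game, for which all discrete derivatives vanish), whereas you give a direct combinatorial factorisation via $(1-1)^s=0$; your version is slightly more self-contained and makes explicit that the vanishing holds for \emph{any} weights $m_s$, independent of the dummy axiom.
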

Theorem \ref{thm::si} yields a novel representation of the CII, where the model evaluations $\nu_0(T)$ are independent of $S$.
This allows to utilize every model evaluation to compute all CII scores simultaneously by properly weighting with $\gamma^m_s$.
Notably, our representation relies on $\nu_0$ instead of $\nu$, which constitutes an important choice for approximation, on which we elaborate in the appendix.

To approximate $I$, we introduce a \emph{sampling order} $k_0 \geq s_0$, for which we split the sum in Theorem \ref{thm::si} to subsets with $T \in \mathcal T_{k_0}$ and $T \notin \mathcal T_{k_0}$ and rewrite
\begin{equation*}
    I^m(S) = c_{k_0}(S) + \mathbb{E}_{T \sim p_{k_0}(T)}\left[\nu_0(T) \frac{\gamma^m_s(t,\vert T \cap S\vert)}{p_{k_0}(T)}\right] \text{ with } c_{k_0}(S) := \sum_{T \notin \mathcal T_{k_0}} \nu_0(T) \gamma^m_s(t,\vert T \cap S\vert),
\end{equation*}
where $p_{k_0}$ is over $\mathcal T_{k_0}$.
SHAP-IQ then estimates the CII by Monte Carlo integration.

\begin{definition}[SHAP-IQ]\label{def::shapx}
The \emph{Shapley Interaction Quantification (SHAP-IQ) of order $k_0$} with $K$ samples is
\begin{align*}
    \hat I^m_{k_0}(S) :=  c_{k_0}(S) + \frac 1 K \cdot \sum_{k=1}^K \nu_0(T_k) \frac{\gamma^m_s(t_k,\vert T_k \cap S \vert)}{p_{k_0}(T_k)} \text{ with } T_1,\dots,T_K \sim p_{k_0}(T).
\end{align*}
\end{definition}

SHAP-IQ is outlined in the appendix and we establish the following important theoretical guarantees.

\begin{theorem}\label{thm::unbiased_consistent}
SHAP-IQ is unbiased, $\mathbb{E}\left[\hat I^m_{k_0}(S)\right] = I^m(S)$, and consistent, $\hat I^m_{k_0}(S) \overset{K \to \infty}{\to} I^m(S)$.
With $\sigma^2(S) := \mathbb{V}\left[\nu_0(T)\frac{\gamma^m_s(\vert T \vert,\vert T \cap S\vert)}{p_{k_0}(T)} \right]$ and $\epsilon>0$, it holds $\mathbb{P}(\vert \hat I^m_{k_0}(S) - I^m(S) \vert > \epsilon) \leq \frac 1 K \frac{\sigma^2(S)}{ \epsilon^2}$.
\end{theorem}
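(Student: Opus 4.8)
The plan is to read $\hat I^m_{k_0}(S)$ as the sum of the deterministic term $c_{k_0}(S)$ and a textbook importance-sampling Monte Carlo average, so that Theorem~\ref{thm::si} supplies the target value and classical facts about i.i.d.\ sample means supply unbiasedness, consistency and the tail bound. Throughout I would write $X_k := \nu_0(T_k)\,\gamma^m_s(t_k,\vert T_k\cap S\vert)/p_{k_0}(T_k)$ for the per-sample summand, so that $\hat I^m_{k_0}(S) = c_{k_0}(S) + \tfrac{1}{K}\sum_{k=1}^K X_k$ with $T_1,\dots,T_K$ i.i.d.\ $\sim p_{k_0}$. I would note first that $\mathcal T_{k_0}$ is finite and $p_{k_0}$ is chosen with full support on it, so each $X_k$ is a well-defined bounded random variable (since $\nu$ takes finitely many values and $p_{k_0}$ is bounded away from $0$), hence has finite mean and variance.

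For unbiasedness, the change-of-measure computation gives
\begin{equation*}
  \mathbb{E}[X_1] = \sum_{T \in \mathcal T_{k_0}} p_{k_0}(T)\,\nu_0(T)\,\frac{\gamma^m_s(t,\vert T\cap S\vert)}{p_{k_0}(T)} = \sum_{T \in \mathcal T_{k_0}} \nu_0(T)\,\gamma^m_s(t,\vert T\cap S\vert),
\end{equation*}
and adding $c_{k_0}(S) = \sum_{T\notin \mathcal T_{k_0}} \nu_0(T)\gamma^m_s(t,\vert T\cap S\vert)$ recombines this into $\sum_{T\subseteq \fset}\nu_0(T)\gamma^m_s(t,\vert T\cap S\vert)$, which equals $I^m(S)$ by Theorem~\ref{thm::si}. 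Since the $X_k$ are identically distributed, $\mathbb{E}[\hat I^m_{k_0}(S)] = c_{k_0}(S) + \mathbb{E}[X_1] = I^m(S)$.

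For consistency, the strong law of large numbers applied to the i.i.d.\ bounded variables $X_k$ gives $\tfrac1K\sum_{k=1}^K X_k \to \mathbb{E}[X_1] = I^m(S) - c_{k_0}(S)$ almost surely, hence $\hat I^m_{k_0}(S) \to I^m(S)$ (almost surely, and a fortiori in probability). For the tail bound, $\hat I^m_{k_0}(S) - I^m(S) = \tfrac1K\sum_{k=1}^K (X_k - \mathbb{E}[X_1])$ is a centred average of i.i.d.\ terms, so $\mathbb{V}[\hat I^m_{k_0}(S)] = \tfrac1K\mathbb{V}[X_1] = \tfrac1K\sigma^2(S)$, with $\sigma^2(S) = \mathbb{V}[\nu_0(T)\gamma^m_s(\vert T\vert,\vert T\cap S\vert)/p_{k_0}(T)]$ for $T\sim p_{k_0}$ as in the statement. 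Chebyshev's inequality then yields $\mathbb{P}(\vert \hat I^m_{k_0}(S) - I^m(S)\vert > \epsilon) \le \mathbb{V}[\hat I^m_{k_0}(S)]/\epsilon^2 = \sigma^2(S)/(K\epsilon^2)$.

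I do not expect any genuinely hard step here: the only place the structure of the problem enters is the final equality of the unbiasedness display, which is precisely Theorem~\ref{thm::si}, and the one point worth recording carefully is that $p_{k_0}$ must have full support on $\mathcal T_{k_0}$, so that the importance weights — and hence $\sigma^2(S)$ — are finite and the decomposition $I^m(S) = c_{k_0}(S) + \mathbb{E}_{T\sim p_{k_0}}[\,\cdot\,]$ stated just before Definition~\ref{def::shapx} is valid.
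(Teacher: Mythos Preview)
Your proposal is correct and follows essentially the same approach as the paper: linearity of expectation plus the representation of Theorem~\ref{thm::si} for unbiasedness, the law of large numbers for consistency, and Chebyshev's inequality applied to the variance $\sigma^2(S)/K$ for the tail bound. Your additional remarks on boundedness of the summands and full support of $p_{k_0}$ are sound and make explicit conditions the paper leaves implicit.
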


SHAP-IQ provides efficient estimates of all CII scores with important theoretical guarantees.
The sample variance $\hat \sigma^2$ can further be used for statistical analysis of the estimates.

\paragraph{Finding the sampling order $k_0$ and distribution $p_{k_0}$.}
In line with KSH and U-KSH \cite{Lundberg_Lee_2017,Covert_Lee_2021}, we find $k_0$ in an iterative procedure, outlined in the appendix.
We consider \emph{sampling weights} $q(t) \geq 0$ for $0\leq t \leq \fnum$ that grow symmetrically towards the center and consider a distribution $p_{k_0}(T) \propto q(t)$.
Given a budget $M$ and initial $k_0 = 0$, we consider $I^m(S) = \mathbb{E}_{T \sim p_{k_0}(T)}\left[\nu_0(T) \frac{\gamma^m_s(t,\vert T \cap S\vert)}{p_{k_0}(T)}\right]$ and iteratively increase $k_0$, if for a subset $T$ of size $k_0$ and $\fnum - k_0$, the condition $M \cdot p_{k_0}(T) \geq 1$ is fulfilled.
The budget is then decreased by the number of subsets of that size, i.e. $2 \binom{\fnum}{k_0}$.
This essentially verifies iteratively, if the expected number of subsets exceeds the total number of subsets.
For more details and possible choices of \emph{sampling weights} $q$, we refer to the appendix.

\paragraph{Computational Complexity.}
In contrast to PB approximations, SHAP-IQ allows to iteratively update \emph{all} interaction estimates with \emph{one single} model evaluation for any-order interactions.
The weights $\gamma^m_s(t,k)$ used for the updates can be efficiently precomputed.
The updating process can be implemented efficiently using Welford's algorithm \cite{Welford_1962}, where estimates have to be maintained for all interactions sets, i.e. $d_{s_0}$ in total.
In contrast to KB approximation, which requires to solve a weighted least square optimization problem with $d_{s_0}$ variables, the computational effort per interaction increases linearly for SHAP-IQ.
Furthermore, SHAP-IQ even allows to update selected interaction estimates, whereas, for instance, KB approximation for FSI requires to estimate all interactions.
For more details on the implementation and computational complexity of the baseline methods, we refer to the appendix.

\subsection{SHAP-IQ for the Shapley Value}\label{sec::SV_U_KSH}
In this section, we show that SHAP-IQ, in the special case of single feature subsets $s_0=1$, yields novel insights into the SV.
Furthermore, SHAP-IQ corresponds to U-KSH and greatly simplifies its calculation.
Utilizing Theorem~\ref{thm::si}, we find a novel representation of the SV for every feature $i \in \fset$. 
\begin{theorem}\label{thm::SV_representation}
With $c_1(i) = \frac{\nu_0(\fset)}{\fnum}$ the SV is $I^{\text{SV}}(i) = c_1(i) +  \sum_{T \in \mathcal T_1}\nu_0(T)\mu(t)\left[ \mathbf{1}(i \in T) - \frac{t}{\fnum}\right]$.
\end{theorem}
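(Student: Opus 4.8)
The plan is to obtain Theorem~\ref{thm::SV_representation} as a direct specialization of Theorem~\ref{thm::si} to the case $s_0 = 1$, $S = \{i\}$, with the Shapley value weights, and then to split off exactly the two ``trivial'' subsets $T = \emptyset$ and $T = \fset$ to match the stated form with $\mathcal T_1 = \{T \subseteq \fset : 1 \le t \le \fnum - 1\}$. First I would recall that the SV is the CII with weights $m_1(t) = m_1^{\text{SV}}(t) := \frac{(\fnum - t - 1)! \, t!}{\fnum!}$ for $t = 0, \dots, \fnum - 1$; one checks this is the standard Shapley weighting appearing in Definition~2. Applying Theorem~\ref{thm::si} with $s = 1$ gives
\begin{equation*}
    I^{\text{SV}}(i) = \sum_{T \subseteq \fset} \nu_0(T)\, \gamma_1^m(t, |T \cap \{i\}|), \qquad \gamma_1^m(t, k) = (-1)^{1-k} m_1(t - k).
\end{equation*}
Here $|T \cap \{i\}| \in \{0, 1\}$, so $\gamma_1^m(t, 0) = -m_1(t)$ when $i \notin T$ and $\gamma_1^m(t, 1) = m_1(t - 1)$ when $i \in T$.

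Next I would express both branches through the Shapley kernel $\mu(t) = \frac{1}{\fnum - 1}\binom{\fnum - 2}{t - 1}^{-1}$. The key algebraic identities are $m_1(t - 1) = \frac{(\fnum - t)!\,(t-1)!}{\fnum!} = \mu(t)\cdot \frac{t}{\fnum}\cdot\frac{1}{\text{(check)}}$ — more precisely I would verify that for $1 \le t \le \fnum - 1$ one has $m_1(t-1) = \mu(t)\,\frac{\fnum - t}{\fnum}$ and $m_1(t) = \mu(t)\,\frac{t}{\fnum}$, which follows from writing out the factorials and the binomial coefficient and cancelling. Wait — I should double-check the normalization: since $\mu(t) = \frac{(\fnum-t-1)!(t-1)!}{(\fnum-1)!}$, we get $m_1(t) = \frac{(\fnum-t-1)!\,t!}{\fnum!} = \mu(t)\cdot\frac{t}{\fnum}$ and $m_1(t-1) = \frac{(\fnum-t)!\,(t-1)!}{\fnum!} = \mu(t)\cdot\frac{\fnum-t}{\fnum}$. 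Substituting, the summand for $T \in \mathcal T_1$ becomes $\nu_0(T)\mu(t)\big[\mathbf{1}(i \in T)\tfrac{\fnum-t}{\fnum} - \mathbf{1}(i \notin T)\tfrac{t}{\fnum}\big]$, and a one-line case check shows this equals $\nu_0(T)\mu(t)\big[\mathbf{1}(i \in T) - \tfrac{t}{\fnum}\big]$ (when $i \in T$: $\tfrac{\fnum-t}{\fnum} = 1 - \tfrac{t}{\fnum}$; when $i \notin T$: $-\tfrac{t}{\fnum} = 0 - \tfrac{t}{\fnum}$).

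Finally I would handle the boundary terms $T = \emptyset$ and $T = \fset$ separately, since they lie outside $\mathcal T_1$. For $T = \emptyset$ we have $\nu_0(\emptyset) = 0$, so this term vanishes. For $T = \fset$ we have $i \in \fset$, so its contribution is $\nu_0(\fset)\,\gamma_1^m(\fnum, 1) = \nu_0(\fset)\, m_1(\fnum - 1) = \nu_0(\fset)\cdot\frac{0!\,(\fnum-1)!}{\fnum!} = \frac{\nu_0(\fset)}{\fnum}$, which is exactly $c_1(i)$. Collecting the three pieces yields the claimed identity. I do not anticipate a genuine obstacle here — the content is entirely in Theorem~\ref{thm::si}, and the remaining work is the factorial bookkeeping to rewrite $m_1$ in terms of $\mu$ and the sign/indicator case analysis; the only place to be careful is getting the boundary conventions right and confirming that the $\fnum - 1$ in the definition of $\mu(t)$ is consistent with the claimed simplification.
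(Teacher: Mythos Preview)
Your proposal is correct and follows essentially the same route as the paper: both start from Theorem~\ref{thm::si} with $s=1$ and the Shapley weights, peel off the boundary terms $T=\emptyset$ and $T=\fset$ to produce $c_1(i)$, and then rewrite the $\gamma_1^m$ weights on $\mathcal T_1$ in terms of the kernel $\mu(t)$ to obtain the bracket $[\mathbf{1}(i\in T)-t/\fnum]$. The only cosmetic difference is that the paper works with $\nu$ and obtains $c_1(i)$ from the pair $(\nu(\emptyset),\nu(\fset))$ via $m(0)=m(\fnum-1)=1/\fnum$, whereas you use $\nu_0$ so that $T=\emptyset$ vanishes outright; and the paper reaches $\mu(t)$ by computing $m(t-1)+m(t)$ directly rather than via your separate identities $m_1(t)=\mu(t)\,t/\fnum$ and $m_1(t-1)=\mu(t)(\fnum-t)/\fnum$, but these are the same calculation.
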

SHAP-IQ admits a similar form (see appendix) and corresponds to U-KSH $\hat I^{\text{SV}}_{U}$.

\begin{theorem}[SHAP-IQ simplifies U-KSH]\label{thm::u_ksh}
For $p(T) \propto \mu(t)$ it holds that $\hat I^{\text{SV}}_U=\hat I^m_1$.
\end{theorem}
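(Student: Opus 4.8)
The plan is to show that the two estimators $\hat I^{\text{SV}}_U$ and $\hat I^m_1$ (the SHAP-IQ estimator specialized to $s_0 = 1$, $m = m^{\text{SII}}$) are \emph{pathwise} equal, i.e.\ equal for every realization of the sample $T_1,\dots,T_K \sim p(T) \propto \mu(t)$. Since both estimators are built from the same random subsets, it suffices to prove that the defining formulas coincide as functions of the data $(T_1,\dots,T_K)$. I would not argue via unbiasedness and uniqueness of the limit, since that would only give equality in distribution or in the limit; the claim is stronger.

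First I would write $\hat I^m_1$ in explicit coordinates. By Definition~\ref{def::shapx} together with Theorem~\ref{thm::SV_representation}, taking sampling order $k_0 = 1$ gives $c_1(i) = \nu_0(\fset)/\fnum$ and weights $\gamma_1^m(t,k) = (-1)^{1-k} m^{\text{SII}}_1(t-k)$, which for $s=1$ reduce to $\mu(t)[\mathbf 1(i\in T) - t/\fnum]$ after simplification (this is exactly the bracket appearing in Theorem~\ref{thm::SV_representation}, and I would reuse that computation rather than redo it). So $\hat I^m_1(i) = \tfrac{\nu_0(\fset)}{\fnum} + \tfrac1K\sum_{k=1}^K \nu_0(T_k)\,\mu(t_k)\,\big[\mathbf 1(i\in T_k) - t_k/\fnum\big]$, up to the $1/p(T_k)$ normalization constant which is data-independent since $p(T_k)\propto\mu(t_k)$ restricted to $\mathcal T_1$; I would carry the proportionality constant carefully so that the $\mu(t_k)/p_{k_0}(t_k)$ ratio collapses to a single explicit scalar (the normalizing sum $\sum_{t\in\{1,\dots,\fnum-1\}}\binom{\fnum}{t}\mu(t)$), making $\hat I^m_1$ a clean weighted sum of the $\nu_0(T_k)$.

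Second I would expand $\hat I^{\text{SV}}_U$. The matrix $A = \mathbb E[ZZ^T]$ is known in closed form: by symmetry of $p$, its diagonal entries are all equal to some $a_1$ and its off-diagonal entries all equal to some $a_2$, so $A = (a_1 - a_2)\mathrm{Id} + a_2\,\mathbf 1\mathbf 1^T$. I would invert this rank-one-plus-scalar matrix via Sherman–Morrison, obtaining $A^{-1} = \alpha\,\mathrm{Id} + \beta\,\mathbf 1\mathbf 1^T$ for explicit scalars $\alpha,\beta$ depending only on $\fnum$ and the $\mu$ weights. Plugging this into $\hat I^{\text{SV}}_U = A^{-1}\hat b - A^{-1}\mathbf 1\,\frac{\mathbf 1^T A^{-1}\hat b - \nu_0(\mathbf 1)}{\mathbf 1^T A^{-1}\mathbf 1}$ and using $\hat b = \tfrac1K\sum_k Z_k \nu_0(Z_k)$, every occurrence of $A^{-1}$ becomes the two-parameter form, and $\mathbf 1^T Z_k = t_k$. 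Collecting terms coordinate-wise in feature $i$, the answer must come out as $(\text{const})\cdot\tfrac1K\sum_k \nu_0(T_k)\big(\mathbf 1(i\in T_k) + (\text{affine in } t_k)\big) + (\text{correction term involving }\nu_0(\mathbf 1))$, and I would then match this term-by-term against the expression from the previous paragraph.

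The main obstacle — and the only real work — is the bookkeeping in the Sherman–Morrison expansion: one has to evaluate the scalars $a_1, a_2$ as $\mu$-weighted combinatorial sums over subset sizes, invert, and then verify that the messy $\alpha,\beta$ combination reproduces precisely the coefficients $\mu(t_k)$, $\mu(t_k)\,t_k/\fnum$, and $1/\fnum$ that SHAP-IQ produces. I expect the key simplifying identities to be the standard Shapley-kernel sum evaluations (e.g.\ $\sum_t \binom{\fnum}{t}\mu(t)$ and $\sum_t t\binom{\fnum}{t}\mu(t)$ in closed form), after which the match is forced. A useful sanity check along the way is the $s_0=1$ efficiency/constant-sum identity (Theorem~\ref{thm::s-efficiency}, specialized), which both sides satisfy and which pins down the $\nu_0(\mathbf 1)$ correction term.
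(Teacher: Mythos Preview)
Your proposal is correct and follows essentially the same route as the paper: write $\hat I^m_1$ explicitly via the representation in Theorem~\ref{thm::SV_representation} (the paper packages this as a separate proposition, yielding the normalizing constant $2h_{\fnum-1}$), invert $A$ using its constant-diagonal/constant-off-diagonal structure (the paper proves a small dedicated lemma rather than invoking Sherman--Morrison, but it is the same inversion), expand $\hat I^{\text{SV}}_U$ coordinate-wise, and then match coefficients, with the decisive identity being $(\mu_1-\mu_2)\cdot 2h_{\fnum-1}=1$. The only thing you left slightly vague is this last scalar match, but you correctly flagged it as the crux and the Shapley-kernel sums you mention are exactly what the paper uses to verify it.
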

Theorem~\ref{thm::u_ksh} implies that the U-KSH estimator can be computed using the SHAP-IQ estimator, which greatly simplifies the calculation to a weighted sum.
The main idea of the proof relies on the observation that not only $A$ can be explicitly computed, but also $A^{-1}$, cf. the appendix.

\subsection{The Sum of Interaction Scores and SHAP-IQ Efficiency}\label{sec::s-efficiency}
In this section, we are interested in the sum of CII scores, which we link to a property of SHAP-IQ estimates to maintain the efficiency axiom.
By Theorem~\ref{thm::si}, we have $\sum_{S \in \mathcal S_{s_0}} I^m(S) = \sum_{T \subseteq \fset} \nu_0(T) \sum_{S \in \mathcal S_{s_0}} \gamma^m_s(t,\vert T \cap S \vert)$.
For the SV, by Theorem~\ref{thm::SV_representation}, this sum is zero for every $T \in \mathcal T_1$.
For higher order CIIs, we introduce the following definition.
\begin{definition}\label{def::s_efficiency}
A CII is \emph{s-efficient}, if $\sum_{S \subseteq \fset, \vert S \vert = s_0} \gamma^m_s(t,\vert T \cap S \vert) = 0$ for every $T \in \mathcal T_{s_0}$.
\end{definition}
\begin{theorem}\label{thm::s-efficiency}
SII and STI are s-efficient.
In particular, SHAP-IQ estimates maintain efficiency for n-SII and STI.
\end{theorem}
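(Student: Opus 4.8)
I would prove the two assertions in turn: first the combinatorial fact that SII and STI are s-efficient, and then, as a corollary, that for $m\in\{\text{n-SII},\text{STI}\}$ the random quantity $\sum_{S\in\mathcal S_{s_0}}\hat I^m_{k_0}(S)$ equals the constant $\nu_0(\fset)$. For the first part I would unwind Definition~\ref{def::s_efficiency}. Fix $T$ with $\vert T\vert=t$ and $T\in\mathcal T_{s_0}$, i.e.\ $s_0\le t\le \fnum-s_0$. Since $\gamma^m_{s_0}(t,\vert T\cap S\vert)=(-1)^{s_0-\vert T\cap S\vert}m_{s_0}(t-\vert T\cap S\vert)$ depends on $S$ only through $k:=\vert T\cap S\vert$, and there are $\binom{t}{k}\binom{\fnum-t}{s_0-k}$ subsets $S$ of size $s_0$ with $\vert T\cap S\vert=k$, s-efficiency reduces to the identity $\sum_{k=0}^{s_0}(-1)^{s_0-k}\binom{t}{k}\binom{\fnum-t}{s_0-k}\,m_{s_0}(t-k)=0$ for every $s_0\le t\le \fnum-s_0$.

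\textbf{Evaluating the sum.} For SII I would insert $m^{\text{SII}}_{s_0}(t-k)=\frac{(\fnum-t+k-s_0)!\,(t-k)!}{(\fnum-s_0+1)!}$ and cancel factorials using $\binom{t}{k}(t-k)!=\tfrac{t!}{k!}$ and $\binom{\fnum-t}{s_0-k}(\fnum-t+k-s_0)!=\tfrac{(\fnum-t)!}{(s_0-k)!}$; the sum collapses to $\frac{t!\,(\fnum-t)!}{(\fnum-s_0+1)!\,s_0!}\sum_{k=0}^{s_0}(-1)^{s_0-k}\binom{s_0}{k}=\frac{t!\,(\fnum-t)!}{(\fnum-s_0+1)!\,s_0!}(1-1)^{s_0}=0$. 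For STI, whose top-order weight is $m^{\text{STI}}_{s_0}(t)\propto\binom{\fnum-1}{t}^{-1}$, the same cancellation leaves an extra factor $\frac{(\fnum-1-t+k)!}{(\fnum-t-s_0+k)!}$, a product of $s_0-1$ consecutive integers, hence a polynomial $P(k)$ of degree $s_0-1$ in $k$; the sum becomes a positive constant times $\sum_{k=0}^{s_0}(-1)^{s_0-k}\binom{s_0}{k}P(k)$, which is the $s_0$-th forward difference of $P$ and therefore vanishes since $\deg P<s_0$. In both cases the constraint $T\in\mathcal T_{s_0}$ is precisely what keeps every factorial argument nonnegative, so the manipulation is legitimate. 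Since n-SII coincides with SII at the top order $\vert S\vert=s_0$, n-SII is s-efficient as well.

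\textbf{From s-efficiency to SHAP-IQ efficiency.} By Theorem~\ref{thm::si}, $\sum_{S\in\mathcal S_{s_0}}\hat I^m_{k_0}(S)=\sum_{S\in\mathcal S_{s_0}}c_{k_0}(S)+\frac1K\sum_{r=1}^K\frac{\nu_0(T_r)}{p_{k_0}(T_r)}\,G(t_r)$, where $G(t):=\sum_{S\in\mathcal S_{s_0}}\gamma^m_s(t,\vert T\cap S\vert)$ depends only on $t$ (again by counting intersection sizes). The sampled sets satisfy $T_r\in\mathcal T_{k_0}\subseteq\mathcal T_{s_0}$ because $k_0\ge s_0$, so it suffices to show $G(t)=0$ on $\mathcal T_{s_0}$. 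For STI this follows from s-efficiency at order $s_0$ together with the fact that for $s<s_0$ the lower-order weight is $m^{\text{STI}}_s(t)=\mathbf{1}(t=0)$, so $\gamma^{\text{STI}}_s(t,k)=(-1)^{s-k}\mathbf{1}(t=k)=0$ whenever $t>k$, and $t\ge s_0>s\ge k$ on $\mathcal T_{s_0}$. For n-SII, which is a CII (it is linear in $\nu$, symmetric, and satisfies dummy, so the representation of Theorem~\ref{thm::si} applies) and is exactly efficient of order $s_0$ \cite{pmlr-v206-bordt23a}, Theorem~\ref{thm::si} gives $\sum_{T\subseteq\fset}\nu_0(T)G(t)=\nu_0(\fset)$ for every game; specializing $\nu_0$ to indicators of individual nonempty sets forces $G(t)=\mathbf{1}(t=\fnum)$, and since $\fset\notin\mathcal T_{s_0}$ this yields $G(t)=0$ on $\mathcal T_{s_0}$. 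Hence the Monte Carlo term vanishes identically, $\sum_{S\in\mathcal S_{s_0}}\hat I^m_{k_0}(S)=\sum_{S\in\mathcal S_{s_0}}c_{k_0}(S)$ is deterministic, and taking expectations and using unbiasedness (Theorem~\ref{thm::unbiased_consistent}) and the exact efficiency of STI \cite{Sundararajan_Dhamdhere_Agarwal_2020} and n-SII \cite{pmlr-v206-bordt23a} gives $\sum_{S\in\mathcal S_{s_0}}c_{k_0}(S)=\sum_{S\in\mathcal S_{s_0}}I^m(S)=\nu_0(\fset)$.

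\textbf{Main obstacle.} The delicate step is the factorial bookkeeping in the second paragraph: one must verify for each weight that the residual alternating sum is an $s_0$-th finite difference of a polynomial of degree $<s_0$ (this in fact gives a clean sufficient criterion for s-efficiency of a general CII). A secondary point to handle carefully is that n-SII genuinely falls under Theorem~\ref{thm::si} and that its top-order weight is exactly $m^{\text{SII}}_{s_0}$; and one should confirm that $G(t)=0$ is only ever needed on the support $\mathcal T_{k_0}$ of $p_{k_0}$, which the argument covers since $k_0\ge s_0$.
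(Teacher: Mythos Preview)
Your argument is correct, and the first half (reducing s-efficiency to an alternating sum and showing it vanishes for SII via the binomial theorem and for STI via the $s_0$-th finite difference of a degree $s_0-1$ polynomial) is essentially identical to the paper's proof.

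Where you diverge is in deducing SHAP-IQ efficiency for STI and n-SII. For STI, the paper simply notes that the lower-order STI values $I^{\text{STI}}(S)=\delta^\nu_S(\emptyset)$ for $\vert S\vert<s_0$ are computed exactly (they sit entirely in the deterministic part), so only the top-order Monte Carlo term needs to be killed, and s-efficiency does that. You instead argue directly that $\gamma^{\text{STI}}_s(t,k)=0$ on $\mathcal T_{s_0}$ for $s<s_0$, which is cleaner and implementation-independent. For n-SII, the paper exploits the recursive Bernoulli-number aggregation: it first checks that SHAP-IQ at order $s_0=1$ (the Shapley value) is efficient, and then cites the proof of \cite[Proposition~12]{pmlr-v206-bordt23a} that the n-SII aggregation preserves efficiency of the order-$1$ estimates \emph{regardless} of the higher-order SII values plugged in. Your route is more abstract: you treat n-SII as a CII in its own right and use exact efficiency plus Theorem~\ref{thm::si} with varying games to force $G(t)=\mathbf{1}(t=\fnum)$. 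This is elegant and in fact gives a general criterion (any efficient CII has $G\equiv 0$ on $\mathcal T_1$). The one point you should make explicit is that the paper's ``SHAP-IQ for n-SII'' means \emph{aggregating SHAP-IQ estimates of SII}, and you are applying SHAP-IQ \emph{directly} to n-SII's CII weights; these coincide because the aggregation is linear and uses the same samples $T_1,\dots,T_K$, so $\gamma^{\text{n-SII}}_s(t,\vert T\cap S\vert)=\sum_{S'\supseteq S}\alpha_{S,S'}\gamma^{\text{SII}}_{s'}(t,\vert T\cap S'\vert)$ pointwise in $T$. You gesture at this in your final paragraph but it deserves one sentence in the body of the proof.
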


Further, if a CII is s-efficient, then the sum of SHAP-IQ estimates remains constant.
Although we did not provide a rigorous statement, it is easy to validate numerically that FSI is not s-efficient.
This finding suggests that there are conceptional differences between these indices, that should be further investigated in future work.
Using s-efficiency it is also possible to find an explicit formula for the sum of interaction scores for SII, which we give in the appendix.

\section{Experiments}
\label{sec:Experiments}

We conduct multiple experiments to illustrate the approximation quality of SHAP-IQ compared to current baseline approaches.\footnote{All code and implementations for conducting the experiments can be found at \url{https://github.com/FFmgll/shapiq}. Running the experiments required a computational cost of approximately $2\,000$ CPU hours. For more details we refer to the appendix.}
We showcase SHAP-IQ estimates on any-order SII (n-SII), on top-order STI and FSI.
For each interaction index, we use its specific approximation method as a baseline.
For SII and STI, we use the PB approximation and for FSI the KB approximation, further described in the appendix.
We then compute n-SII based on the estimated SII values.
We compare the baseline methods with SHAP-IQ using $p(T) \propto \mu(t)$.
For each iteration we evaluate the approximation quality with different budgets up to a maximum budget of $2^{14}$ model evaluations.
To account for variation, we randomly evaluate the approximation method on 50 randomly chosen instances, further described below.
To quantify the approximation quality, we compute multiple evaluation metrics for each interaction order: mean-squared error (MSE), MSE for the top-K interactions (MSE@K) and the ratio (precision) of estimated top-K interactions (Prec@K).
The top-K interactions are determined in regards to their absolute value.

\paragraph{Models.}
For a language model (LM), we use a fine-tuned version of the DistilBERT transformer architecture \cite{Sanh.2019} on movie review sentences from the original \emph{IMDB} dataset \cite{Maas.2011,Lhoest_Datasets_A_Community_2021} for sentiment analysis, i.e. $\nu$ has values in $[-1,1]$.
In the LM, for a given sentence, different feature coalitions are computed by masking absent features in a tokenized sentence.
The implementation is based on the \emph{transformers} API \cite{Wolf_Transformers_State-of-the-Art_Natural_2020}.
We randomly sample $50$ reviews of length $\fnum=14$ and explain each model prediction.
For an image classification model (ICM), we use ResNet18 \cite{resnet18} pre-trained on ImageNet \cite{ImageNet} as provided by \emph{torch} \cite{torch.2017}.
We randomly sample $50$ images and explain the prediction of the corresponding true class.
To obtain the prediction of different coalitions, we pre-compute super-pixels with SLIC \cite{SLIC,scikit-image} to obtain a function on $\fnum=14$ features and apply mean imputation on absent features.
For a high-dimensional synthetic model with $\fnum=30$, we use a \emph{sum of unanimity model} (SOUM) $\nu(T):= \sum_{n=1}^N a_n \mathbf{1}(Q_n \subseteq T)$, where $N=50$ interaction subsets $Q_1,\dots,Q_N \subseteq \fset$ are chosen uniformly from all subset sizes and $a_1,\dots,a_N \in \mathbb{R}$ are generated uniformly $a_n \sim \text{unif}([0,1])$.
Note that the SOUM could also be viewed as an extension of the induced subgraph game \cite{DBLP:journals/mor/DengP94} for a hypergraph with edges of different order.
We randomly generate $50$ instances of such SOUMs.

\paragraph{Ground-Truth (GT) Values.}
For the LM and the ICM we compute the ground-truth (GT) values explicitly using the representation from Theorem \ref{thm::si}.
For the high-dimensional SOUM it is impossible to compute the GT values naively.
However, due to the linearity of the CII and the simple structure of a SOUM, we can compute the exact GT values of for any CII efficiently, cf. the appendix.

\subsection{Approximation of any-order SII and n-SII scores using SHAP-IQ}
In this experiment, we apply SHAP-IQ on SII and compute estimates for the LM and the ICM up to order $s=4$.
We then compare the estimates with the baseline using the GT values for each order.
The results are shown in Figure~\ref{fig:exp-sii}.
We display the MSE for the LM (left) and the Prec@10 for the LM (middle) and ICM (right).
We further compute the n-SII estimates by aggregating the SII estimates with $s_0=4$ and visualize positive and negative interactions on single individuals as proposed in \cite{pmlr-v206-bordt23a}.
Thereby, interactions are distributed equally among each participating feature, which was justified in \cite[Theorem 6]{pmlr-v206-bordt23a}.
This representation amplifies the variance of our sampling-based estimator.
We thus also present SHAP-IQ without sampling, i.e. $c_{k_0}$.
The results are shown in Figure~\ref{fig:exp-n-sii} (left) and from left to right: GT values, SHAP-IQ, SHAP-IQ without sampling and baseline.
Lastly, we illustrate the n-SII scores estimates for $s_0=3$ of a movie review excerpt classified by the LM (right), where the interactions (``is'',``not''), (``not'',``bad''), and (``'ll'',``love'',``this'') yield a highly positive score.

The results show that SHAP-IQ outperforms the baseline methods across different models and metrics.
For the n-SII visualization, we conclude that the SHAP-IQ estimator without sampling is preferable, which yields more accurate results than SHAP-IQ and the baseline methods.
In general, SHAP-IQ without sampling performs surprisingly strong, and we encourage further work in this direction.

\begin{figure}[t!]
    \centering
    \begin{minipage}[c]{0.31\columnwidth}
        \includegraphics[width=\textwidth]{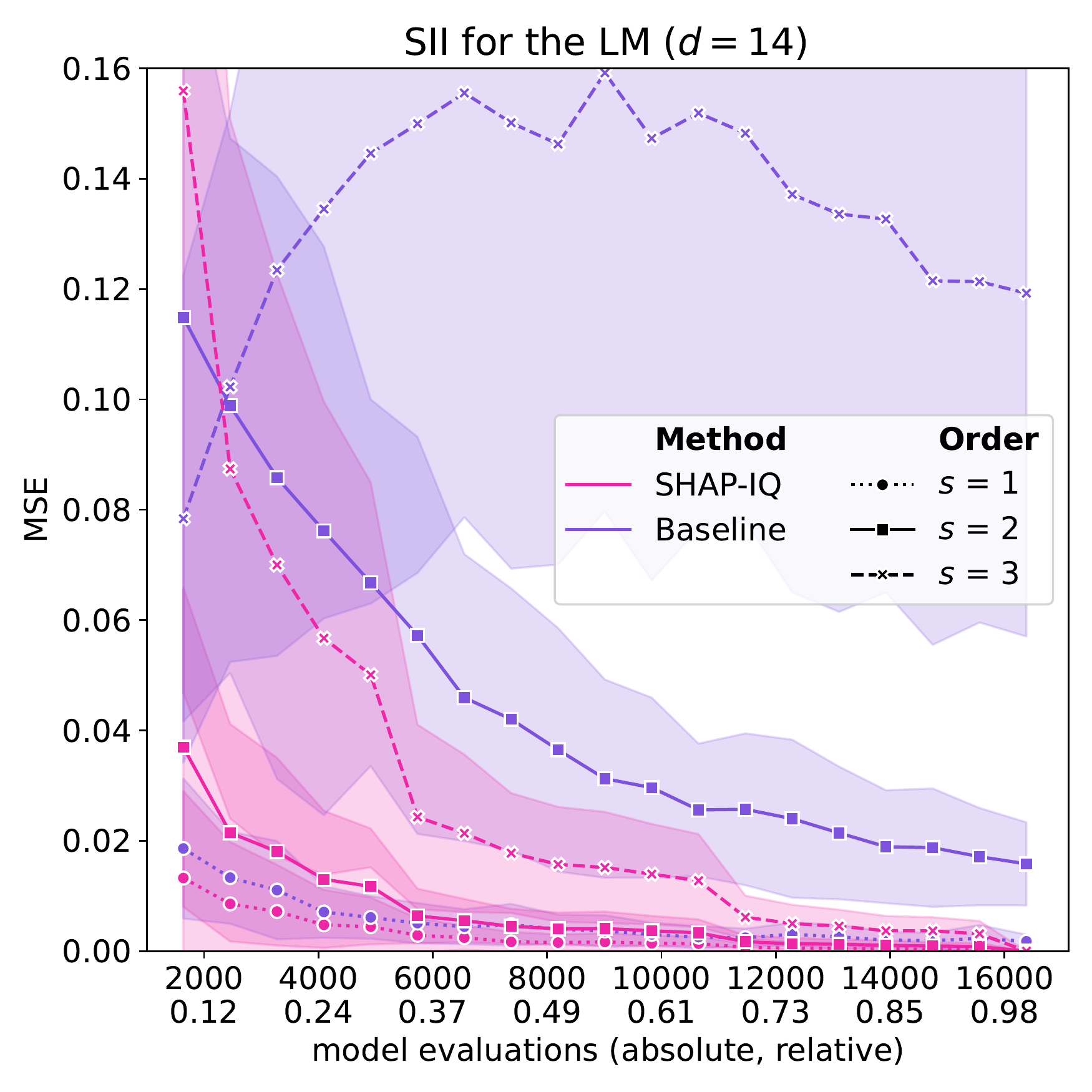}
    \end{minipage}
    \hfill
    \begin{minipage}[c]{0.31\columnwidth}
        \includegraphics[width=\textwidth]{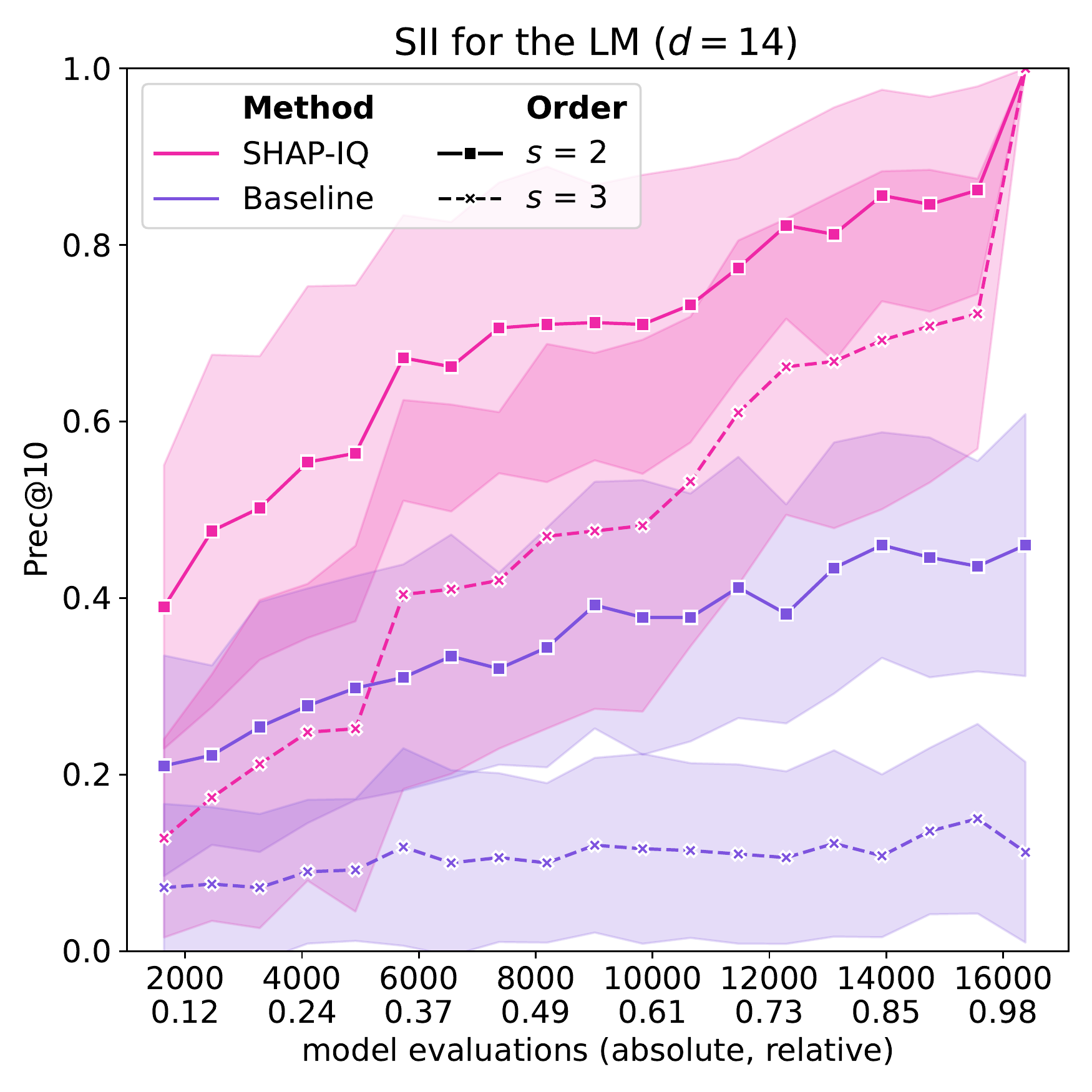}
    \end{minipage}
    \hfill
    \begin{minipage}[c]{0.31\columnwidth}
        \includegraphics[width=\textwidth]{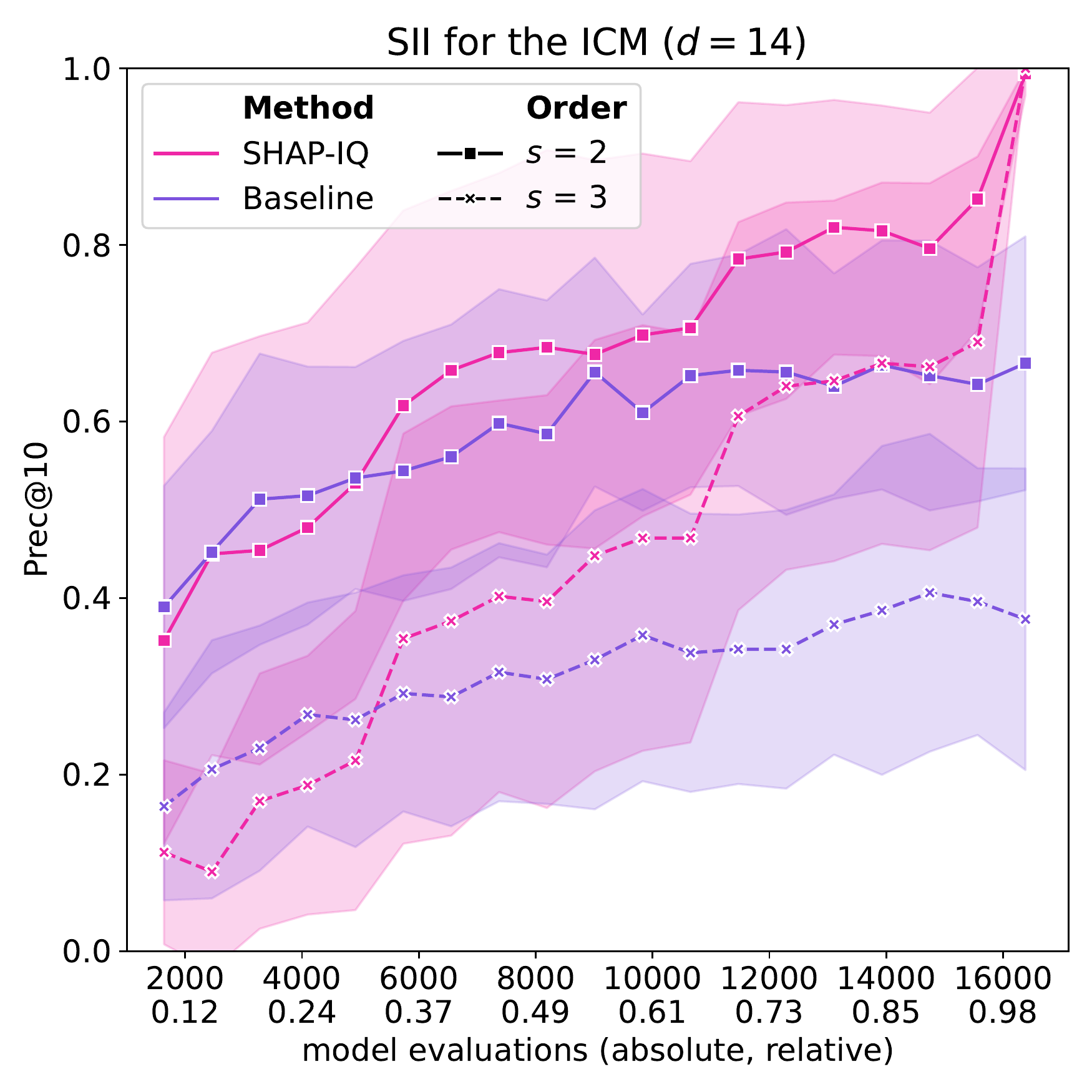}
    \end{minipage}
    \caption{Approximation quality of SHAP-IQ and the baseline for orders $s=1,2,3$ of SII measured by MSE for the LM (left) and Prec@10 for orders $s=2,3$ for the LM (middle) and ICM (right).}
    \label{fig:exp-sii}
\end{figure}

\begin{figure}[t]    
    \begin{minipage}[c]{0.485\columnwidth}
        \includegraphics[width=\textwidth]{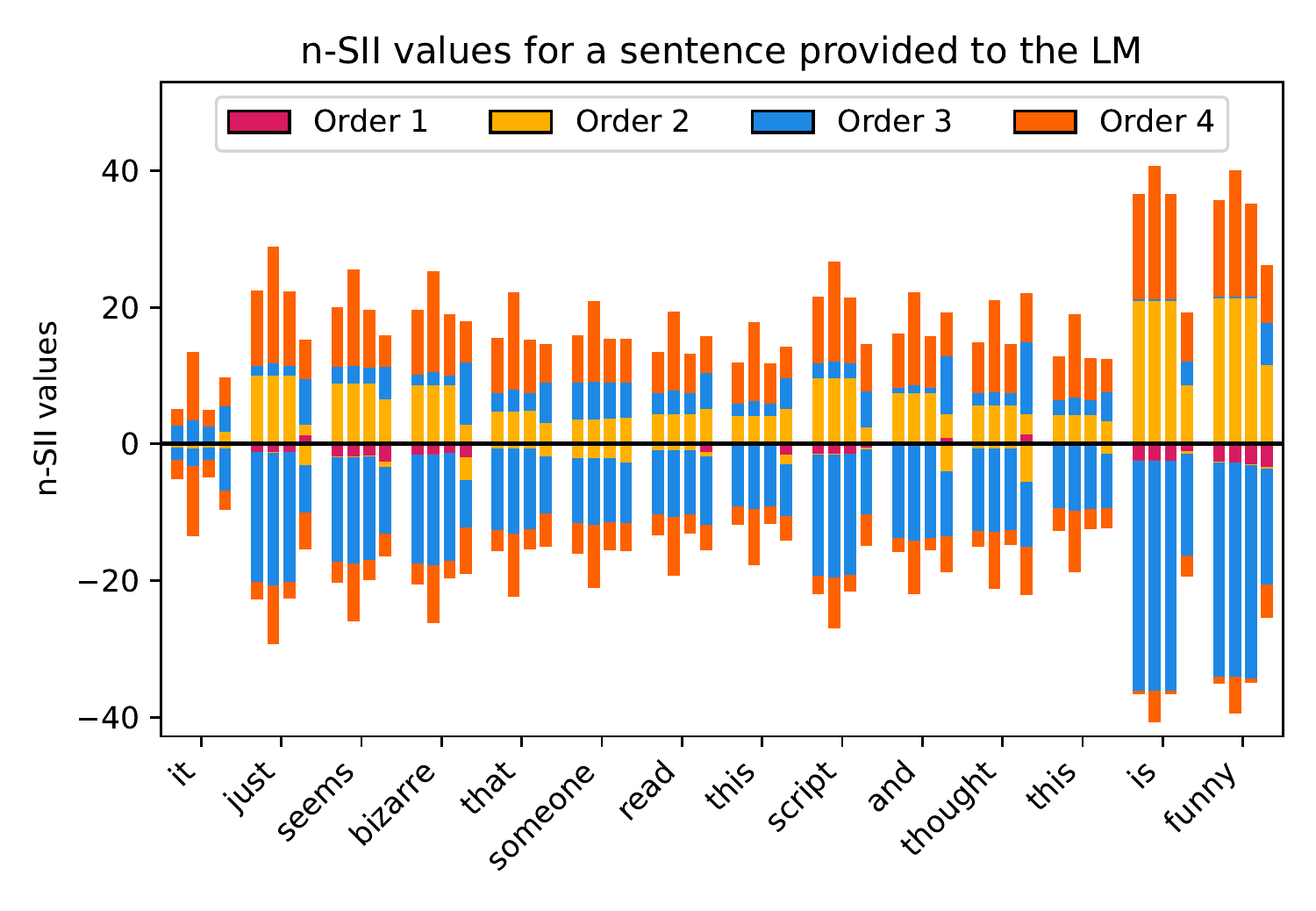}
    \end{minipage}
    \hfill
    \begin{minipage}[c]{0.485\columnwidth}
        \includegraphics[width=\textwidth]{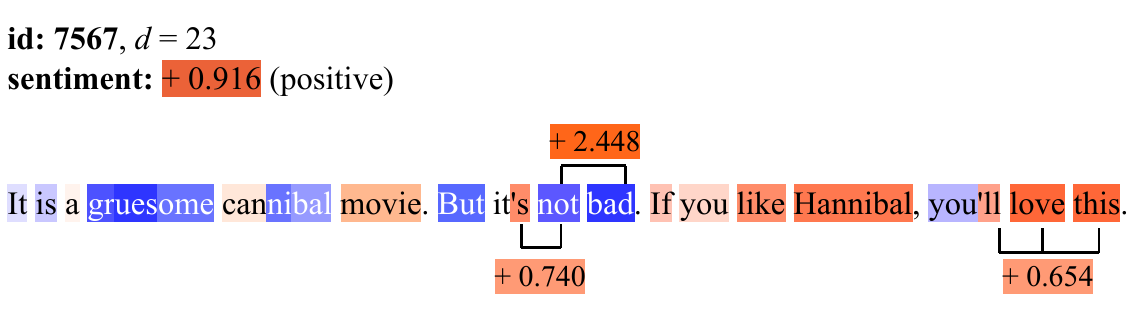}
    \end{minipage}
    \caption{Visualization of n-SII and $s_0=4$ \protect{\cite{pmlr-v206-bordt23a}} (left) with (from left to right): GT, SHAP-IQ, SHAP-IQ without sampling, and baseline. Estimated n-SII scores ($s_0=3$) for a movie review (right).}
    \label{fig:exp-n-sii}
\end{figure}

\subsection{Approximation of different CIIs using SHAP-IQ}
\label{sec_experiments_different_ciis}

In this experiment, we apply SHAP-IQ on different CIIs, namely SII, STI and FSI.
We compute top-order interactions for $s_0=3$ and compare the results with the baselines.
Our results are shown in Figure~\ref{fig:exp-cii-comparison} (left) and further experiments and results can be found in the appendix.
For the LM, SHAP-IQ clearly outperforms the baseline for SII and STI.
For FSI, SHAP-IQ is outperformed by the KB approximation of the baseline.
As SII and STI rely on PB approximation, our results indicate that KB approximation is more effective than PB approximation for this setting, which is in line with the strong performance of KernelSHAP \cite{Lundberg_Lee_2017} for the SV.
However, SHAP-IQ, in contrast to KB approximation, provides a solid mathematical foundation with theoretical guarantees and we now consider a high-dimensional synthetic game, where SHAP-IQ outperforms all baselines.

\begin{figure}[t]
    \centering
    \begin{minipage}[t]{0.31\columnwidth}
        \includegraphics[width=\textwidth]{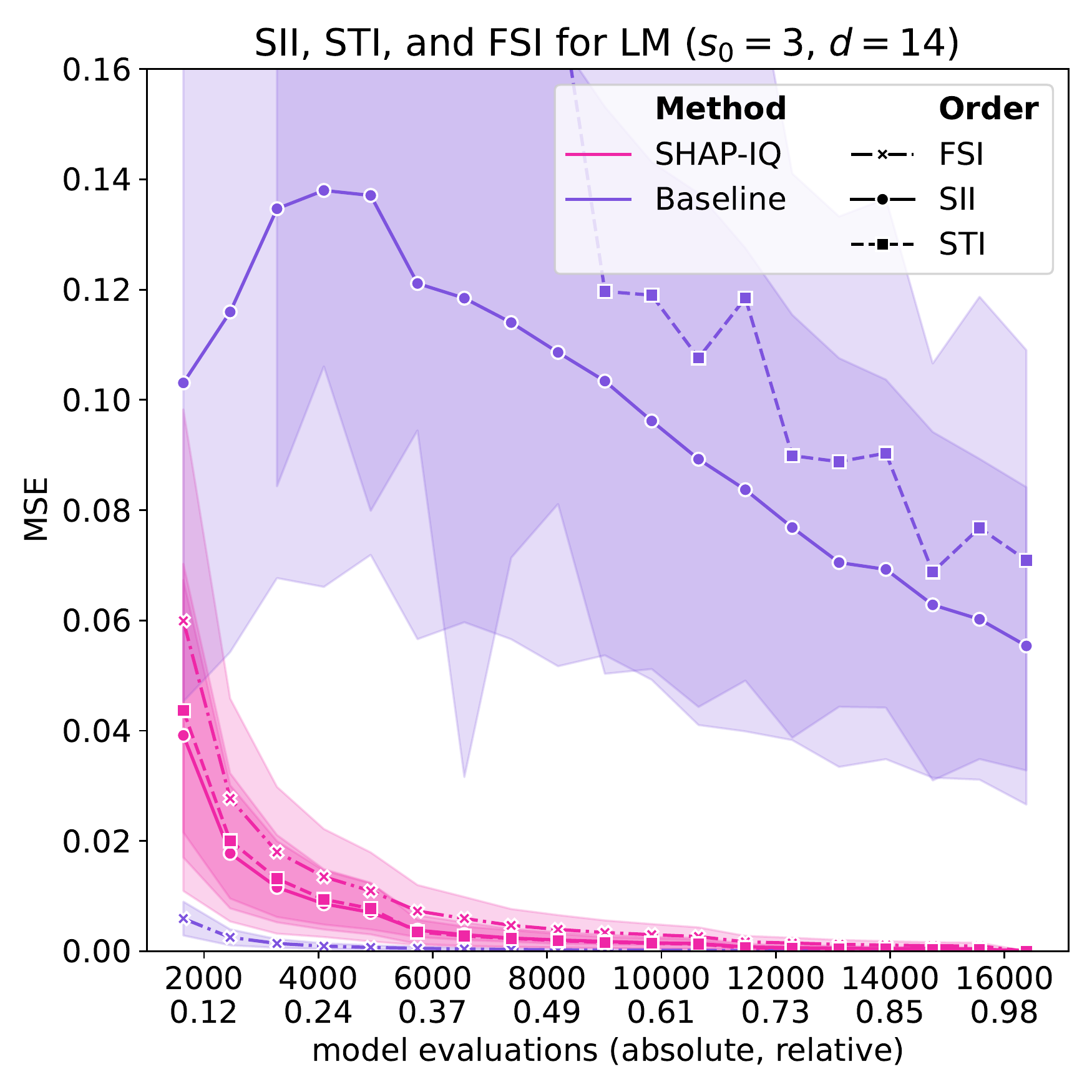}
    \end{minipage}
    \hfill
    \begin{minipage}[t]{0.31\columnwidth}
        \includegraphics[width=\textwidth]{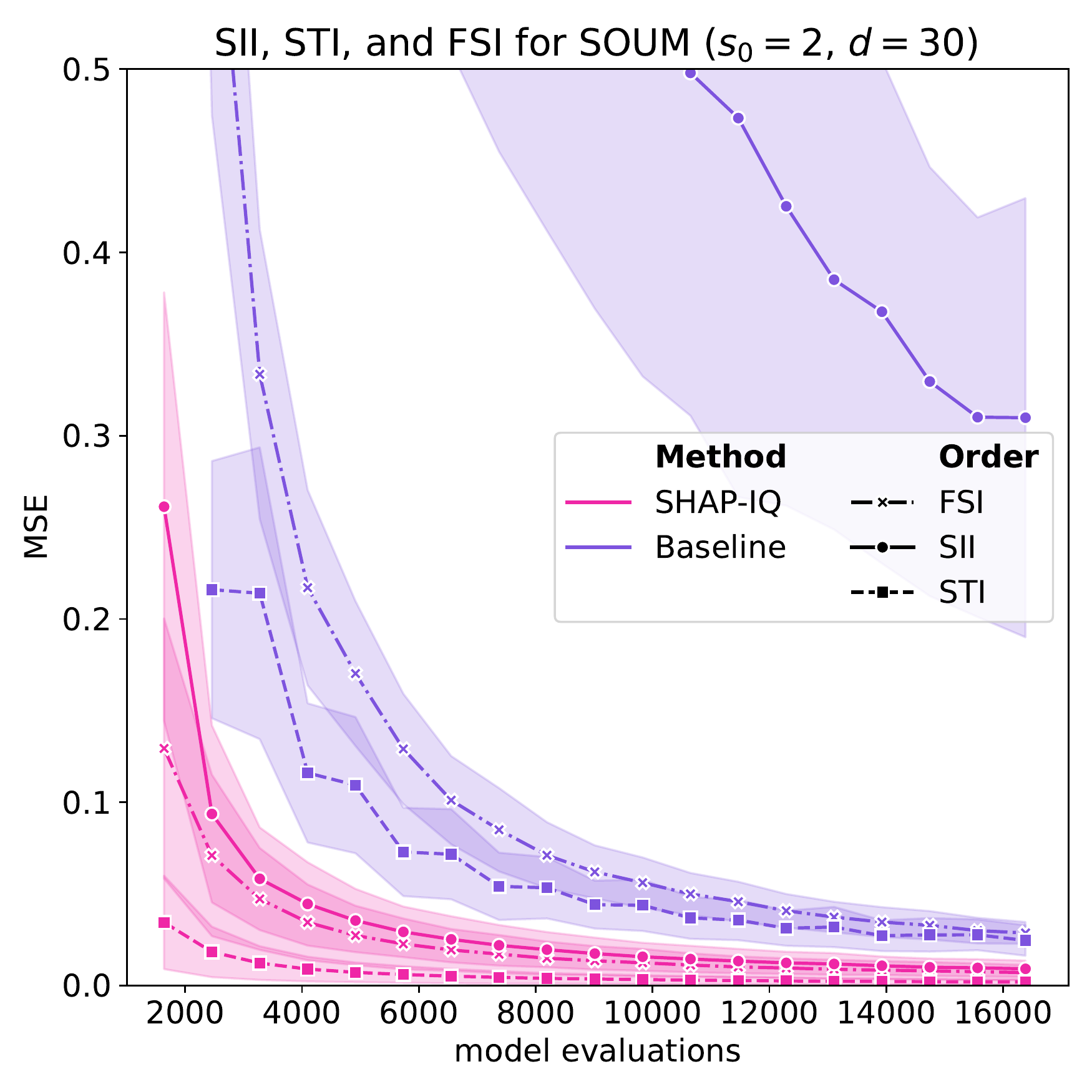}
    \end{minipage}
    \hfill
    \begin{minipage}[t]{0.31\columnwidth}
        \includegraphics[width=\textwidth]{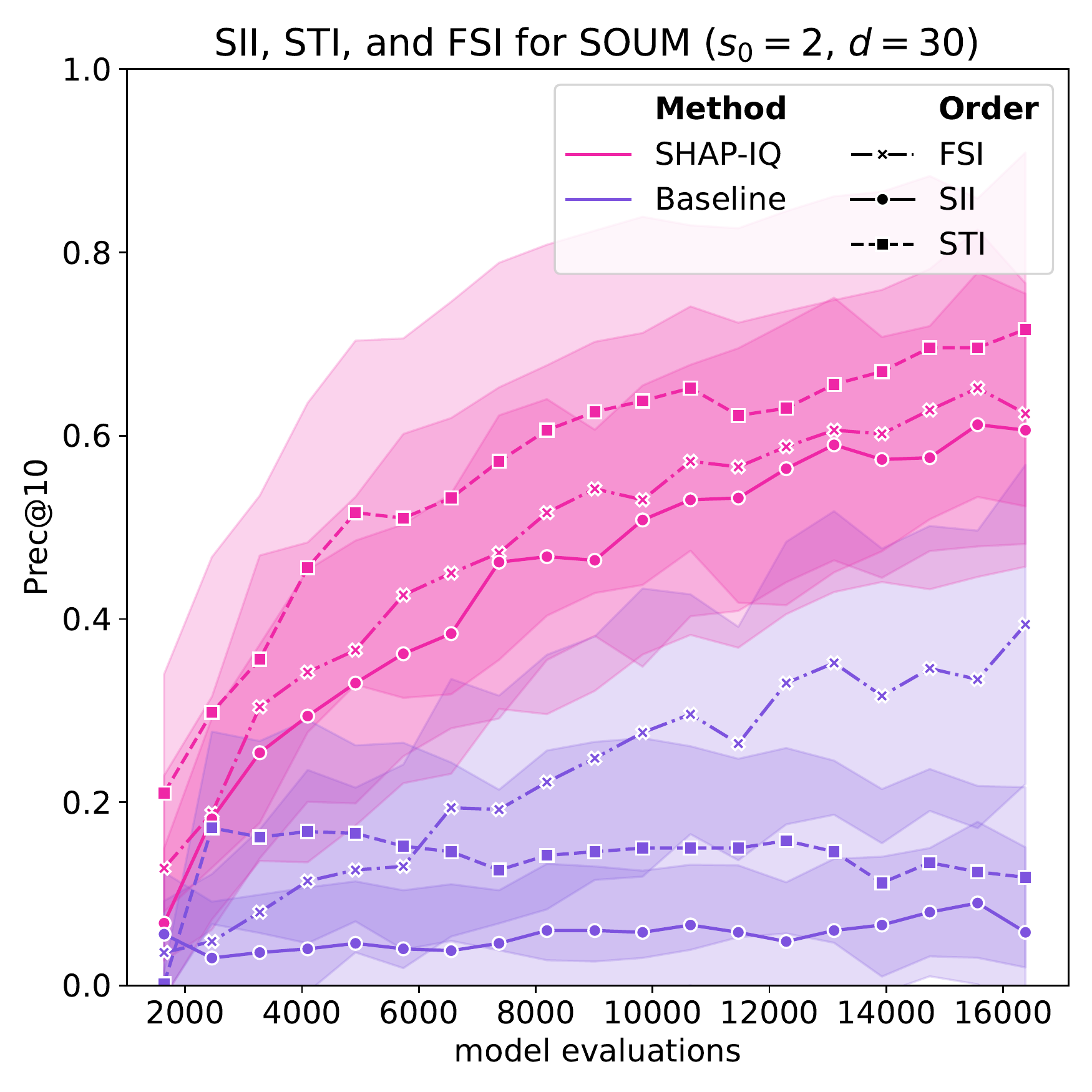}
    \end{minipage}
    \caption{Approximation quality for top-order interactions of SII, STI, and FSI of the LM with $s_0 = 3$ (left) and the SOUM with $s_0=2$ (middle and right).}
    \label{fig:exp-cii-comparison}
\end{figure}

\paragraph{SHAP-IQ on high-dimensional synthetic models.}
For the SOUM we compute the average and standard deviation of each evaluation metric for SHAP-IQ and the baselines for pairwise interactions ($s_0=2$) of each index.
Our results are shown in Figure~\ref{fig:exp-cii-comparison} (middle and right) and further experiments and results can be found in the appendix.
SHAP-IQ outperforms all baseline methods in this setting, in particular, the KB approximation of FSI that performed strongly in ML context. 
The experiment highlights that there exists no approximation method that performs universally best.

\paragraph{Runtime Analysis.}
The runtime of SHAP-IQ is affected by different parameters.
The computation of the sampling order $k_0$ is a constant time operation given a number of features (cf. Algorithm 2 in the appendix).
While the pre-computation of the weights ($m_s$) scales linearly with the number of features, the additional computational burden is negligible as it does not depend on $\nu_0$.
The main computational cost stems from the model evaluations (access to the value function $\nu_0$), which is bounded by a model's inference time.
To illustrate the runtime performance, we compare SHAP-IQ with the baseline methods on the LM using different number of model evaluations $K$.
Figure~\ref{exp:runtime} displays the runtime of SHAP-IQ and the corresponding baseline approaches, including all pre-computations.
With increasing $K$ the runtime complexity scales linearly, but the overhead of SHAP-IQ remains low. 
Note that the difference in STI can be attributed to less than $K$ model evaluations, which is required to maintain efficiency, cf. lines 15-16 in Algorithm 6 of the appendix.

\begin{wrapfigure}{r}{0.43\textwidth}
\begin{center}
    \includegraphics[width=0.43\textwidth]{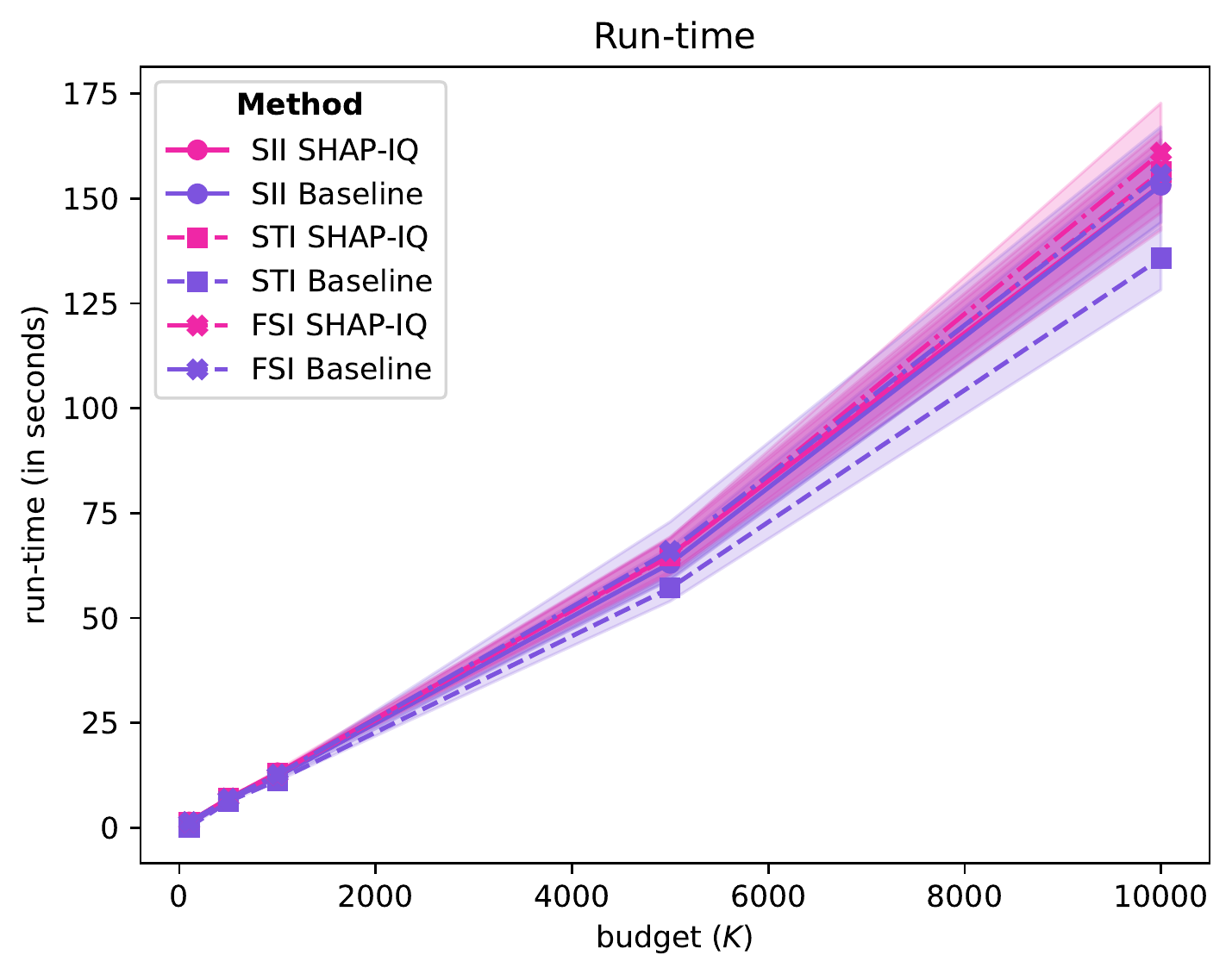}
\end{center}
\caption{For each interaction index (SII: solid, STI: dashed, FSI: dashdotted), we compare the run-time (in seconds) of SHAP-IQ (pink) compared to baseline estimators (violet) of one instantiation of the LM (16 words) over five independent runs at different levels of K.}
\label{exp:runtime}
\end{wrapfigure}

\section{Limitations}
We presented SHAP-IQ, a unified approximation algorithm for any-order CIIs with important theoretical guarantees.
SHAP-IQ relies on the specific structure in terms of discrete derivatives from Definition~\ref{def_SI}.
This representation exists for every interaction indices that fulfills the linearity, symmetry and dummy axiom \cite[Proposition 5]{Grabisch_Roubens_1999}.
However, for FSI, which has been defined as the solution to the weighted least square problem, a closed-form representation for lower-order interactions in terms of discrete derivatives remains difficult to establish \cite[Lemma 70]{Tsai_Yeh_Ravikumar_2022}.
This limits the applicability of SHAP-IQ to top-order interactions of FSI, for which this representation is given in \cite[Theorem 19]{Tsai_Yeh_Ravikumar_2022}.
The FSI baseline performs strongly in ML context, in line with empirical findings for KernelSHAP \cite{Lundberg_Lee_2017} for the SV.
However, the estimator is theoretically not well understood \cite{Covert_Lee_2021} and we have shown that it is not universally best.
Moreover, for other CIIs, such as SII and STI, it is unlikely \cite{DBLP:journals/dam/DingLCC08} that such an explicit form in terms of a weighted least square problem can be found, which limits the applicability of KB approximation to FSI.
SHAP-IQ outperforms the baselines of SII and STI by a large margin, is generally applicable and supported by a solid mathematical foundation.

\section{Conclusion}
How to extend the SV to interactions is an open research question.
In this work, we considered CIIs, a broad class of interaction indices, which covers all currently proposed indices, as well as all indices that fulfill the linearity, symmetry and dummy axiom.
We established a novel representation of the CII, which we used to introduce SHAP-IQ, an efficient sampling-based approximation algorithm that is unbiased and consistent.
For the special case of SV, SHAP-IQ can be seen as a generalization of U-KSH \cite{Covert_Lee_2021} and greatly simplifies its calculation as well as providing a novel representation of the SV.
Furthermore, for n-SII and STI, SHAP-IQ maintains the efficiency condition, which is a direct consequence of a specific property, which we coin s-efficiency for CIIs.
We applied SHAP-IQ in multiple experimental settings to compute any-order interactions of SII and n-SII, where SHAP-IQ consistently outperforms the baseline method and showcased the applicability of feature interaction scores to understand black-box language and image classification models.
SHAP-IQ further benefits from a solid statistical foundation, which can be leveraged to improve the approximation quality.

\paragraph{Future work.} Applying SHAP-IQ to real-world applications, such as NLP tasks \cite{Tsang.2020} and genomics \cite{Wright.2016,DBLP:journals/bmcbi/LiuZG19}, could yield valuable insights.
However, the exponentially increasing number of interactions requires human-centered post-processing to enhance interpretability for practitioners and ML engineers, e.g. through automated dialogue systems \cite{Slack_Krishna_Lakkaraju_Singh_2023}.
Further, it would be beneficial to discover the statistical capabilities of SHAP-IQ to provide confidence bounds or approximate interaction scores sequentially.
Beyond model-agnostic approximation, model-specific variants could substantially reduce computational complexity.
For instance, it is likely that ideas of TreeSHAP \cite{Lundberg.2020} for tree-based models can be extended to Shapley-based interactions.

\section*{Acknowledgements}
We sincerely thank the anonymous reviewers for their work and helpful comments. 
We gratefully acknowledge funding by the Deutsche Forschungsgemeinschaft (DFG, German Research Foundation): TRR 318/1 2021 – 438445824.
Patrick Kolpaczki is supported by the research training group Dataninja (Trustworthy AI for Seamless Problem Solving: Next Generation Intelligence Joins Robust Data Analysis) funded by the German federal state of North Rhine-Westphalia.
\newpage
\bibliographystyle{apalike}
\bibliography{references}

\newpage
\appendix
\onecolumn

\section*{Appendix of ``SHAP-IQ: Unified Approximation of any-order Shapley Interactions''}

\subsection*{Organisation of the Appendix}

We provide further theoretical and experimental results for SHAP-IQ.
The appendix is organized as follows: In Appendix~\ref{appx::si_theory}, we formally introduce specific CIIs, such as SII, n-SII, STI and FSI and their axiomatic foundation and theoretical results and provide a novel theoretical result for the sum of SII scores.
In Appendix~\ref{appx::proofs}, we provide all proofs of theoretical results from the main paper.
In Appendix~\ref{appx::algo-shap-iq}, we give further insights into the implementation of SHAP-IQ.
In Appendix~\ref{appx::experiments}, we provide further experimental results, information about the used models, explicit formulas for the SOUM interaction scores, and pseudo-code and information on the computational complexity of baseline implementations.
In Appendix~\ref{appx::sv_theory}, we provide further theoretical results for the special case of SV, in particular the explicit form of the covariance matrix from \cite{Covert_Lee_2021} and a simplified representation, similar to Theorem \ref{thm::SV_representation}, of SHAP-IQ in this case.
In Appendix~\ref{appx::approx_sv}, we describe the approximation methods ApproShapley \cite{Castro.2009} and KernelSHAP \cite{Lundberg_Lee_2017} for the SV on which our baseline methods are built.

\startcontents[sections]
\printcontents[sections]{l}{1}{\setcounter{tocdepth}{2}}

\clearpage
\section{Shapley-based Interaction Indices and further theoretical Results}\label{appx::si_theory}
In this section, we review the axiomatic structures of n-SII, SII, STI and FSI.
We further provide an additional result for the sum of SII interaction scores.

\subsection{Shapley-based Interaction Indices}
We consider $\mathcal G$ as the set of all games $\nu: \mathcal P(\fset) \to \mathbb{R}$.
In the following, we formalize the axioms for the Shapley interaction indices $I_\nu: \mathcal P(\fset) \to \mathbb{R}$

\begin{definition}[Linearity Axiom \cite{Grabisch_Roubens_1999}]
$I_\nu$ is \emph{linear}, if for any two games $\nu_1,\nu_2 \in \mathcal G$ and any $S \subseteq \fset$, it holds $I_{\nu_1+\nu_2}(S) = I_{\nu_1}(S) + I_{\nu_2}(S)$.
\end{definition}

\begin{definition}[Dummy Axiom \cite{Grabisch_Roubens_1999}]
For a dummy player $i \in \fset$ for a game $\nu \in \mathcal G$, i.e. constant contribution $c(i)$ added to any coalition $\nu(T \cup \{i\} = \nu(T) + c(i)$, then for every $T \subseteq \fset \setminus \{i\}$, the \emph{dummy axiom} requires
$I_\nu(S \cup \{ i\})=0$ for any $S \subseteq \fset \setminus \{i\}$.
That is, a dummy player has no interaction with any coalition.
\end{definition}

\begin{definition}[Symmetry Axiom \cite{Grabisch_Roubens_1999}]
$I_\nu$ is said to fulfill the \emph{symmetry axiom}, if for any permutation $\pi$ on $\fset$ it holds $I_\nu(S) = I_{\pi\nu}(\pi S)$, where $\pi\nu(\pi S) :=\nu(S)$ and $\pi S := \{\pi(i): i \in S\}$ changes the ordering of the players.
\end{definition}

\begin{definition}[Recursive Axiom \cite{Grabisch_Roubens_1999}]
$I_\nu$ fulfills the \emph{recursive axiom}, if for any $S\subseteq \fset$ with $\vert S \vert > 1$ and any game $\nu \in \mathcal G$
\begin{equation*}
I(S) = I_{\nu_{[S]}}([S]) - \sum_{K \subsetneq S, K \neq \emptyset} I_{\nu^{\fset \setminus K}}(S \setminus K),
\end{equation*}
where $\nu_{[S]}$ is the game, where all players in $S$ is considered as one player, and $\nu^{\fset \setminus K}$ is a game defined on the subset of players $\fset \setminus K$.
\end{definition}
The recursive axiom defines higher order interactions using lower order interaction.
For pairwise interactions it can be stated as $I_\nu(ij) = I_{\nu_{[ij]}}([ij]) - I_{\nu^{\fset \setminus \{j\}}}(i) - I_{\nu^{\fset \setminus \{i\}}}(j)$, i.e. the pairwise interaction is the difference of the value for the reduced player $[ij]$ and the individual player values for the reduced game.

\begin{definition}[Shapley Interaction Index \cite{Grabisch_Roubens_1999}]
The Shapley interaction index (SII) is the unique interaction index that satisfies the linearity, dummy, symmetry and recursive axiom, where the values for $\vert S \vert = 1$ correspond to the Shapley value.
It can be represented as a CII as
\begin{equation*}
    I^{\text{SII}}(S) := \sum_{T \subseteq \fset \setminus S} m^{\text{SII}}_{s}(t) \delta^\nu_S(T) \text{ and } m^{\text{SII}}_{s}(t) := \frac{(\fnum-t-s)!t!}{(\fnum-s+1)!}.\end{equation*}
\end{definition}

In contrast to the SV, the SII does not yield an efficiency property, which is desirable in ML context.
The efficiency axiom was therefore introduced for interactions.
The following axioms rely on a maximum interaction order $s_0$, where the values of the interaction indices change for different maximum interaction orders.

\begin{definition}[Efficiency Axiom \cite{Sundararajan_Dhamdhere_Agarwal_2020}]
For all $\nu \in \mathcal G$, it holds $\sum_{S \in \mathcal S_{s_0}} I_\nu(S) = \nu(\fset) - \nu(\emptyset)$.
\end{definition}

The efficiency axiom is an extension of the SV efficiency axiom and requires that all interaction scores up to the maximum order $s_0$ sum up to $\nu(\fset) -\nu(\emptyset)$.
For SII there exists a unique recursive aggregation, such that the efficiency axiom is fulfilled.
This novel interaction index is referred to as n-Shapley Values (n-SII) \cite{pmlr-v206-bordt23a}.

\begin{definition}[n-Shapley Values (n-SII) \cite{pmlr-v206-bordt23a}]
Given $s_0$ the n-Shapley Values (n-SII) are defined as
\begin{align*}
    I_{s_0}^{\text{n-SII}}(S) := \begin{cases}
        I^{\text{SII}}(S), &\text{ for } \vert S \vert = s_0
        \\
        I^{\text{n-SII}}_{s_0-1}(S) + B_{\fnum-\vert S \vert} \sum_{\substack{K \subseteq \fset \setminus S \\ k + s = \fnum}} I^{\text{SII}}(S \cup K), &\text{ for } \vert S \vert < s_0.
    \end{cases}
\end{align*}
\end{definition}

Besides n-SII, there have been two axiomatic extensions to CIIs that directly require the efficiency axiom together with the linearity, symmetry and dummy axiom.
However, unlike for the SV, it is not sufficient to require efficiency for a unique interaction index.
The Shapley Taylor Interaction Index (STI) further specifies the interaction distribution axiom, which then yields a unique index \cite{Sundararajan_Dhamdhere_Agarwal_2020}.

\begin{definition}[Interaction Distribution Axiom \cite{Sundararajan_Dhamdhere_Agarwal_2020}]
For an \emph{interaction function} $\nu_T$ parametrized by $T \subseteq \fset$ it holds $\nu_T(S) = 0$, if $T \subsetneq S$ and a constant value $\nu_T(S) = c$ for $T \subseteq S$ and $c \in \mathbb{R}$.
The \emph{interaction distribution axiom} requires for all $S \subseteq \fset$ with $S \subsetneq T$ and $s < s_0$ that $I_{\nu_T}(S) = 0$.
\end{definition}

\begin{definition}[The Shapley Taylor Interaction Index \cite{Sundararajan_Dhamdhere_Agarwal_2020}]
The Shapley Taylor interaction index (STI) is the unique interaction index that satisfies the linearity, dummy, symmetry, efficiency and interaction distribution axiom.
\end{definition}

The STI yields a unique interaction index by introducing the interaction distribution axiom, which favors the maximum order interactions as discussed in \cite{Tsai_Yeh_Ravikumar_2022}.
It was thus argued that instead the representation of the interaction index as a solution to a weighted least square problem is preferable \cite{Tsai_Yeh_Ravikumar_2022}, which yields the Faith-Interaction Index.

\begin{definition}[Faith-Interaction Index \cite{Tsai_Yeh_Ravikumar_2022}]
$I$ is called a \emph{Faith-interaction index} if it can be expressed as
\begin{align}
\begin{aligned}
I = \argmin_{\beta \in \mathbb{R}^{\fnum_{s_0}}} \sum_{T \subseteq \fset: \mu(T)<\infty}\mu(T)\left(\nu(T)-\sum_{\substack{T \subseteq S\\ t \leq s_0}}{\beta(S)}\right)^2
\\
\text{s.t. } \nu(T) = \sum_{S \subseteq T, t \leq s_0} \beta(S), \forall T: \mu(T) = \infty.
\end{aligned}
\end{align}
\end{definition}

\begin{definition}[Faithful Shapley Interaction Index \cite{Tsai_Yeh_Ravikumar_2022}]
The Faithful Shapley interaction index (FSI) is the unique faith-interaction index that satisfies the linearity, dummy, symmetry and efficiency axiom.
\end{definition}

While n-SII, SII, STI and FSI offer differnt ways of characterizing an interaction index, it was shown in \cite{Grabisch_Roubens_1999} that every interaction index satisfying the linearity, symmetry and dummy axiom admits a CII representation.

\begin{proposition}\label{appx::SI_weights}
Every interaction index satisfying the linearity, symmetry and dummy axiom can be represented as a CII
\begin{align*}
I^{m}(S) := \sum_{T \subseteq \fset\setminus S} m_{s}(t) \delta^\nu_S(T).
\end{align*}
Furthermore, the weights $m_{s_0}$ for the top-order interaction indices, i.e. $s=s_0$ are defined as
\begin{align*}
    m_{s_0}^{\text{SII}}(t) &:= \frac{(\fnum-t-s_0)!t!}{(\fnum-s_0+1)!},
    \\
    m_{s_0}^{\text{STI}}(t) &:= s_0 \frac{(\fnum-t-1)!t!}{\fnum!}
    \\
    m_{s_0}^{\text{FII}}(t) &:= \frac{(2s_0-1)!}{((s_0-1)!)^2} \frac{(\fnum-t-1)!(t+s_0-1)!}{(\fnum+s_0-1)!}.
\end{align*}
The definitions for lower order interactions can be found in \cite{Grabisch_Roubens_1999} for SII, in \cite{Sundararajan_Dhamdhere_Agarwal_2020} for STI and in \cite{Tsai_Yeh_Ravikumar_2022} for FSI.
Note that for FSI a closed-form solution for lower-order interactions in terms of discrete derivatives remains unknown \cite[Lemma 70]{Tsai_Yeh_Ravikumar_2022}.
\end{proposition}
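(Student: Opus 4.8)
The plan is to split Proposition~\ref{appx::SI_weights} into two independent parts: (i) every interaction index obeying linearity, symmetry and dummy admits a CII representation, and (ii) the explicit top-order weights $m_{s_0}^{\text{SII}}$, $m_{s_0}^{\text{STI}}$ and $m_{s_0}^{\text{FII}}$. Part (i) is the known structural result \cite[Proposition 5]{Grabisch_Roubens_1999}, which I would cite and, for completeness, sketch; part (ii) I would handle one index at a time, since the three characterizations (recursive axiom, interaction distribution axiom, weighted least squares) are different.

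For part (i), I would first reduce to games with $\nu(\emptyset)=0$: the constant game makes every player a dummy, so by linearity and the dummy axiom the shift $\nu(\emptyset)$ contributes nothing to $I_\nu(S)$ for $S\neq\emptyset$; hence it suffices to treat $\nu_0=\nu-\nu(\emptyset)$. On the $(2^\fnum-1)$-dimensional space of such games, $\nu_0\mapsto I_{\nu_0}(S)$ is linear, so expanding $\nu_0$ in the unanimity basis $\{u_R : \emptyset\neq R\subseteq\fset\}$ gives $I_{\nu_0}(S)=\sum_R a_R\,I_{u_R}(S)$. The dummy axiom forces $I_{u_R}(S)=0$ whenever $S\not\subseteq R$, and the symmetry axiom forces $I_{u_R}(S)$ to depend only on $|R|$ and $|S|$ when $S\subseteq R$. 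Reindexing the surviving sum over $R\supseteq S$ by $T:=R\setminus S\subseteq\fset\setminus S$ and using the unanimity-game coefficients collapses everything to $\sum_{T\subseteq\fset\setminus S} m_s(t)\,\delta^\nu_S(T)$, which is the claimed CII form.

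For part (ii), the SII weight is immediate: $m_{s_0}^{\text{SII}}$ is the weight from the definition of SII in the main text, read at $s=s_0$. For STI I would specialize the permutation representation $I^{\text{STI}}(S)=\mathbb{E}_{\pi\sim\mathrm{unif}(\mathfrak S_\fset)}\!\left[\delta_S^\nu(u_S^-(\pi))\right]$ to $|S|=s_0$ and expand it as $\sum_{T\subseteq\fset\setminus S}\mathbb{P}(u_S^-(\pi)=T)\,\delta_S^\nu(T)$: for fixed $T$ of size $t$, the event $u_S^-(\pi)=T$ requires the elements of $T$ to fill the first $t$ slots ($t!$ orderings), one of the $s_0$ elements of $S$ to occupy slot $t+1$ ($s_0$ choices), and the remaining $\fnum-t-1$ elements to fill the rest ($(\fnum-t-1)!$ orderings), so $\mathbb{P}(u_S^-(\pi)=T)=t!\,s_0\,(\fnum-t-1)!/\fnum!$ and hence $m_{s_0}^{\text{STI}}(t)=s_0\frac{(\fnum-t-1)!t!}{\fnum!}$. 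For FSI, the top-order weight follows by solving the constrained weighted least-squares problem (\ref{eq::fsi-lsq}) in closed form and reading off the discrete-derivative representation of the minimiser, which is exactly \cite[Theorem 19]{Tsai_Yeh_Ravikumar_2022}; I would cite this.

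The main obstacle is the FSI case: unlike SII (definitional) and STI (a one-line permutation count), extracting $m_{s_0}^{\text{FII}}$ requires inverting the weighted normal equations together with the efficiency constraint and then re-expressing the solution in the basis $\{\delta_S^\nu(T)\}_{T\subseteq\fset\setminus S}$, a computation that only closes at top order --- which is precisely why the statement restricts the explicit formula to $s=s_0$ and records that no analogous closed form is known for $s<s_0$ \cite[Lemma 70]{Tsai_Yeh_Ravikumar_2022}. For part (i), the only subtlety worth flagging is the $\nu(\emptyset)\neq 0$ bookkeeping, handled cleanly by the reduction to $\nu_0$ above.
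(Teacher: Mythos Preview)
Your proposal is correct and aligns with the paper's own proof, which is purely citation-based: the paper simply refers to \cite[Proposition 5]{Grabisch_Roubens_1999} for the general CII representation and to the respective original papers for the weights, without giving any argument. You in fact go further than the paper by sketching the unanimity-basis argument for part (i) and giving a self-contained permutation count for $m_{s_0}^{\text{STI}}$, both of which are correct and more informative than what the paper provides.
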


\begin{proof}
For the proofs, we refer to the corresponding paper of each index. 
The general statement is proven in \cite[Proposition 5]{Grabisch_Roubens_1999}.
\end{proof}

\subsection{Explicit Formula for Sum of SII Scores}\label{appx::explicit-sii-sum}

Using s-efficiency, it is easy to calculate the sum of SII scores, which provide an explicit representation of a formula considered in \cite[Theorem 4.2]{Rabitti_Borgonovo_2019}.
\begin{proposition}[Sum of SII Scores]\label{prop:sii_efficiency}
For SII it holds
\begin{equation*}
    \sum_{\substack{ S \subseteq \fset \\ \vert S \vert = s_0}} I^{\text{SII}}(S)  = \sum_{\substack{T\subseteq \fset \\ t < s_0}} r(t)\left[(-1)^{s_0}\nu(T)+\nu(\fset \setminus T) \right] ,
\end{equation*}
\end{proposition}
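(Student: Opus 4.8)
The plan is to read this off the representation of Theorem~\ref{thm::si}, combined with s-efficiency of SII (Theorem~\ref{thm::s-efficiency}) and a complementation symmetry of the SII weights. By Theorem~\ref{thm::si}, $\sum_{S\subseteq\fset,\,|S|=s_0} I^{\text{SII}}(S) = \sum_{T\subseteq\fset}\nu_0(T)\,r_0(t)$, where $r_0(t):=\sum_{|S|=s_0}\gamma^{\text{SII}}_{s_0}(t,|T\cap S|)$; grouping the $s_0$-subsets $S$ by $k:=|T\cap S|$ (there are $\binom{t}{k}\binom{\fnum-t}{s_0-k}$ of them) gives $r_0(t)=\sum_{k=0}^{t}\binom{t}{k}\binom{\fnum-t}{s_0-k}(-1)^{s_0-k}m^{\text{SII}}_{s_0}(t-k)$, which depends on $T$ only through $t$. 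I would first replace $\nu_0$ by $\nu$ at no cost: $\sum_{T}\nu_0(T)r_0(t)=\sum_{T}\nu(T)r_0(t)-\nu(\emptyset)\sum_{T}r_0(t)$, and $\sum_{T\subseteq\fset}r_0(t)=\sum_{|S|=s_0}\sum_{T}\gamma^{\text{SII}}_{s_0}(|T|,|T\cap S|)=\bigl(\sum_{k=0}^{s_0}\binom{s_0}{k}(-1)^{s_0-k}\bigr)\bigl(\sum_{j=0}^{\fnum-s_0}\binom{\fnum-s_0}{j}m^{\text{SII}}_{s_0}(j)\bigr)=0$, since splitting $T$ into its parts inside and outside $S$ makes the first factor $(1-1)^{s_0}=0$. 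Hence $\sum_{|S|=s_0}I^{\text{SII}}(S)=\sum_{T\subseteq\fset}\nu(T)\,r_0(t)$.

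Next I would invoke s-efficiency. Because SII is s-efficient (Theorem~\ref{thm::s-efficiency}), $r_0(t)=0$ for every $T\in\mathcal T_{s_0}$, i.e. for $s_0\le t\le\fnum-s_0$, so in the nondegenerate regime $\fnum\ge 2s_0$ only the layers $t<s_0$ and $t>\fnum-s_0$ survive. For the large layers I would substitute $T\mapsto\fset\setminus T$: using $|(\fset\setminus T)\cap S|=s_0-|T\cap S|$ together with the symmetry $m^{\text{SII}}_{s_0}(x)=m^{\text{SII}}_{s_0}(\fnum-s_0-x)$ — immediate from $m^{\text{SII}}_{s_0}(x)=\frac{(\fnum-x-s_0)!\,x!}{(\fnum-s_0+1)!}$ — one gets $\gamma^{\text{SII}}_{s_0}(\fnum-t,\,s_0-k)=(-1)^{s_0}\gamma^{\text{SII}}_{s_0}(t,k)$ and hence $r_0(\fnum-t)=(-1)^{s_0}r_0(t)$. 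Reindexing the large-layer sum then turns it into $\sum_{t<s_0}\sum_{|T|=t}(-1)^{s_0}r_0(t)\,\nu(\fset\setminus T)$.

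Finally I would merge the two contributions: $\sum_{|S|=s_0}I^{\text{SII}}(S)=\sum_{t<s_0}\sum_{|T|=t}r_0(t)\bigl[\nu(T)+(-1)^{s_0}\nu(\fset\setminus T)\bigr]$, and factoring out $(-1)^{s_0}$ rewrites the bracket as $(-1)^{s_0}\bigl[(-1)^{s_0}\nu(T)+\nu(\fset\setminus T)\bigr]$, so that with $r(t):=(-1)^{s_0}r_0(t)=\sum_{k=0}^{t}(-1)^{k}\binom{t}{k}\binom{\fnum-t}{s_0-k}m^{\text{SII}}_{s_0}(t-k)$ the claimed identity $\sum_{|S|=s_0}I^{\text{SII}}(S)=\sum_{T\subseteq\fset,\,t<s_0}r(t)\bigl[(-1)^{s_0}\nu(T)+\nu(\fset\setminus T)\bigr]$ drops out. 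I expect the only genuine work to be the sign and index bookkeeping — verifying the $\gamma$-symmetry $\gamma^{\text{SII}}_{s_0}(\fnum-t,s_0-k)=(-1)^{s_0}\gamma^{\text{SII}}_{s_0}(t,k)$ and matching $r(t)$ with the coefficient stated in the paper — while the degenerate case $\fnum<2s_0$ (where $\mathcal T_{s_0}=\emptyset$ and the "small" and "large" layers may overlap) would be handled separately but needs only elementary care, since the reduction steps above do not use s-efficiency.
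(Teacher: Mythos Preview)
Your approach is essentially the paper's: it too uses Theorem~\ref{thm::si}, invokes s-efficiency to kill the middle layers, and handles $t>\fnum-s_0$ by the complementation $T\mapsto\fset\setminus T$ together with $m^{\text{SII}}_{s_0}(x)=m^{\text{SII}}_{s_0}(\fnum-s_0-x)$. The one piece you have left undone is the closed-form evaluation: in the paper $r(t)$ is not an abstract coefficient but specifically $r(t)=\tfrac{1}{s_0}\binom{\fnum-t}{s_0-t-1}$, and the paper obtains this by expanding your $r_0(t)$, pulling out $\tfrac{t!(\fnum-t)!}{s_0!(\fnum-s_0+1)!}$, and applying the alternating partial-sum identity $\sum_{k=0}^{t}(-1)^k\binom{s_0}{k}=(-1)^t\binom{s_0-1}{t}$ for $t<s_0$; you should carry this computation out rather than leave it as ``matching.''
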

where $r(t) :=\frac{1}{s_0} \binom{\fnum-t}{s_0-t-1}$ and $m_s^{\text{SII}}(t) :=  \frac{(\fnum-t-s_0)!t!}{(\fnum-s_0+1)!}$.
\begin{proof}
For SII, we let $m(t) := m_s^{\text{SII}}(t) = \frac{1}{\fnum-s+1} \binom{\fnum-s}{t}^{-1}$ and have by Theorem \ref{thm::s-efficiency} and the definition of s-efficiency
\begin{align*}
\sum_{\substack{ S \subseteq \fset \\ \vert S \vert = s_0}} I^{m}(S) =\sum_{\substack{S \subseteq \fset \\ \vert S \vert = s_0}} c_{s_0}(S) = \sum_{\substack{T\subseteq \fset \\ t < s_0}} \nu(T) \sum_{\substack{ S \subseteq \fset \\ \vert S \vert = s_0}} \gamma^m_s(t,\vert T \cap S \vert) +  \sum_{\substack{T\subseteq \fset \\ t > \fnum-s_0}} \nu(T) \sum_{\substack{ S \subseteq \fset \\ \vert S \vert = s_0}} \gamma^m_s(t,\vert T \cap S \vert)
\end{align*}
For $t<s_0$ we have
\begin{align*}
\rho(t) := \sum_{\substack{ S \subseteq \fset \\ \vert S \vert = s_0}} \gamma^m_s(t,\vert T \cap S \vert) &= \sum_{k=0}^{t} \binom{t}{k}\binom{t-k}{s_0-k}\gamma^m_s(t,\vert T \cap S \vert) 
\\
&= \frac{1}{\fnum-s_0+1}\sum_{k=0}^{t} (-1)^{s_0-k} \binom{t}{k}\binom{\fnum-t}{s_0-k} \binom{\fnum-s_0}{t-k}^{-1}.
\end{align*}
For $t>\fnum-s$ there are at least $k_{\min}= t-(\fnum-s)$ elements in the intersection of $\vert T \cap S \vert$ and thus with $\bar t := d-t <s_0$ and $k_{\min}=s_0-\bar t$
\begin{align*}
\sum_{\substack{ S \subseteq \fset \\ \vert S \vert = s_0}} \gamma^m_s(t,\vert T \cap S \vert) &= \sum_{k=k_{\min}}^{s_0} \binom{t}{k}\binom{t-k}{s_0-k}\gamma^m_s(t,\vert T \cap S \vert) 
\\
&= \frac{1}{\fnum-s_0+1}\sum_{k=k_{\min}}^{s_0} (-1)^{s_0-k} \binom{t}{k}\binom{\fnum-t}{s_0-k} \binom{\fnum-s_0}{t-k}^{-1}
\\
&= \frac{1}{\fnum-s_0+1}\sum_{k=s_0-\bar t}^{s_0} (-1)^{s_0-k} \binom{\fnum - \bar t}{k}\binom{\bar t}{s_0-k} \binom{\fnum-s_0}{\fnum-\bar t-k}^{-1}
\\
&= \frac{1}{\fnum-s_0+1}\sum_{k=0}^{\bar t} (-1)^{k} \binom{\fnum - \bar t}{s_0-k}\binom{\bar t}{k} \binom{\fnum-s_0}{\fnum-\bar t-s_0+k}^{-1}
\\
&= \frac{1}{\fnum-s_0+1}\sum_{k=0}^{\bar t} (-1)^{k} \binom{\fnum - \bar t}{s_0-k}\binom{\bar t}{k} \binom{\fnum-s_0}{\bar t-k}^{-1} 
\\
&= (-1)^{s_0}\rho(\bar t).
\end{align*}

We can explicitly compute $\rho(t)$ as
\begin{align*}
\rho(t) &= \frac{1}{\fnum-s_0+1}\sum_{k=0}^{t} (-1)^{s_0-k} \binom{t}{k}\binom{\fnum-t}{s_0-k} \binom{\fnum-s_0}{t-k}^{-1} 
\\
&=(-1)^{s_0}\frac{t!(\fnum-t)!}{(\fnum-s_0+1)!}\sum_{k=0}^{t} (-1)^k \frac{1}{(s_0-k)!k!}
\\
&= (-1)^{s_0}\frac{t!(\fnum-t)!}{s_0!(\fnum-s_0+1)!}\sum_{k=0}^{t} (-1)^k \binom{s_0}{k} 
\\
&= (-1)^{s_0}\frac{t!(\fnum-t)!}{s_0!(\fnum-s_0+1)!} \binom{s_0-1}{t} = \frac {(-1)^{s_0}}{s_0}\binom{\fnum-t}{s_0-t-1},
\end{align*}
where we have used that $\sum_{k=0}^t (-1)^k \binom{s_0}{k} = \binom{s_0-1}{t}$ for $t<s_0$.
Hence,
\begin{align*}
\sum_{\substack{ S \subseteq \fset \\ \vert S \vert = s_0}} I^{m}(S) =  \frac{1}{s_0}\sum_{\substack{T\subseteq \fset \\ t < s_0}} \left[(-1)^{s_0}\nu(T)+\nu(\fset \setminus T) \right] \binom{\fnum-t}{s_0-t-1}.
\end{align*}
\end{proof}

\clearpage
\section{Proofs}\label{appx::proofs}
This section contains the proofs of the claims made in the main paper.

\subsection{Proof of Theorem \ref{thm::si}}
\begin{proof}
By definition, the sum $I^m(S) := \sum_{T \subseteq \fset\setminus S} m(t) \delta^\nu_S(T)$ ranges over all subsets $T \subseteq \fset$, where every subset is exactly once evaluated.
On the one hand, it is easy to see that every evaluated subset in $I^m(S)$ is different, as $T \cup L$ is unique.
Furthermore, given any subset $T \subseteq \fset$, we decompose $T = \tilde T \cup L$, where $\tilde T \subseteq \fset \setminus S$ and $L := T\cap S \subseteq S$.
The corresponding weight is $m(\tilde t) = m(t-l) = m(t-\vert T \cap S\vert)$ and the sign from $\delta^\nu_S(\tilde T)$ is $(-1)^{s-l} = (-1)^{s-\vert T \cap S\vert}$.
This yields with $\nu_0(T) := \nu(T) - \nu(\emptyset)$
\begin{align*}
I^m(S) &= \sum_{T\subseteq \fset} \nu(T) \gamma^m_s(t,\vert T \cap S \vert) =  \sum_{T\subseteq \fset} \nu_0(T) \gamma^m_s(t,\vert T \cap S \vert)  + \nu(\emptyset) \sum_{T\subseteq \fset} \gamma^m_s(t,\vert T \cap S \vert) 
\\
&= \sum_{T\subseteq \fset} \nu_0(T) \gamma^m_s(t,\vert T \cap S \vert),
\end{align*}
as the sum over all $\gamma^m$ is zero, if the dummy axiom is fulfilled.
\end{proof}

\begin{remark}\label{appx::rem_nu_zero}
It is important to note, that the sum $\sum_{T \subseteq \fset}\gamma^m_s(t,\vert T \cap S \vert)$ is not zero, if not all subsets are considered, which makes it crucial to use $\nu_0$ instead of $\nu$.
In fact, the estimates of $I^m$ would be heavily skewed by $\nu(\emptyset)$.
While the estimator would still be unbiased, its variance would scale with $\nu(\emptyset)^2$.
\end{remark}

\subsection{Proof of Theorem \ref{thm::unbiased_consistent}}
\begin{proof}
We aim to show that $\hat I^m(S)$ is unbiased and consistent, i.e. $\mathbb{E}\left[\hat I^m_{k_0}(S)\right] = I^m(S)$ and $\lim_{K\to \infty}\hat I^m_{k_0}(S) = I^m(S)$.
Given 
\begin{align*}
    \hat I^m_{k_0}(S) :=  c_{k_0}(S) + \frac 1 K \cdot \sum_{k=1}^K \nu_0(T_k) \frac{\gamma^m_s(t_k,\vert T_k \cap S \vert)}{p_{k_0}(T_k)},
\end{align*}
it is clear that due to the linearity of the expectation
\begin{align*}
\mathbb{E}_{T \sim p_{k_0}(T)}\left[\hat I^m_{k_0}(S)\right] = c_{k_0}(S) + \frac 1 K \sum_{k=1}^K\mathbb{E}_{T \sim p_{k_0}(T)}\left[\nu_0(T) \frac{\gamma^m_s(\vert T \vert,\vert T \cap S \vert)}{p_{k_0}(T)}\right] = I^m(S).
\end{align*}

Furthermore, let $\sigma^2(S) := \mathbb{V}_{T \sim p_{k_0}(T)}\left[\nu_0(T)\frac{\gamma^m_s(\vert T \vert,\vert T \cap S\vert)}{p_{k_0}(T)} \right]$ be the variance of each estimate, then, by the law of large numbers
\begin{equation*}
\frac 1 K \cdot \sum_{k=1}^K \nu_0(T_k) \frac{\gamma^m_s(t_k,\vert T_k \cap S \vert)}{p_{k_0}(T_k)} \overset{K \to \infty}{\longrightarrow}  \mathbb{E}_{T \sim p_{k_0}(T)}\left[\nu_0(T) \frac{\gamma^m_s(\vert T \vert,\vert T \cap S \vert)}{p_{k_0}(T)}\right],
\end{equation*}
and thus $\lim_{K\to \infty} \hat I^m(S) = I^m(S)$.
Lastly, as $\hat I^m(S)$ is unbiased, we have for $\epsilon > 0$ by Chebyshev's inequality
\begin{align*}
\mathbb{P}(\vert\hat I^m_{k_0}(S) - I^m(S)\vert > \epsilon) \leq \frac{\mathbb{V}\left[\hat I^m_{k_0}(S)\right]}{\epsilon^2} = \frac 1 {K^2} \frac{K \sigma^2(S)}{\epsilon^2} = \frac{1}{K} \frac{\sigma^2(S)}{\epsilon^2}.
\end{align*}
\end{proof}

\subsection{Proof of Theorem \ref{thm::SV_representation}}
\begin{proof}
We let $m(t) := \frac{(\fnum-t-1)!t!}{\fnum!}$ and apply Theorem \ref{thm::si}. 
With $\gamma^m_s(0,0)=-m(0)$, $\gamma^m_s(\fnum,1)=m(\fnum-1)$ and $m(0) = m(\fnum-1) = \frac 1 \fnum$, we have
\begin{align*}
I^{\text{SV}}(i) &\overset{\text{Theorem \ref{thm::si}}}{=} \sum_{T \subseteq \fset} \nu(T)\gamma^m_s(t,\mathbf{1}(i \in T))  = \frac{\nu(\fset)-\nu(\emptyset)}{\fnum} +  \sum_{T \in \mathcal T_1} \nu(T) \gamma^m_s(t,\mathbf{1}(i \in T))
\\
&= c_1(i)  +  \sum_{T \in \mathcal T_1} \nu(T) \left[\mathbf{1}(i\in T)\gamma^m_s(t,1)+\mathbf{1}(i \notin T)\gamma^m_s(t,0)\right]
\\
&= c_1(i) + \sum_{T \in \mathcal T_1} \nu(T) \left[\mathbf{1}(i\in T)\left(\gamma^m_s(t,1)-\gamma^m_s(t,0)\right) + \gamma^m_s(t,0) \right]
\\
&= c_1(i) + \sum_{T \in \mathcal T_1} \nu(T) \left[\mathbf{1}(i\in T)\left(m(t-1)+m(t)\right) - m(t) \right]
\\
&= c_1(i) + \sum_{T \in \mathcal T_1} \nu(T) \left[\mathbf{1}(i\in T)\left(\frac{(\fnum-t)!(t-1)!}{\fnum!}+\frac{(\fnum-t-1)!t!}{\fnum!}\right) -\frac{(\fnum-t-1)!t!}{\fnum!} \right]
\\
&= c_1(i) + \sum_{T \in \mathcal T_1} \nu(T) \frac{(\fnum-t-1)!(t-1)!}{(\fnum-1)!}\left[\mathbf{1}(i\in T) - \frac{t}{\fnum}\right]
\\
&= c_1(i) + \sum_{T \in \mathcal T_1} \nu(T) \mu(t)\left[ \mathbf{1}(i \in T) - \frac{t}{\fnum}\right].
\end{align*}
\end{proof}

\subsection{Proof of Theorem \ref{thm::u_ksh}}
\begin{proof}
According to Proposition \ref{prop::shap_x_for_SV}, our goal is to show that
\begin{equation*}
    \hat I^{\text{SV}}_U(i) = c_1(i) + \frac{2 h_{\fnum-1}}{K} \sum_{k=1}^K \nu_0(T_k)\left[ \mathbf{1}(i \in T_k) - \frac{t_k}{\fnum}\right]
\end{equation*}
with $T_k \overset{\text{iid}}{\sim} p(T) := \mu(t)/(2 h_{\fnum-1})$, where $p$ is a probability distribution over $\mathcal T_1$ and $h_n := \sum_{t=1}^n t^{-1}$.
The proof is structured in the following steps:
\begin{enumerate}
\item Exact computation of $A^{-1}$ using the exact structure of $A$ with diagonal entries $\mu_1$ and off-diagonal entries $\mu_2$, cf. \cite[Appendix A]{Covert_Lee_2021}.
\item Exact computation of $\hat I^{\text{SV}}_U$, which yields with Proposition \ref{prop::shap_x_for_SV} $\hat I^{\text{SV}}_U(i) = \hat I^m_1(i)$, if $(\mu_1-\mu_2)2h_{\fnum-1} = 1$.
\item We show that $(\mu_1-\mu_2)2h_{\fnum-1} = 1$.
\end{enumerate}
\paragraph{Calculation of $A^{-1}$.}
It has been shown in \cite[Appendix A]{Covert_Lee_2021} that all off-diagonal entries are equal and all diagonal entries are equal, i.e. $A$ may be written as $A = \mu_2 \mathbf{J}+ (\mu_1-\mu_2) \mathbf{I}$ with off-diagonal entries $\mu_2 := p(Z_i=Z_j=1)$ and diagonal entries $\mu_1 := p(Z_i=1)$, where $Z_i$ refers to the i-th component of the binary vector $Z$ and $\mathbf{J}$ is a matrix of ones and $\mathbf{I}$ is the identity matrix.
The simple structure of A allows to compute the inverse exactly by using the following Lemma.

\begin{lemma}\label{lem::exact_inverse}
Let $\mu_1,\mu_2 > 0$ with $\mu_1\neq \mu_2$, then
\begin{align*}
(\mu_2 \mathbf{J} + (\mu_1-\mu_2)\mathbf{I})^{-1} = \tilde \mu_2 \mathbf{J} + (\tilde\mu_1-\tilde\mu_2)\mathbf{I}
\end{align*}
with
\begin{align*}
    \tilde \mu_2  &= \frac{-\mu_2}{(\mu_1-\mu_2)(\mu_1+(d-1)\mu_2)} 
    \\
    \tilde\mu_1 &= \frac{\mu_1+ (\fnum-2) \mu_2}{(\mu_1-\mu_2)(\mu_1+(\fnum-1)\mu_2)}.
\end{align*}
\end{lemma}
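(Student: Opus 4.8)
\textbf{Proof plan for Lemma~\ref{lem::exact_inverse}.}
The plan is to verify the claimed inverse directly by multiplying the two matrices and checking that the product equals the identity. Write $A = \mu_2 \mathbf{J} + (\mu_1 - \mu_2)\mathbf{I}$ and the conjectured inverse as $B = \tilde\mu_2 \mathbf{J} + (\tilde\mu_1 - \tilde\mu_2)\mathbf{I}$. The key algebraic fact is that $\mathbf{J}$ and $\mathbf{I}$ span a commutative subalgebra of $\fnum \times \fnum$ matrices, with the single nontrivial relation $\mathbf{J}^2 = \fnum \mathbf{J}$. So expanding $AB$ gives a linear combination of $\mathbf{I}$, $\mathbf{J}$, and $\mathbf{J}^2$, and after substituting $\mathbf{J}^2 = \fnum\mathbf{J}$ it collapses to the form $\alpha \mathbf{I} + \beta \mathbf{J}$ for scalars $\alpha,\beta$ depending on $\mu_1,\mu_2,\tilde\mu_1,\tilde\mu_2,\fnum$. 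The task is then to show $\alpha = 1$ and $\beta = 0$.

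First I would parametrize more cleanly: set $a := \mu_1 - \mu_2$ (the coefficient of $\mathbf{I}$) and note $A = a\mathbf{I} + \mu_2 \mathbf{J}$, and similarly $B = \tilde a \mathbf{I} + \tilde\mu_2\mathbf{J}$ with $\tilde a := \tilde\mu_1 - \tilde\mu_2$. Then
\begin{align*}
AB = a\tilde a\, \mathbf{I} + (a\tilde\mu_2 + \tilde a \mu_2)\mathbf{J} + \mu_2\tilde\mu_2\, \mathbf{J}^2 = a\tilde a\,\mathbf{I} + \bigl(a\tilde\mu_2 + \tilde a\mu_2 + \fnum\mu_2\tilde\mu_2\bigr)\mathbf{J}.
\end{align*}
So the two conditions are $a\tilde a = 1$ and $a\tilde\mu_2 + \tilde a\mu_2 + \fnum \mu_2\tilde\mu_2 = 0$. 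Next I would read off from the claimed formulas that $\tilde\mu_2 = -\mu_2 / \bigl(a(\mu_1 + (\fnum-1)\mu_2)\bigr)$ and $\tilde a = \tilde\mu_1 - \tilde\mu_2 = \bigl((\mu_1 + (\fnum-2)\mu_2) + \mu_2\bigr)/\bigl(a(\mu_1+(\fnum-1)\mu_2)\bigr) = (\mu_1 + (\fnum-1)\mu_2)/\bigl(a(\mu_1+(\fnum-1)\mu_2)\bigr) = 1/a$, which immediately gives $a\tilde a = 1$. For the second condition, substitute $\tilde a = 1/a$ and the expression for $\tilde\mu_2$ and simplify; everything is a rational identity in $\mu_1,\mu_2,\fnum$ that reduces to $0$ after clearing the common denominator $a(\mu_1+(\fnum-1)\mu_2)$.

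The hypotheses $\mu_1,\mu_2 > 0$ and $\mu_1 \neq \mu_2$ guarantee that the denominators $a = \mu_1 - \mu_2$ and $\mu_1 + (\fnum - 1)\mu_2$ are nonzero, so the formulas are well defined; this is the only place genuine care is needed, and it is routine. There is no real obstacle here — the lemma is a self-contained linear-algebra identity, and the proof is a short direct computation once one exploits $\mathbf{J}^2 = \fnum\mathbf{J}$. The payoff, used in the proof of Theorem~\ref{thm::u_ksh}, is that this closed form for $A^{-1}$ (together with the explicit values of $\mu_1,\mu_2$ computable from $p(Z)\propto\mu(t)$) lets one collapse the matrix expression defining $\hat I^{\text{SV}}_U$ into the weighted-sum form of $\hat I^m_1$, with the scalar $(\mu_1 - \mu_2)\,2h_{\fnum-1} = 1$ emerging from the same bookkeeping.
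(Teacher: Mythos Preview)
Your proposal is correct and follows essentially the same route as the paper: both multiply out $(\mu_2\mathbf{J}+(\mu_1-\mu_2)\mathbf{I})(\tilde\mu_2\mathbf{J}+(\tilde\mu_1-\tilde\mu_2)\mathbf{I})$, use $\mathbf{J}^2=\fnum\mathbf{J}$, and reduce to the two scalar conditions $(\mu_1-\mu_2)(\tilde\mu_1-\tilde\mu_2)=1$ and vanishing $\mathbf{J}$-coefficient. The only cosmetic difference is that the paper solves these equations for $\tilde\mu_1,\tilde\mu_2$ whereas you plug in the stated formulas and verify; your substitution $a=\mu_1-\mu_2$, $\tilde a=1/a$ is a clean way to organize the same computation.
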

\begin{proof}
We compute
\begin{align*}
\mathbf{I} = &(\mu_2 \mathbf{J} + (\mu_1-\mu_2)\mathbf{I}) \cdot (\tilde\mu_2 \mathbf{J} + (\tilde\mu_1-\tilde\mu_2)\mathbf{I})
\\
= &((\mu_1+(\fnum-1)\mu_2)\tilde\mu_2+(\tilde\mu_1-\tilde\mu_2)\mu_2) \mathbf{J} 
\\
&+ (\mu_1-\mu_2)(\tilde\mu_1-\tilde\mu_2)\mathbf{I},
\end{align*}
which yields $(\mu_1-\mu_2)(\tilde\mu_1-\tilde\mu_2)=1$ and $\mu_2 \tilde\mu_2 \fnum+(\mu_1-\mu_2)\tilde\mu_1+(\tilde\mu_1-\tilde\mu_2)\mu_2=0$.
From the first equation we have $\tilde\mu_1-\tilde\mu_2 = 1/(\mu_1-\mu_2)$ and thus by the second equation
\begin{align*}
    \tilde\mu_2 = \frac{-\mu_2}{(\mu_1-\mu_2)(\mu_1+(\fnum-1)\mu_2)}
\end{align*}
and hence
\begin{equation*}
    \tilde\mu_1 = \frac{\mu_1 + (\fnum-2)\mu_2}{(\mu_1-\mu_2)(\mu_1+(\fnum-1)\mu_2)}
\end{equation*}
\end{proof}

\paragraph{Calculation of $\hat I^{\text{SV}}_U$.}
By Lemma \ref{lem::exact_inverse}, we proceed to compute the different components of
\begin{equation*}
    \hat I^{\text{SV}}_{U} := A^{-1}\left(\hat b_L - \mathbf{1}\frac{\mathbf{1}^T A^{-1} \hat b_L - \nu_0(\mathbf{1})}{\mathbf{1}^T A^{-1} \mathbf{1}} \right).
\end{equation*}
First, 
\begin{equation*}
\mathbf{1}^T A^{-1} = \left((\fnum-1)\tilde\mu_2 + \tilde\mu_1\right)\mathbf{1}^T=\frac{1}{\mu_1+(\fnum-1)\mu_2}\mathbf{1}^T.
\end{equation*}
Then the denominator yields $\mathbf{1}^T A^{-1}\mathbf{1} = \frac{\fnum}{\mu_1+(\fnum-1)\mu_2}$.
We then obtain
\begin{equation*}
\mathbf{1}\frac{\mathbf{1}^T A^{-1} \hat b_L - \nu_0(\mathbf{1})}{\mathbf{1}^T A^{-1} \mathbf{1}} = \frac 1 d \mathbf{1}\mathbf{1}^T \hat b_L- \frac{\mu_1+(\fnum-1)\mu_2}{\fnum} \nu_0(\mathbf{1})\mathbf{1} = \frac 1 \fnum \mathbf{J} \cdot \hat b_L + \frac{\mu_1+(\fnum-1)\mu_2}{\fnum} \nu_0(\mathbf{1})\cdot \mathbf{1},
\end{equation*}
which, with $A^{-1} \mathbf{1} = (\tilde\mu_1 + (\fnum-1)\tilde\mu_2)\mathbf{1} = \frac {1}{\mu_1+(\fnum-1)\mu_2} \mathbf{1}$, yields
\begin{equation*}
\hat I^{\text{SV}}_U = A^{-1}(\hat b_L - \frac 1 \fnum \mathbf{J} \cdot \hat b_L) + \frac {\nu_0(\mathbf{1})}{\fnum} \cdot \mathbf{1} = c_1 + A^{-1}(\hat b_L - \frac 1 \fnum \mathbf{J} \cdot \hat b_L).
\end{equation*}
It remains to show that 
\begin{equation*}
\left(A^{-1}(\hat b_L - \frac 1 \fnum \mathbf{J} \cdot \hat b_L)\right)_i = \frac{1}K\sum_{k=1}^K \nu_0(T_k) \frac{\gamma^m_s(t_k,\mathbf{1}(i\in T_k))}{p(T_k)}.
\end{equation*}
With $\hat b_L = \frac{1}{K}\sum_{k=1}^K z_k\nu_0(z_k)$ it follows
\begin{equation*}
A^{-1}(\hat b_L - \frac 1 \fnum \mathbf{J} \cdot \hat b_L) = \frac 1 K \sum_{k=1}^K\left(A^{-1}z_k - \frac 1 \fnum A^{-1}\mathbf{J}z_k\right) \nu_0(z_k).
\end{equation*}
Then $A^{-1} z_k = t_k \tilde\mu_2 \mathbf{1} + (\tilde\mu_1 - \tilde\mu_2)z_k$ where $t_k$ is the subset size, i.e. $t_k$ is the sum of all entries in $z_k$.
It follows with $A^{-1} J= \frac{1}{\mu_1+(\fnum-1)\mu_2} J$
\begin{equation*}
A^{-1}z_k - \frac 1 \fnum A^{-1}\mathbf{J}z_k = t_k \left(\tilde\mu_2 - \frac{1}{d\left(\mu_1+(\fnum-1)\mu_2\right)} \right)\mathbf{1}  + (\tilde\mu_1-\tilde\mu_2)z_k  = \frac{1}{\mu_1-\mu_2} \left(z_k- \frac{t_k} \fnum \mathbf{1}\right).
\end{equation*}
For the i-th component, we have with set notation $T_k$ and for the SV weights $m(t) := \frac{(\fnum-t-1)!t!}{\fnum!}$ then
\begin{align*}
\hat I^{\text{SV}}_U(i) - c_1(i) = \left(A^{-1}(\hat b_L - \frac 1 \fnum \mathbf{J} \cdot \hat b_L)\right)_i &= \frac 1 K \sum_{k=1}^K \nu_0(T_k) \frac{1}{\mu_1-\mu_2} \left( \mathbf{1}(i \in T_k)- \frac{t_k}{\fnum} \right)
\\
&= \frac{1}{(\mu_1-\mu_2)2h_{\fnum-1}} \left(\hat I^m_1(i) - c_1(i)\right),
\end{align*}
where we have used Proposition \ref{prop::shap_x_for_SV} for $\hat I^m_1(i)$.
It remains to show that $(\mu_1-\mu_2)2h_{\fnum-1} = 1$.

\paragraph{Show that $(\mu_1-\mu_2)2h_{\fnum-1} = 1$.}
We let $p(T_k) := \mu(t)/R$ be a probability distribution over $\mathcal T_1$.
By definition, and as subsets of size $t$ have equal probability, we have
\begin{align*}
\mu_1 = p(Z_i = 1) &= \sum_{t=1}^{\fnum-1} p(Z_i = 1 \vert \mathbf{1}^T Z = t)p(\mathbf{1}^T Z = t) 
\\
&= \sum_{t=1}^{\fnum-1}\frac{\binom{\fnum-1}{t-1}}{\binom{\fnum}{t}}\frac{\mu(t)} {R} \binom{\fnum}{t} = \sum_{t=1}^{\fnum-1}\binom{\fnum-1}{t-1}\frac{\mu(t) } {R}
\end{align*}
and
\begin{align*}
\mu_2 = p(Z_i = Z_j = 1) &= \sum_{t=1}^{\fnum-1} p(Z_i = Z_j = 1 \vert \mathbf{1}^T Z = t)p(\mathbf{1}^T Z = t) 
\\
&= \sum_{t=2}^{\fnum-1}\frac{\binom{\fnum-2}{t-2}}{\binom{\fnum}{t}}\frac{\mu(t)} {R} \binom{\fnum}{t} = \sum_{t=2}^{\fnum-1}\binom{\fnum-2}{t-2}\frac{\mu(t) } {R}.
\end{align*}
Hence,
\begin{align*}
\mu_1 - \mu_2 = \frac{\mu(1)}{R} + \sum_{t=2}^{\fnum-1} \frac{\mu(t)}{R}\left(\binom{\fnum-1}{t-1}-\binom{\fnum-2}{t-2}\right)= \sum_{t=1}^{\fnum-1} \frac{\mu(t)}{R}\binom{\fnum-2}{t-1} = \frac 1 R,
\end{align*}
where we have used the recursion for the binomial coefficient and $\mu(t) = \frac{1}{\fnum-1}\binom{\fnum-2}{t-1}^{-1}$.
Lastly, we have seen in the proof of Proposition \ref{prop::shap_x_for_SV} that $R=2 h_{\fnum-1}$, which finishes the proof.
\end{proof}

\subsection{Proof of Theorem \ref{thm::s-efficiency}}
\begin{proof}
We prove the statements in separate subsections.
We consider interactions of maximum order $s_0 \geq 1$, summarized in $\mathcal I := \{S \subset \fset \mid s = s_0\}$.
To show that SII and STI are s-efficient, by Theorem \ref{thm::si} it suffices to show that $\sum_{S \in \mathcal I}\gamma^m_s(t,\vert T \cap S\vert) = 0$ for all $T \in \mathcal T_{s_0}$.
Given a subset $T \in \mathcal T_{s_0}$ with $\vert T \cap S \vert = k$ and $k \in \{0,\dots,s_0\}$, we have
\begin{align*}
\sum_{S \in \mathcal I} \gamma^m_s(t,\vert T \cap S \vert) &= \sum_{k=0}^{s_0} \binom{t}{k}\binom{t-k}{s_0-k}\gamma^m_s(t,k) = \sum_{k=0}^{s_0} (-1)^{s_0-k} \binom{t}{k}\binom{\fnum-t}{s_0-k} m(t-k).
\end{align*}

\subsubsection{Proof of s-efficiency for SII.}
We let $m(t) := m_{s_0}^{\text{SII}}(t)= \frac{(\fnum-t-s_0)!t!}{(\fnum-s_0+1)!}=\frac{1}{\fnum-s_0+1}\binom{\fnum-s_0}{t}^{-1}$ and obtain
\begin{equation*}
\sum_{S \in \mathcal I} \gamma^m_s(t,\vert T \cap S \vert) = \frac{1}{\fnum-s_0+1}\sum_{k=0}^{s_0} (-1)^{s_0-k} \binom{t}{k}\binom{\fnum-t}{s_0-k} \binom{\fnum-s_0}{t-k}^{-1}.
\end{equation*}
We now consider
\begin{align*}
\sum_{k=0}^{s_0} (-1)^k \binom{t}{k} \binom{\fnum-t}{s_0-k} \binom{\fnum-s_0}{t-k}^{-1} &= \sum_{k=0}^{s_0} (-1)^k \frac{t!}{k!(t-k)!} \frac{(\fnum-t)!}{(s_0-k)!(\fnum-t-s_0+k)!} \frac{(t-k)!(\fnum-s_0-t+k)!}{(\fnum-s_0)!}
\\
&= \frac{t!(\fnum-t)!}{(\fnum-s_0)!}\sum_{k=0}^{s_0} (-1)^k \frac{1}{(s_0-k)!k!}
\\
&= \frac{t!(\fnum-t)!}{s_0!(\fnum-s_0)!} \sum_{k=0}^{s_0} (-1)^k \binom{s_0}{k}
\\
&= \frac{t!(\fnum-t)!}{s_0!(\fnum-s_0)!} (1-1)^{s_0}
\\
&= 0,
\end{align*}
where we have used the binomial expansion for $(1-1)^{s_0}$.
Hence,
\begin{align*}
\sum_{S \in \mathcal I} \gamma^m_s(t,\vert T \cap S \vert) &= 0,
\end{align*}
which finishes the proof of SII s-efficiency.

\subsubsection{Proof of s-efficiency of STI.}
For STI, i.e. $m_{s_0}^{\text{STI}} := s_0\frac{(\fnum-t-1)!t!}{\fnum!}$, we have
\begin{align*}
\sum_{S \in \mathcal I} \gamma^m_s(t,\vert T \cap S \vert) &= \sum_{k=0}^{s_0} (-1)^{s_0-k} \binom{t}{k}\binom{\fnum-t}{s_0-k} s_0\frac{(\fnum-t+k-1)!(t-k)!}{\fnum!}
\\
&= \frac{s_0(-1)^{s_0} t!(\fnum-t)!}{s_0!\fnum!} \sum_{k=0}^{s_0} (-1)^k \binom{s_0}{k} \frac{(\fnum-t+k-1)!}{(\fnum-t-s_0+k)!}
\end{align*}
As $\frac{(\fnum-t+k-1)!}{(\fnum-t-s_0+k)!}$ is a polynomial with orders less than $s_0$, we can use $\sum_{k=0}^{s_0} \binom{s_0}{k} k^m = 0$ for $m<s_0$ \cite{SPIVEY20073130} to obtain
\begin{equation*}
\frac{s_0(-1)^{s_0} t!(\fnum-t)!}{s_0!\fnum!} \sum_{k=0}^{s_0} (-1)^k \binom{s_0}{k} \frac{(\fnum-t+k-1)!}{(\fnum-t-s_0+k)!} = 0,
\end{equation*}
which finishes the proof of STI s-efficiency.

\subsubsection{Proof of efficiency for STI.}
It is clear that s-efficiency of a CII implies by Theorem~\ref{thm::si} with sampling order $k\geq s_0$ that the sum of top-order interaction estimates is
\begin{align*}
    \sum_{S \in \mathcal I} \hat I^{\text{STI}}_{s_0}(t) = \sum_{S \in \mathcal I} c^{\text{STI}}_{s_0}(S).
\end{align*}
On the other hand, it also implies that the sum of STI scores for the top-order interactions are
\begin{align*}
    \sum_{S \in \mathcal I} I^{\text{STI}}_{s_0}(t)=\sum_{S \in \mathcal I} c^{\text{STI}}_{s_0}(S).
\end{align*}

While lower-order estimates are computed exactly for STI, it follows that the sum of STI scores and sum of SHAP-IQ estimates over all interaction sets $\mathcal S_{s_0}$ are equal.
Furthermore, due to the definition of STI, they must fulfill the efficiency axiom, which finishes the proof.

\subsubsection{Proof of efficiency for n-SII.}
This result follows from the aggregation suggested by n-SII, which is independent of the index.
The efficiency condition follows directly from the SV efficiency and the Bernoulli numbers, independent of higher-order interaction values, cf. proof in \cite[Proposition 12]{pmlr-v206-bordt23a}.
The proof is based on induction, starting from the SV, where the efficiency condition holds.
The specific aggregation based on the Bernoulli numbers then ensures that this efficiency condition is maintained.
To apply this observation to SHAP-IQ estimates of n-SII, we observe that due to Proposition~\ref{prop::shap_x_for_SV}, SHAP-IQ maintains the efficiency condition for order $s=1$, i.e. the SV estimates.
The aggregation of n-SII for the SHAP-IQ estimates of higher orders then immediately implies that this efficiency condition is always preserved, as the arguments presented therein hold independent of the interaction index, cf. \cite[Proof of Proposition 12]{pmlr-v206-bordt23a}.
\end{proof}

\clearpage
\section{Algorithmic Details of SHAP-IQ}\label{appx::algo-shap-iq}
In this section, we decribe further algorithmic details that are used in the SHAP-IQ implementation.

\subsection{Algorithm of SHAP-IQ}
The pseudo-code for SHAP-IQ is outlined in Algorithm~\ref{alg::approximation}.
\begin{algorithm}[!h]
    \caption{SHAP-IQ for any-order interactions $\mathcal S_{s_0}$ up to order $s_0$}
    \label{alg::approximation}
    \begin{algorithmic}[1]
    \REQUIRE Budget $K>0$, weights $q(t) \geq 0$ with $t \in [\fnum]$, precomputed weights $\gamma^m_s(t,\ell)$ for $s=1,\dots,s_0$, $t=0\dots,d$ and $\ell=0,\dots,s$
    \vspace{0.5em}
    \STATE $k_0 \gets {\textsc{\texttt{getSamplingOrder}}}(q,K)$
    \FOR[Deterministic]{$T \notin \mathcal T_{k_0}$}
        \STATE $\eta \gets \nu_0(T)$
        \FOR{$S \in \mathcal S_{s_0}$}
            \STATE $c_{k_0}(S) \gets c_{k_0}(S) + \eta \cdot \gamma^m_s(t,\vert T \cap S \vert)$ \COMMENT{Update deterministic part for all $S$}
        \ENDFOR
        \STATE $K \gets K - 1$
    \ENDFOR
    \FOR{$t=k_0,\dots,d-k_0$}
        \STATE $p(t) \gets q(t)/\left(\sum_{k=k_0}^{\fnum-k_0}q(k)\binom{\fnum}{k}\right)$ \COMMENT{compute probabilities $\mathbb{P}_{k_0}(\vert T \vert = t)$}
    \ENDFOR
    \FOR[Sampling]{$k=1,\dots,K$}
        \STATE $T \gets \textsc{\texttt{Sample}}(p,k_0)$
        \STATE $\eta \gets \nu_0(T)$
        \FOR{$S \in \mathcal S_{s_0}$}
            \STATE $\Delta(S) \gets \eta \cdot \gamma^m_s(t,\vert T \cap S\vert) \binom{\fnum}{t}/p[t]$ \COMMENT{Use probabilities $\mathbb{P}_{k_0}(T) = \mathbb{P}_{k_0}(\vert T \vert = t)/\binom{\fnum}{t} \propto q(t)$} 
        \ENDFOR
        \STATE $\hat \mu,\hat s^2 \gets {\textsc{\texttt{WelfordUpdate}}}(\hat \mu,s2,k,\Delta)$
    \ENDFOR
    \STATE mean $\hat I^m_{k_0} \gets c_{k_0} + \hat \mu^m$ and variance $\hat \sigma^2 \gets  s2 / (n-1)$
    \STATE \textbf{return} $\hat I^m_{k_0}$ and $\hat \sigma^2$
    \end{algorithmic}
\end{algorithm}

\subsection{Sampling Weights $q(t)$}
In our implementation, we rely on $q(t) \propto \mu(t)$ for $1\leq t\leq \fnum-1$, where the weights $q(0)=q(\fnum)=q_0 \gg 0$ are set to a high positive constant, which favors these subsets before weighting the remaining subsets in $\mathcal T_1$.
These weights ensure that SHAP-IQ is equal to U-KSH for the SV.
Another choice of weights is $q(t) = \frac{(\fnum-t-s_0)!(t-s_0)!}{(\fnum-s_0+1)!}$ for $s_0\leq t \leq \fnum-s_0$ and $q(t)=q_0$ otherwise, which prefers all orders up to $s_0$ and from $\fnum$ to $\fnum-s_0$.
This choice of subset weights may be beneficial for very low budgets, as it is important to ensure that $k_0 \geq s_0$ for SHAP-IQ to maintain the efficiency condition.
The algorithm to find $k_0$ given weight $q$ and budget $K$ is outlined in Algorithm~\ref{alg::find_k_0}.
The sampling procedure to generate a subset according to $p(T)$ is outlined in Algorithm~\ref{alg::sample_T}.

\begin{algorithm}
    \caption{Determine the the sampling order $k_0$ for the deterministic part}
    \begin{algorithmic}[1]
    \label{alg::find_k_0}
        \REQUIRE weights $q$ over $0,\dots,\fnum$, budget $K>0$ \vspace{0.5em}
        \STATE initialize $k_0 = 0$
        \FOR{$t = 0,\dots,\text{\textsc{\texttt{Floor}}}(\fnum/2)$}
            \STATE $R = \left(\sum_{k=k_0}^{\fnum-k_0}q[k]\binom{\fnum}{k}\right)$ \COMMENT{Normalization}
            \STATE $p[t] \gets q[t]\binom{\fnum}{t}/R$ 
            \STATE $p[\fnum-t] \gets q[\fnum-t]\binom{\fnum}{t}/R$
            \IF{$K \cdot q[t] > R$ and $K \cdot q[\fnum-t] > R$}
                \STATE $k_0 \gets k_0 + 1$
                \STATE $K \gets K - 2\binom{\fnum}{t}$
            \ENDIF
        \ENDFOR
        \STATE \textbf{return} $k_0$
    \end{algorithmic}
\end{algorithm}

\begin{algorithm}
    \caption{Sample a subset $T\sim p(T)$}
    \begin{algorithmic}[1]
    \label{alg::sample_T}
    \REQUIRE $p$ with $\sum_{k=k_0}^{\fnum-k_0} p[k] = 1$, sampling order $k_0$ \vspace{0.5em}
    \FOR{$t=k_0,\dots,d-k_0$}
        \STATE $p(\vert T \vert = t) \gets p[t]\binom{\fnum}{t}$
    \ENDFOR
    \STATE choose subset size $t_0$ with probability $p(\vert T \vert = t)$
    \STATE choose subset $T$ of size $t_0$ with probability $\binom{\fnum}{t_0}^{-1}$
    \STATE \textbf{return} $T$
    \end{algorithmic}
\end{algorithm}

\subsection{Welford's Algorithm}
Welford's algorithm \cite{Welford_1962} allows to iteratively update the mean and variance using a single pass. 
The algorithm is outlined in Algorithm~\ref{alg:welford_update}.

\begin{algorithm}
  \caption{Welford Algorithm for Mean and Variance \cite{Welford_1962}}
    \begin{algorithmic}[1]\label{alg:welford_update}
    \REQUIRE $\mu,s2,n,\Delta$
    \STATE $n \gets n +1$
    \STATE $\Delta_1 \gets \Delta - \mu$
    \STATE $\mu \gets \mu + \Delta/n$
    \STATE $\Delta_{2} \gets \Delta - \mu$
    \STATE $s2 \gets s2 + \Delta_1\Delta_2$
    \STATE \textbf{return} $\mu,s2$
    \end{algorithmic}
\end{algorithm}

\clearpage
\section{Experiments}
\label{appx::experiments}

For the interested reader, we provide a more detailed view on our empirical evaluation of Section~\ref{sec:Experiments}.
We give descriptions and pseudocode of the baseline algorithms approximating the three considered interaction indices SII, STI, and FSI, formal definitions of our synthetic games, and finally further obtained results that we omitted in the main part due to space constraints.

\subsection{Baseline Algorithms for SII, STI, and FSI}
\label{appx::algos-sti-sii}
In this section, we describe our baseline algorithms for SII, STI and FSI.
We distinguish between permutation-based approximation (SII and STI) and kernel-based approximation (FSI).

\subsubsection{Permutation-based (PB) Approximation}\label{appx::algos_PB}
The algorithm for SII is outlined in Algorithm~\ref{alg::sii_baseline}.
Note that with each permutation only $\fnum-s+1$ interaction estimates of order $s$ are updated.

\begin{algorithm}[ht!]
  \caption{Permutation-based sampling for SII for all orders up to $s_0$ \cite{Tsai_Yeh_Ravikumar_2022}}
    \begin{algorithmic}[1]
    \label{alg::sii_baseline}
	\REQUIRE maximum interaction order $s_0$, interaction set $\mathcal S_{s_0}$, budget $K$
	\STATE sum[S] $\gets$ 0 for all $S \in \mathcal S_{s_0}$
	\STATE count[S] $\gets$ 0 for all $S \in \mathcal S_{s_0}$
    \STATE permutationCost $\gets 0$
    \FOR{$s=1,\dots,s_0$}
        \STATE permutationCost $\gets$ permutationCost $+2^{s} \cdot (\fnum-s+1)$ \COMMENT{Evaluate costs per permutation}
    \ENDFOR
	\WHILE{$K\geq$ permutationCost}
		\STATE $\pi \gets \{i_1,\dots,i_\fnum\}$ random permutation of $\fset$
        \FOR{$s=1,\dots,s_0$}
		\FOR{$m=1,\dots,\fnum-s+1$}
			\STATE $S \gets \{i_m,\dots,i_{m+s-1}\}$
			\STATE $T \gets \{i_1,\dots,i_{m-1}\}$ the set of predecessors of $i_m$ in $\pi$
			\STATE sum[S] $\gets$ sum[S] + $\delta_S^\nu(T)$ \COMMENT{$\delta_S^\nu(T)$ costs $2^{s}$ evaluations}
			\STATE count[S] = count[S]+1
		\ENDFOR
        \ENDFOR
        \STATE $K \gets K - $ permutationCost \COMMENT{Update budget}
	\ENDWHILE
	\STATE SII[S] $\gets$ sum[S]/count[S] for all $S \in \mathcal S_{s_0}$.
	\STATE \textbf{return} SII
    \end{algorithmic}
\end{algorithm}

The sampling-based algorithm for top-order interactions of STI is outlined in Algorithm~\ref{alg::sti_baseline}.
Note that with each permutation all top-order interaction estimates can be updated.
However, the update requires a significant amount (permutationCost) of model evaluations.

\begin{algorithm}[ht!]
  \caption{Permutation-based sampling for STI for all orders up to $s_0$ \cite{Sundararajan_Dhamdhere_Agarwal_2020,Tsai_Yeh_Ravikumar_2022}}
    \begin{algorithmic}[1]\label{alg::sti_baseline}
	\REQUIRE maximum interaction order $s_0$, interaction set $\mathcal S_{s_0}$, budget $K$
	\STATE sum[S] $\gets$ 0 for all $S \in \mathcal S_{s_0}$
	\STATE count[S] $\gets$ 0 for all $S \in \mathcal S_{s_0}$
     \STATE \textbf{Compute exact (trivial) lower-order interactions}
    \STATE eval[S] $\gets$ 0 for all $S \in \mathcal S_{s_0-1}$ \COMMENT{Model evaluations for lower-order STI values}
    \FOR[Precompute model evaluations for lower-order STI]{$S \in \mathcal S_{s_0-1}$}
        \STATE eval[S] $\gets \nu(S)$
        \STATE $K \gets K-1$
    \ENDFOR
    \FOR[Lower-order interactions]{$S \in \mathcal S_{s_0-1}$}
        \FOR{$L \in \mathcal P(S)$}
            \STATE SII[S] $\gets$ SII[S] + $(-1)^{s-l} \cdot$ eval[L] \COMMENT{Exact lower-order STI values}
        \ENDFOR
    \ENDFOR
    \STATE \textbf{Compute sampling-based top-order interaction estimates}
    \STATE permutationCost $\gets 2^{s_0} \cdot \binom{\fnum}{s_0}$ \COMMENT{Every $S$ requires to compute $\delta^\nu_S$ with $2^{s_0}$ evaluations}
	\WHILE[Evaluate one permutation]{$K\geq$ permutationCost}
		\STATE $\pi \gets \{i_1,\dots,i_\fnum\}$ random permutation of $\fset$
		\FOR{all top-order interactions $S$}
			\STATE $i_m \gets$ the leftmost element of $S$  in $\pi$
			\STATE $T \gets \{i_1,\dots,i_{m-1}\}$ the set of predecessors of $i_m$ in $\pi$
			\STATE sum[S] $\gets$ sum[S] + $\delta_S^\nu(T)$ \COMMENT{$\delta^\nu_S(T)$ costs $2^{s_0}$ evaluations}
			\STATE count[S] = count[S]+1
		\ENDFOR
        \STATE $K \gets K-$ permutationCost \COMMENT{Update budget}
	\ENDWHILE
	\STATE STI[S] $\gets$ sum[S]$/$count[S] for all top-order interactions.
	\STATE \textbf{return} STI
    \end{algorithmic}
\end{algorithm}

\subsubsection{Kernel-based (KB) Approximation}\label{appx::algos_KB}
Given a budget of $K$, we first find the sampling budget by identifying $k_0$ according to Algorithm~\ref{alg::find_k_0} with weights $q(t) := \mu(t)$ and subtracting the number of subsets used for the deterministic part.
We then sample the remaining subsets according to $p(T) \propto \mu(t)$ according to Algorithm~\ref{alg::sample_T}.

Given the collection of $K$ subsets (deterministic and sampled), we solve an approximated weighted least square objective as
\begin{align*}
    &\mathbb{E}_{T \sim p(T)}\left[\left(\nu(T)-\sum_{\substack{S \in \mathcal S_{s_0}\\S \subseteq T}}{\beta(S)}\right)^2\right]
    \\
    \approx &\sum_{T \in \mathcal T_{k_0}}p(T)\left(\nu(T)-\sum_{\substack{S \in \mathcal S_{s_0}\\S \subseteq T}}{\beta(S)}\right)^2 
    + p(T \in \mathcal T_{k_0})\mathbb{E}_{T \sim p_{k_0}(T)}\left[\left(\nu(T)-\sum_{\substack{S \in \mathcal S_{s_0}\\S \subseteq T}}{\beta(S)}\right)^2\right],
\end{align*}
where $k_0$ is found similar to SHAP-IQ and $p_{k_0}$ is a probability distribution over $\mathcal T_{k_0}$ with $p_{k_0}(T) \propto \mu(t)$, which is related to $p$ with $p_{k_0}(T) = p(T)/p(T \in \mathcal T_{k_0})$.
The expectation over $p_{k_0}$ is then found by Monte Carlo integration.
Approximating this objective yields a weighted sum of sampled subsets that approximates the weighted least square problem.
This approximated least-square problem is then computed explicitly using

\begin{equation}
    \hat I^{\text{FSI}} = (\mathbf{Z}^T \mathbf{W} \mathbf{Z})^{-1} \mathbf{Z}^T \mathbf{W} \mathbf{y},
\end{equation}
where $Z \in \{0,1\}^{K\times d_{s_0}}$ is a matrix that represents a binary encoding for each sampled subset where an entry in column $S \in \mathcal S_{s_0}$ is equal to one, if the subset contains $S$ and zero otherwise.
The matrix $W \in \mathbb{R}^{K \times K}$ contains the weights for each subset on the diagonal, e.g. $p(T)$ for subsets of the deterministic part or $p(T\in \mathcal T_{k_0})/m$ for subsets of the sampled part, where $m$ refers to the number of sampled subsets for Monte Carlo integration.
The vector $\mathbf{y}$ consists of all model evaluations $\nu_0(T)$, where $T$ is in the collection of subsets.
To include the optimization constraint, we add $\fset$ to the collection with weight set to a high positive constant (mimicking infinite).
The algorithm is outlined in Algorithm~\ref{alg::fsi_baseline}.

\begin{algorithm}[ht!]
  \caption{Kernel-based approximation of FSI \cite{Lundberg_Lee_2017,Tsai_Yeh_Ravikumar_2022}}
    \begin{algorithmic}[1]\label{alg::fsi_baseline}
	\REQUIRE maximum interaction order $s_0$, budget $K$, high constant $c_0 >> 0$.
     \STATE Weight vector $w[T]$ with one row and column per distinct subset $T$.
     \STATE Binary subset matrix $Z[T,S]$ with one row per distinct subset $T$ and one column per interaction subset $S$.
     \STATE Model evaluation vector $y[T]$ with model evaluations $\nu(T)$ per distinct subset $T$.
     \STATE Index array: $I$ per distinct subset $T$
     \STATE \textbf{Initialize constraints}
    \STATE $w \gets \textsc{\texttt{Append}}(w,c_0)$
    \STATE $Z \gets \textsc{\texttt{AppendRow}}(Z,\mathbf{1}^T)$ with $\mathbf{1}^T$ of length $d_{s_0}$.
    \STATE $y \gets \textsc{\texttt{Append}}(y,\nu_0(\fset))$.
    \STATE $K \gets K-1$.
    \STATE \textbf{Deterministic Part}
    \FOR[Initialize subset size probabilities as $\mathbb{P}_1(\vert T \vert = t)$]{$t=1,\dots,d-1$}
        \STATE $p(t) \gets q(t)/\left(\sum_{k=1}^{\fnum-1}q(k)\binom{\fnum}{k}\right)$
    \ENDFOR   
    \STATE $k_0 \gets {\textsc{\texttt{getSamplingOrder}}}(q,K)$
    \FOR[Deterministic]{$T \in \mathcal T_1$ and $T \notin \mathcal T_{k_0}$}
        \STATE $Z \gets \textsc{AppendRow}(Z,\textsc{\texttt{Binary}}(T))$
        \STATE $y \gets \textsc{\texttt{Append}}(y,\nu_0(T))$
        \STATE $w \gets \textsc{\texttt{Append}}(w,p(t)/\binom{\fnum}{t})$ \COMMENT{Weight with probability $\mathbb P_1(T) = \mathbb{P}_1(\vert T \vert = t)/\binom{\fnum}{t}$}
        \STATE $K \gets K -1$
    \ENDFOR
    \STATE \textbf{Sampling Part}
    \STATE $w_0 \gets sum(p(t))$ for $k_0\leq t \leq \fnum-k_0$. \COMMENT{Remaining probability weight}   
    \FOR[Sampling]{$k=1,\dots,K$}
        \STATE $T \gets \textsc{\texttt{Sample}}(p,k_0)$
        \IF{$T \in I$}
        \STATE $Z \gets \textsc{\texttt{AppendRow}}(Z,\textsc{\texttt{Binary}(T)})$.
        \STATE $y \gets \textsc{\texttt{Append}}(y,\nu_0(T))$
        \STATE $w \gets \textsc{\texttt{Append}}(w,1)$
        \STATE $I.\textsc{\texttt{AddIndex}}(T)$
        \ELSE
        \STATE $w[I[T]] \gets w[I[T]] + 1$
        \ENDIF
    \ENDFOR
    \STATE $w[I[T]] \gets w[I[T]] \cdot w_0 / K$ for all $T \in I$ \COMMENT{Rescaling}
    \STATE $W \gets \textsc{\texttt{diag}}(w)$ \COMMENT{Diagonal matrix with diagonal $w$}
    \STATE FSI $\gets \textsc{\texttt{SolveWLS}}(Z,W,y)$.
	\STATE \textbf{return} FSI
    \end{algorithmic}
\end{algorithm}

\subsubsection{Computational Complexity of Baseline Methods}\label{appx::computational-complexity}
To evaluate one permutation for STI, the PB algorithm requires $2^s$ model evaluations per interaction, i.e. in total $\binom{\fnum}{s_0} \cdot 2^{s_0}$ for all top-order interactions.
With each evaluated permutation all interaction estimates can be updated.
For lower order interactions, STI requires to compute model evaluations for all subsets with $\vert S \vert \leq s_0$.
For SII, the complexity is $(\fnum-s+1)\cdot 2^s$ per permutation and only interaction estimates with $S \in \pi$ can be updated per permutation, i.e. $\fnum-s+1$ interaction estimates with one permutation.
This constitutes a significant drawback over the PB approximations for SV, which iterates only once through the permutation requiring $\fnum-1$ evaluations to update all estimates of the SV.
In contrast, as for SV, the KB approach of FSI allows to update all interaction estimates using \emph{one single} model evaluation.
However, the KB approach of FSI always requires to estimate \emph{all} interactions with order $s \leq s_0$ and its computational effort increases non-linear with the number of subsets used, as solving the weighted least square problem requires inverting a $K \times d_{s_0}$ matrix.

\subsection{Further Information about the Models}
\label{appx::games-information}

This section contains a detailed information about the models (SOUM, LM, and ICM) used in our experiments.

\subsubsection{Sum of Unanimity Model (SOUM)}
\begin{definition}[Sum of Unanimity Model (SOUM)]
For $N$ subsets $Q_1,\dots,Q_N \subseteq \fset$ and coefficients $a_1,\dots,a_N \in \mathbb{R}$ the \emph{sum of unanimity model (SOUM)} is defined as
\begin{equation*}
\nu(T) := \sum_{n=1}^N a_n \mathbf{1}(Q_n \subseteq T).
\end{equation*}
\end{definition}

For SOUMs, it is possible to efficiently compute the ground-truth values for CII.
\begin{proposition}[Ground-truth values for SOUM]\label{appx::ground_truth_unanimity}
For a SOUM, it holds
\begin{equation*}
I^m_\nu (S) = \sum_{n=1}^N a_n \omega(q_n,\vert S \cap Q_n \vert),
\end{equation*}
with 
\begin{align*}
 \omega(q,r) = \sum_{t=q}^\fnum \sum_{k=0}^{k_{max}(r)} \binom{\fnum-q-(s-r)}{t-q-k} \binom{s-r}{k} \gamma^m_s(t,k + r)
\end{align*}
and $k_{max}(r) := \min(t-q,s-r)$
\end{proposition}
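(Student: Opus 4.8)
The plan is to combine the linearity of the CII with the alternative representation from Theorem~\ref{thm::si}, which reduces the statement to a single unanimity game and then to a purely combinatorial counting argument. Writing $\nu_{Q}(T) := \mathbf{1}(Q \subseteq T)$, we have $\nu = \sum_{n=1}^N a_n \nu_{Q_n}$, and since $I^m$ is linear in $\nu$ (Definition~\ref{def_SI}) it suffices to prove $I^m_{\nu_Q}(S) = \omega(q,\vert S \cap Q\vert)$ for an arbitrary single unanimity game, and then sum against the $a_n$.

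First I would dispose of the degenerate case $Q = \emptyset$: then $\nu_\emptyset \equiv 1$ is a constant game with all players dummy, so $I^m_{\nu_\emptyset}(S) = 0$, while on the other side $\omega(0,0) = \sum_{T \subseteq \fset}\gamma^m_s(t,\vert T\cap S\vert) = 0$ by the same dummy-axiom identity already invoked in the proof of Theorem~\ref{thm::si} (the coefficient $\binom{\fnum-s}{t-k}\binom{s}{k}$ counts exactly the $T$ with $\vert T\cap S\vert = k$). Hence we may assume $Q \neq \emptyset$, so that $\nu_Q(\emptyset) = 0$, i.e.\ $\nu_{0,Q} = \nu_Q$. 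Theorem~\ref{thm::si} then gives
\begin{equation*}
I^m_{\nu_Q}(S) = \sum_{T \subseteq \fset}\mathbf{1}(Q \subseteq T)\,\gamma^m_s(t,\vert T\cap S\vert) = \sum_{T \supseteq Q}\gamma^m_s(t,\vert T\cap S\vert).
\end{equation*}

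Next I would reparametrize the sum over supersets of $Q$. Write $T = Q \cup T'$ with $T' \subseteq \fset \setminus Q$, so $t = q + \vert T'\vert$, and set $r := \vert S \cap Q\vert$; since $T'$ is disjoint from $Q$ one has $\vert T \cap S\vert = r + \vert T' \cap (S\setminus Q)\vert$, and $\vert S\setminus Q\vert = s-r$. Grouping the terms first by $t$ (equivalently $\vert T'\vert = t-q$, so $q \le t \le \fnum$) and then by $k := \vert T' \cap (S\setminus Q)\vert$ (which ranges over $0 \le k \le \min(t-q,\,s-r) = k_{max}(r)$), the number of admissible $T'$ is obtained by choosing the $k$ elements inside $S\setminus Q$ and the remaining $t-q-k$ elements from $\fset \setminus (Q \cup S)$, a set of size $(\fnum - q) - (s-r)$; this gives the factor $\binom{s-r}{k}\binom{\fnum-q-(s-r)}{t-q-k}$, and every such $T'$ contributes the identical weight $\gamma^m_s(t,k+r)$. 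Summing reproduces exactly $\omega(q,r)$, and linearity yields $I^m_\nu(S) = \sum_{n=1}^N a_n \omega(q_n,\vert S\cap Q_n\vert)$.

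The only delicate part is the bookkeeping in the last step: checking that $t$ runs from $q$ to $\fnum$, that $k$ is capped at $\min(t-q,s-r)$, and that the second binomial coefficient indeed counts selections from $\fset\setminus(Q\cup S)$ (whose size is $(\fnum-q)-(s-r)$, not $\fnum-q-s+r$ computed carelessly). There is no genuine obstacle beyond this accounting — the conceptual content is fully carried by Theorem~\ref{thm::si} and linearity of $I^m$.
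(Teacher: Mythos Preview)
Your proposal is correct and follows essentially the same approach as the paper: reduce by linearity to a single unanimity game $\nu_Q$, apply Theorem~\ref{thm::si}, and then count the supersets $T \supseteq Q$ grouped by $t = \vert T\vert$ and $k = \vert (T\cap S)\setminus(Q\cap S)\vert$ to obtain the two binomial factors. Your explicit treatment of the degenerate case $Q=\emptyset$ is a small additional care not spelled out in the paper (where $\nu_{0,Q}=\nu_Q$ is used tacitly), but otherwise the argument and the bookkeeping coincide.
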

\begin{proof}
Due to the linearity of the CII, it suffices to compute the CII for $\nu_Q(T) := \mathbf{1}(Q \subseteq T)$.
By Theorem \ref{thm::si}, we have
\begin{align*}
I_{\nu_Q}^m(S) &= \sum_{T \subseteq \fset} \mathbf{1}(Q \subseteq T) \gamma^m_s(t,\vert T \cap S \vert) 
\\
&= \sum_{t=q}^\fnum \sum_{k=0}^{k_{max}} \binom{\fnum-q-(s-\vert S \cap Q \vert)}{t-q-k} \binom{s-\vert S \cap Q\vert}{k} \gamma^m_s(t,k + \vert S \cap Q \vert) 
\\
&=: \omega(q,\vert S \cap Q \vert),
\end{align*}
where we used that $Q\cap S \subseteq T \cap S$ due to $\mathbf{1}(Q \subseteq T)$ and $\vert T \cap S \vert = \vert S \cap Q \vert + (\vert T \cap S \vert ) \setminus (\vert S \cap Q \vert)$, where $k := \vert (T \cap S)\setminus (S \cap Q) \vert$ ranges from $0$ to $k_{max} := \min(t-q,s-\vert S \cap Q \vert)$.
Since $S \cap Q$ is fixed, we need to count the number of subsets $T$ of size $t$, given $k$, such that $\vert T \cap S\vert = \vert S \cap Q \vert +k$.
We count $\binom{s-\vert S \cap Q \vert}{k}$ ways to choose subsets of elements that are not in $S \cap Q$ but are in $S$.
Then $q-(s-\vert S \cap Q \vert)$ elements of $T$ are fixed.
We thus select from $\fnum-q-(s-\vert S \cap Q \vert)$ elements exactly $t-q-k$ elements, as $q$ and $k$ elements are already contained in $T$.

Finally, the CII value is given as
\begin{equation*}
I^m_\nu (S) = \sum_{n=1}^N a_n \omega(q_n,\vert S \cap Q_n \vert),
\end{equation*}
where the weights $\omega$ can be precomputed with $\vert S \cap Q_n \vert \in \{0,\dots,s\}$.
\end{proof}

\subsubsection{Language Model (LM)}
For a language model (LM), we use a fine-tuned version of the DistilBERT\footnote{The model can be found at \url{https://huggingface.co/dhlee347/distilbert-imdb}.} transformer architecture \cite{Sanh.2019} on movie review sentences from the original \emph{IMDB} dataset \cite{Maas.2011} for sentiment analysis, i.e. $\nu$ has values in $[-1,1]$.
The IMDB data stems from the \emph{dataset} library \cite{Lhoest_Datasets_A_Community_2021}.
In the LM, for a given sentence, different feature coalitions are computed by masking absent features in a tokenized sentence.
The implementation is based on the \emph{transformers} API \cite{Wolf_Transformers_State-of-the-Art_Natural_2020}.

\subsubsection{Image Classifier (ICM)}
The image classification model (ICM) is a ResNet18 \cite{resnet18} pre-trained on ImageNet \cite{ImageNet} as provided by \emph{torch} \cite{torch.2017}.
We randomly sample $50$ images from ImageNet \cite{ImageNet} and explain the prediction of the highest probability class given the original image.
To obtain the prediction of different coalitions, we pre-compute super-pixels with SLIC \cite{SLIC,scikit-image} to obtain a function on $\fnum=14$ features and apply mean imputation on absent features.

\subsection{Hardware Details and Environmental Impact of the Experiments}

Running the experiments required approximately $2\,000$ CPU hours in total.
The experiments concerning the approximation quality of SHAP-IQ compared to the baselines were run on an computation cluster on hyperthreaded Intel Xeon E5-2697 v3 CPUs clocking at with 2.6Ghz.
To further increase the efficiency of the experiments, the outputs of the LM and ICM were pre-computed given the powerset of all features. 
Around $1\,500$ CPU hours were consumed for these experiments on the cluster.
Before running the experiments on the cluster, the implementations were validated on a Dell XPS 15 9510 containing an Intel i7-11800H at 2.30GHz. 
For this and further small-scale experiments like the n-SII values approximately $500$ CPU hours were consumed.

\subsection{Further Experimental Results}

This section describes the further results and experiments omitted in the main body of the work.

\subsubsection{Approximation Quality of top Order Interactions.}
We further compute interaction scores for $s_0=1$, $s_0=2$, $s_0=3$, and $s_0=4$ of all three interaction indices SII, STI, and FSI on the LM.
We plot the MSE and Prec@10 based on $g=50$ independent iterations for these settings.
All results are summarized in Figure~\ref{fig:app_language}.
Moreover, we compute interaction scores for $s_0=1$, $s_0=2$, $s_0=3$, and $s_0=4$ of all three interaction indices SII, STI, and FSI on the ICM.
The MSE and Prec@10 based on $g=50$ independent iterations for these settings are shown in Figure~\ref{fig:app_icm}.
Further results to the plots show in Section~\ref{sec_experiments_different_ciis} for the SOUM are presented in Figure~\ref{fig:app_soum}.

\begin{figure}
    \centering
    \begin{minipage}[c]{0.32\textwidth}
         \includegraphics[width=\textwidth]{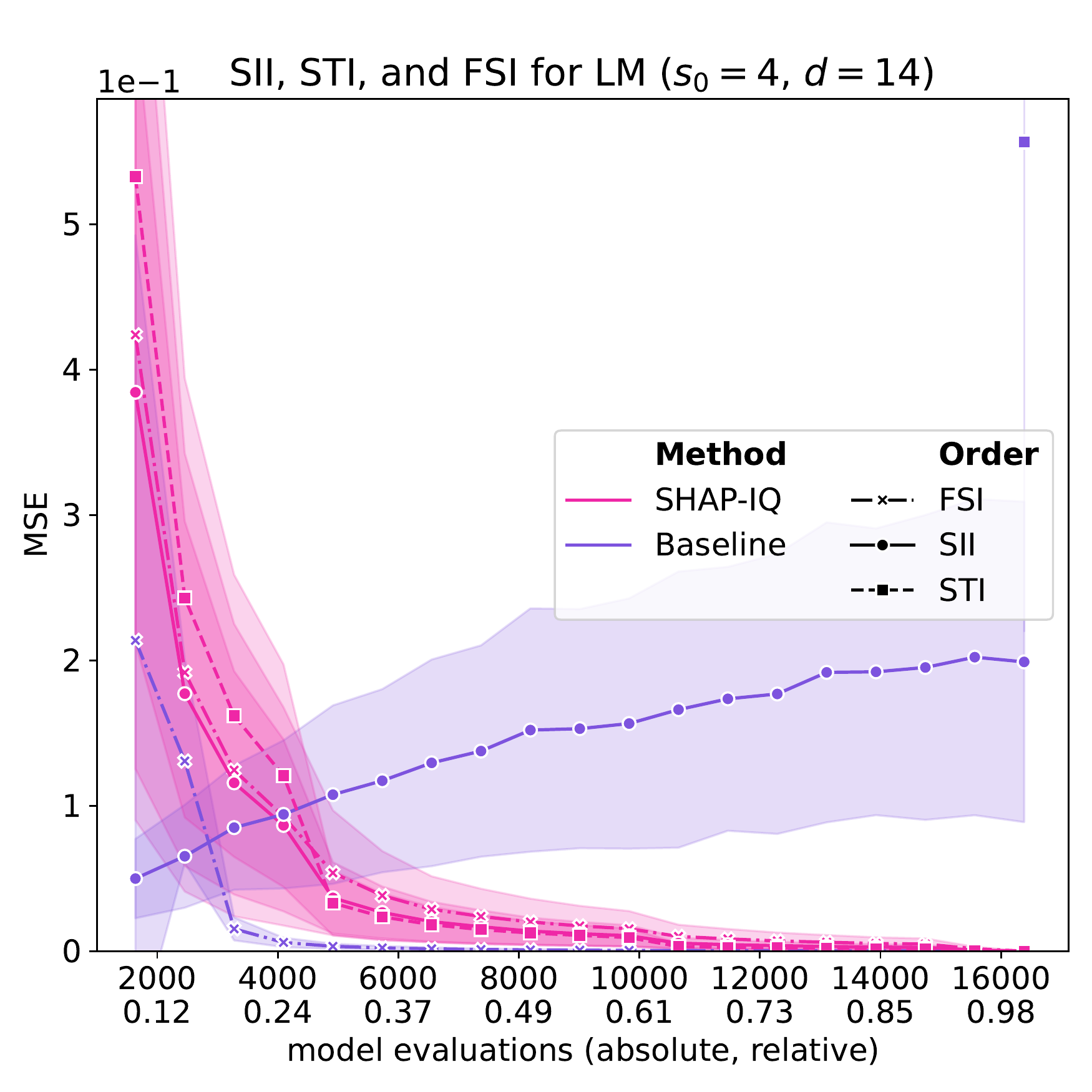}
     \end{minipage}
     \begin{minipage}[c]{0.32\textwidth}
         \includegraphics[width=\textwidth]{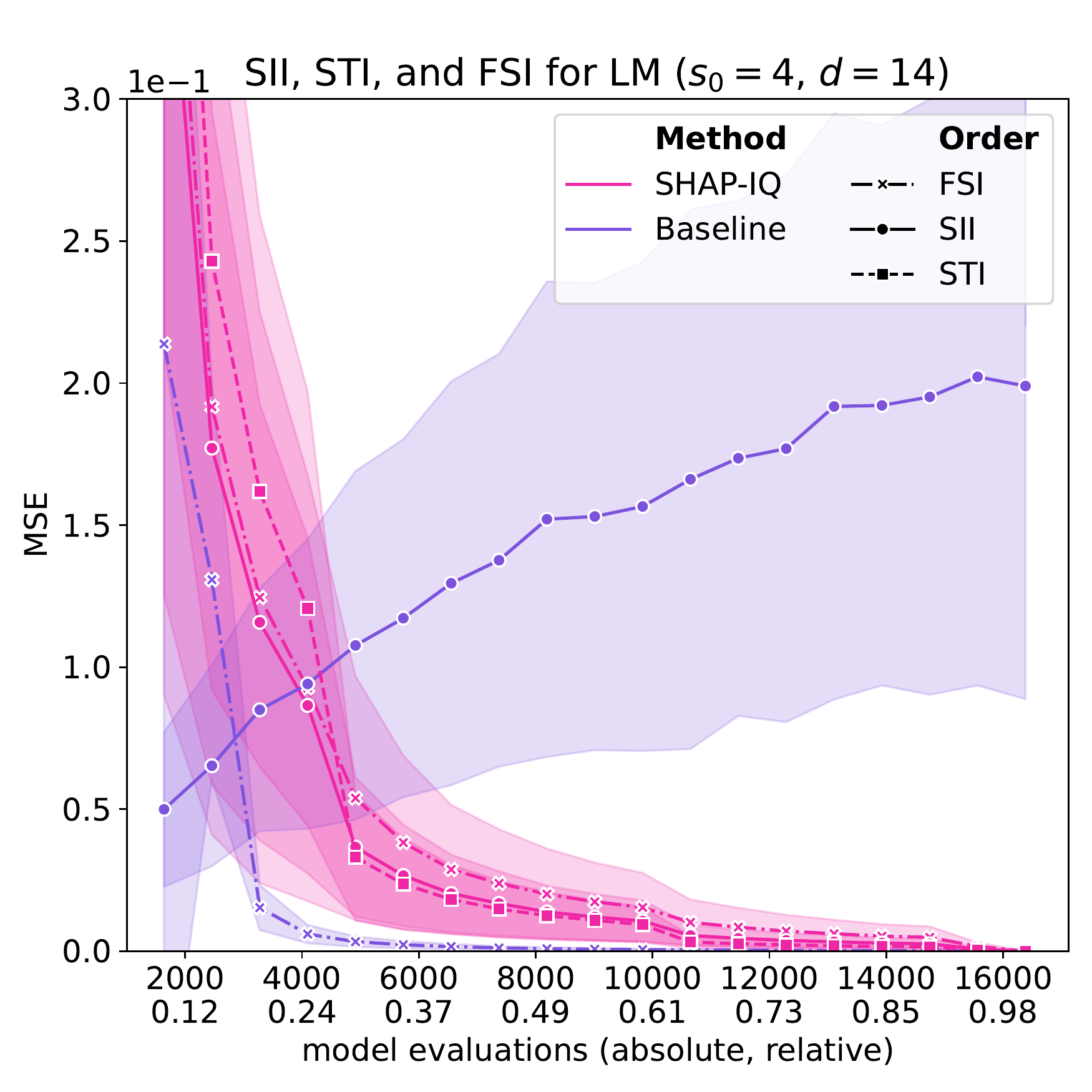}
     \end{minipage}
     \begin{minipage}[c]{0.32\textwidth}
         \includegraphics[width=\textwidth]{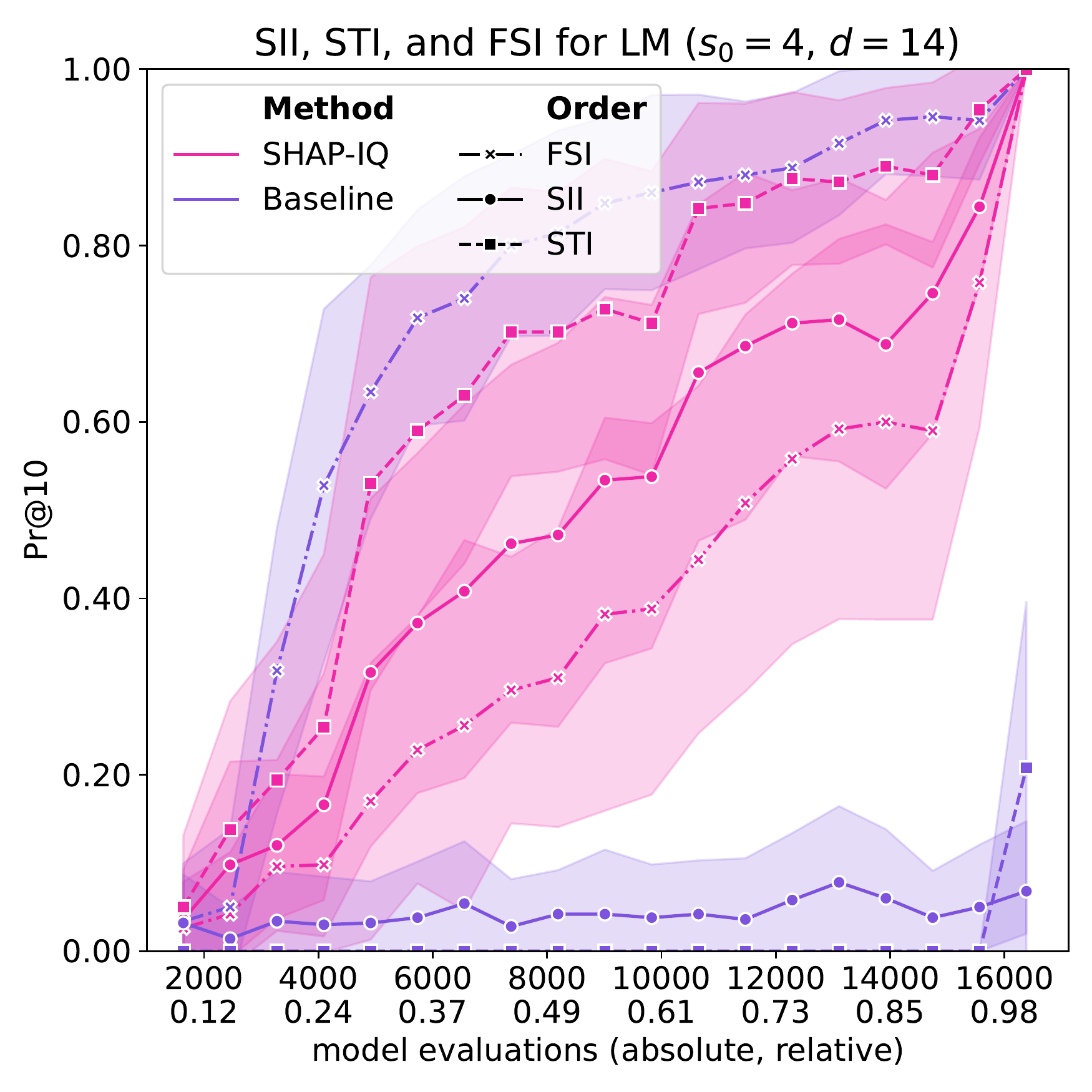}
     \end{minipage}
     \\
     \begin{minipage}[c]{0.32\textwidth}
         \includegraphics[width=\textwidth]{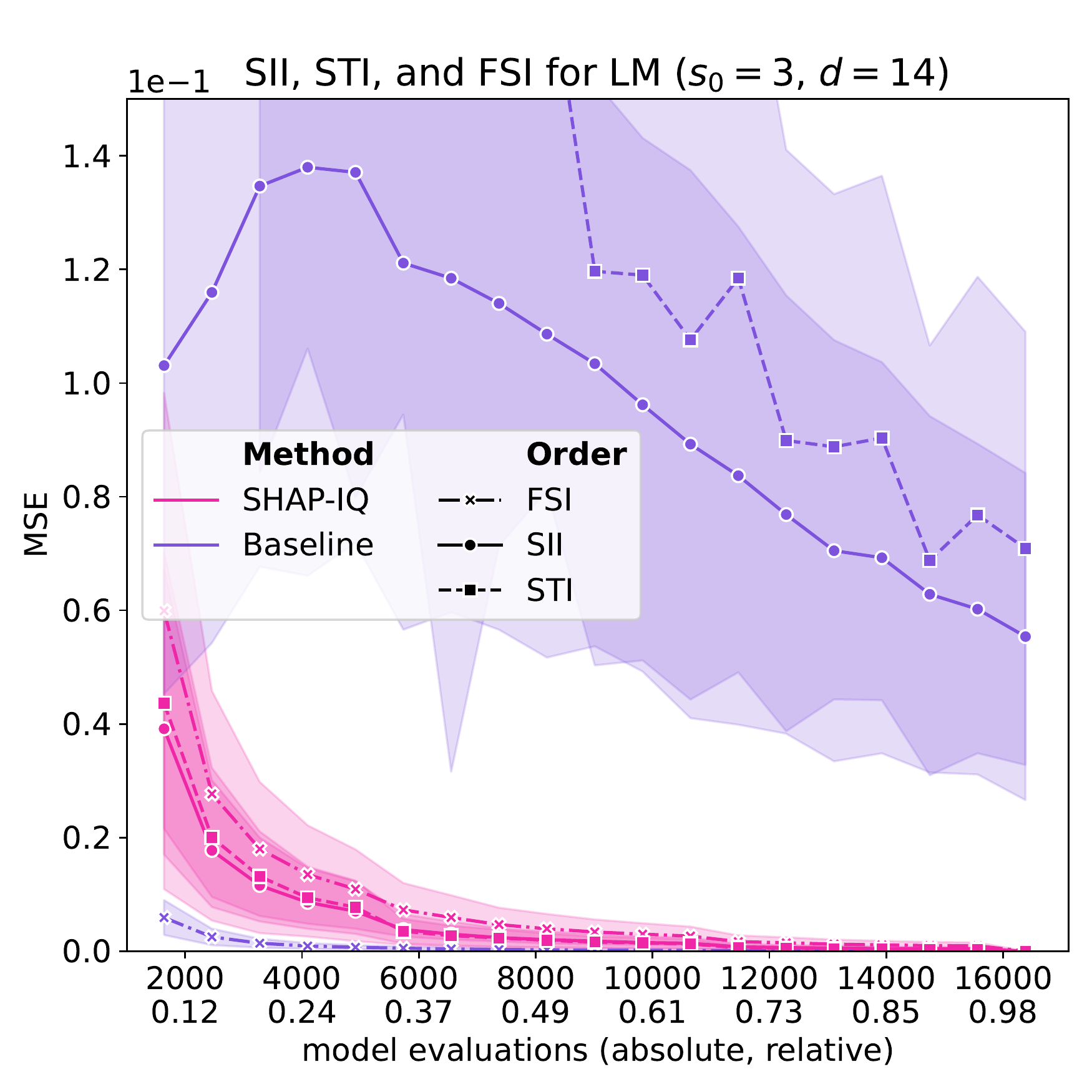}
     \end{minipage}
     \begin{minipage}[c]{0.32\textwidth}
         \includegraphics[width=\textwidth]{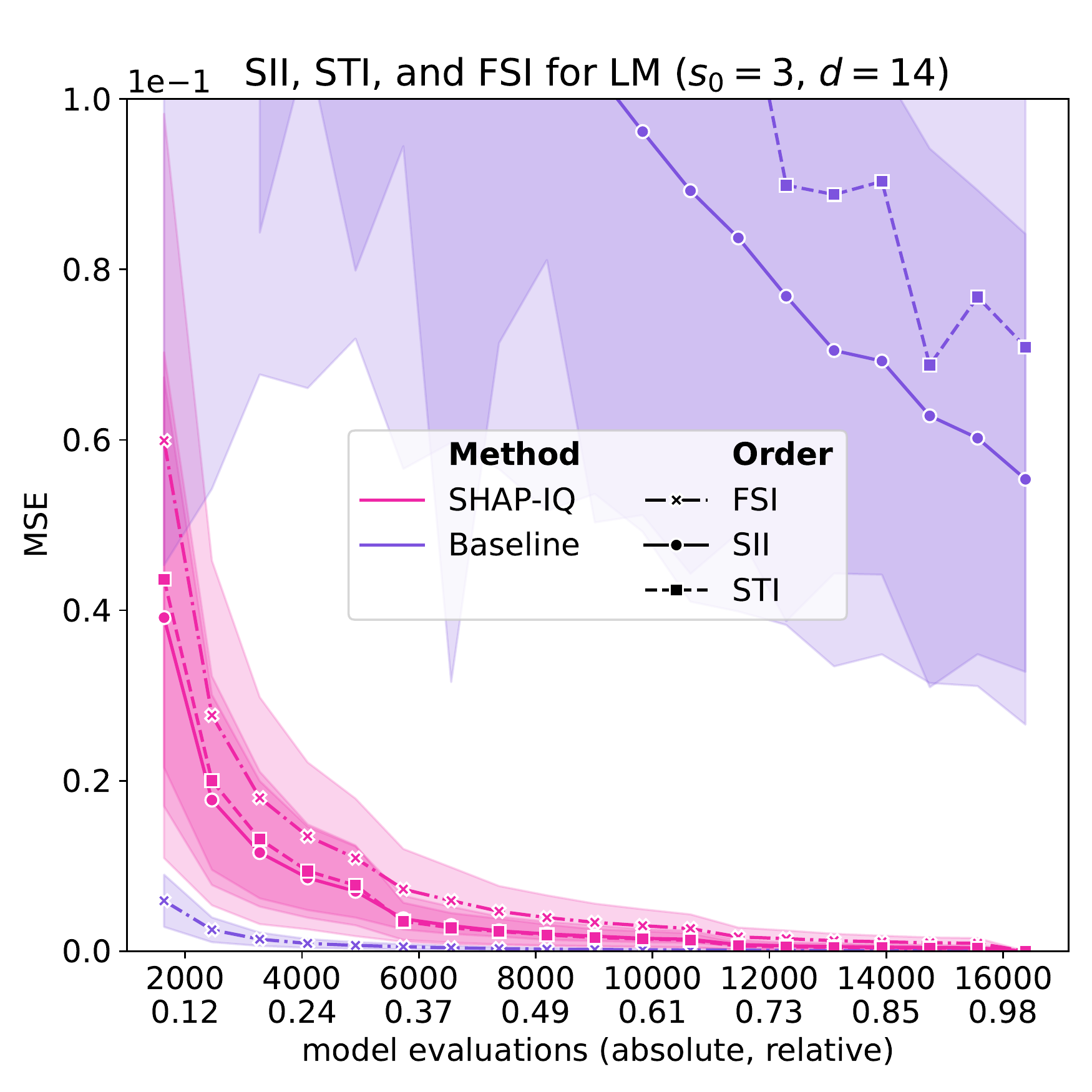}
     \end{minipage}
     \begin{minipage}[c]{0.32\textwidth}
         \includegraphics[width=\textwidth]{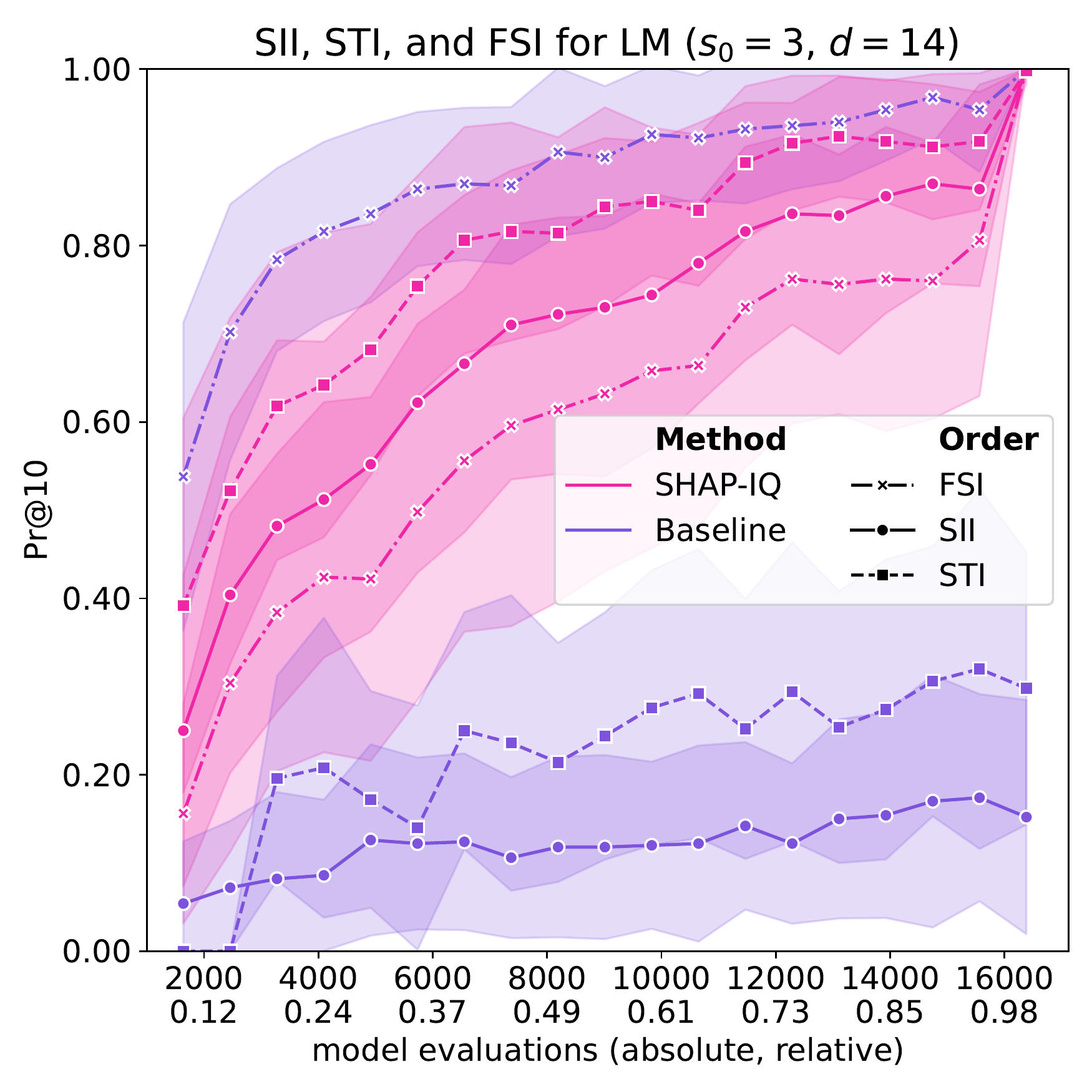}
     \end{minipage}
     \\
     \begin{minipage}[c]{0.32\textwidth}
         \includegraphics[width=\textwidth]{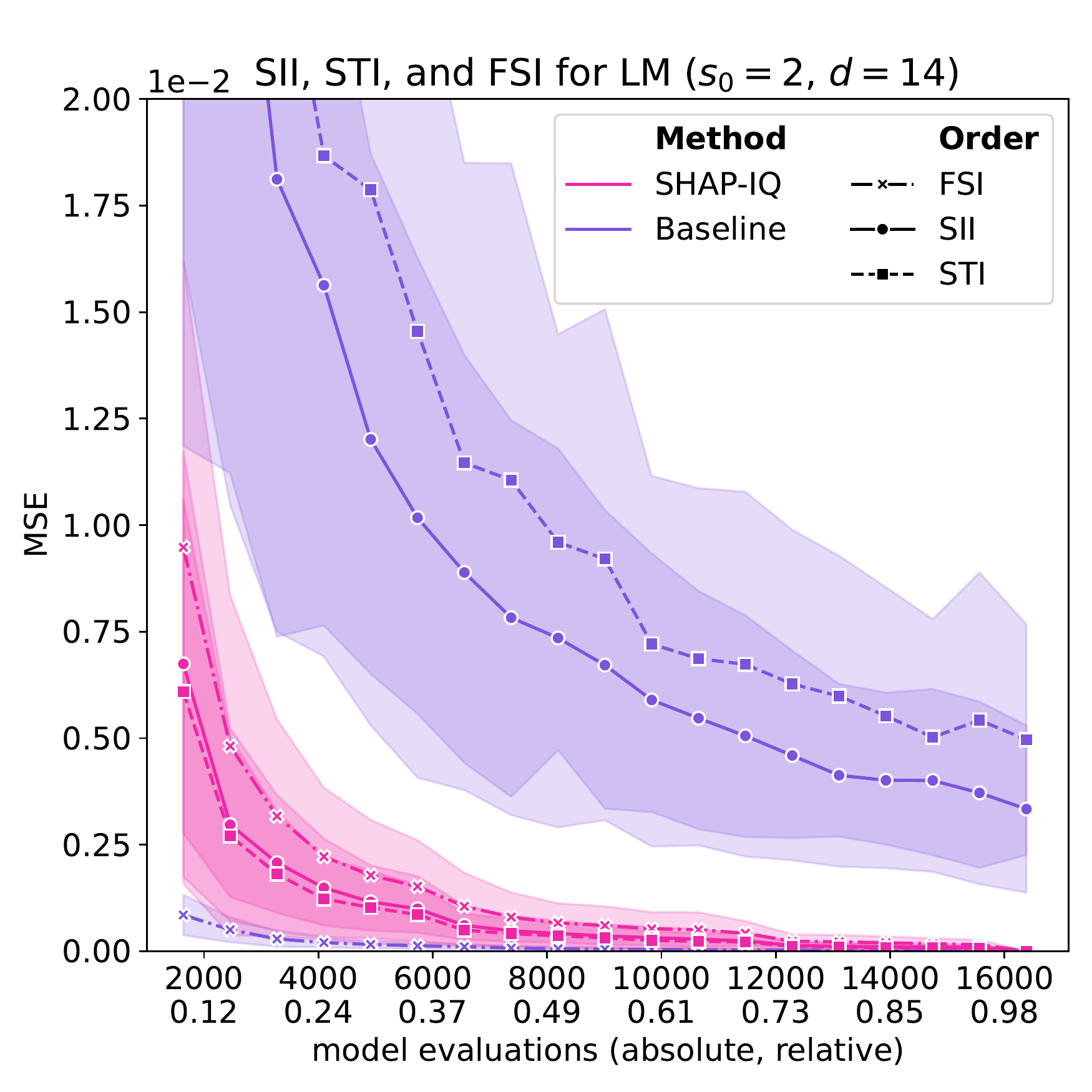}
     \end{minipage}
     \begin{minipage}[c]{0.32\textwidth}
         \includegraphics[width=\textwidth]{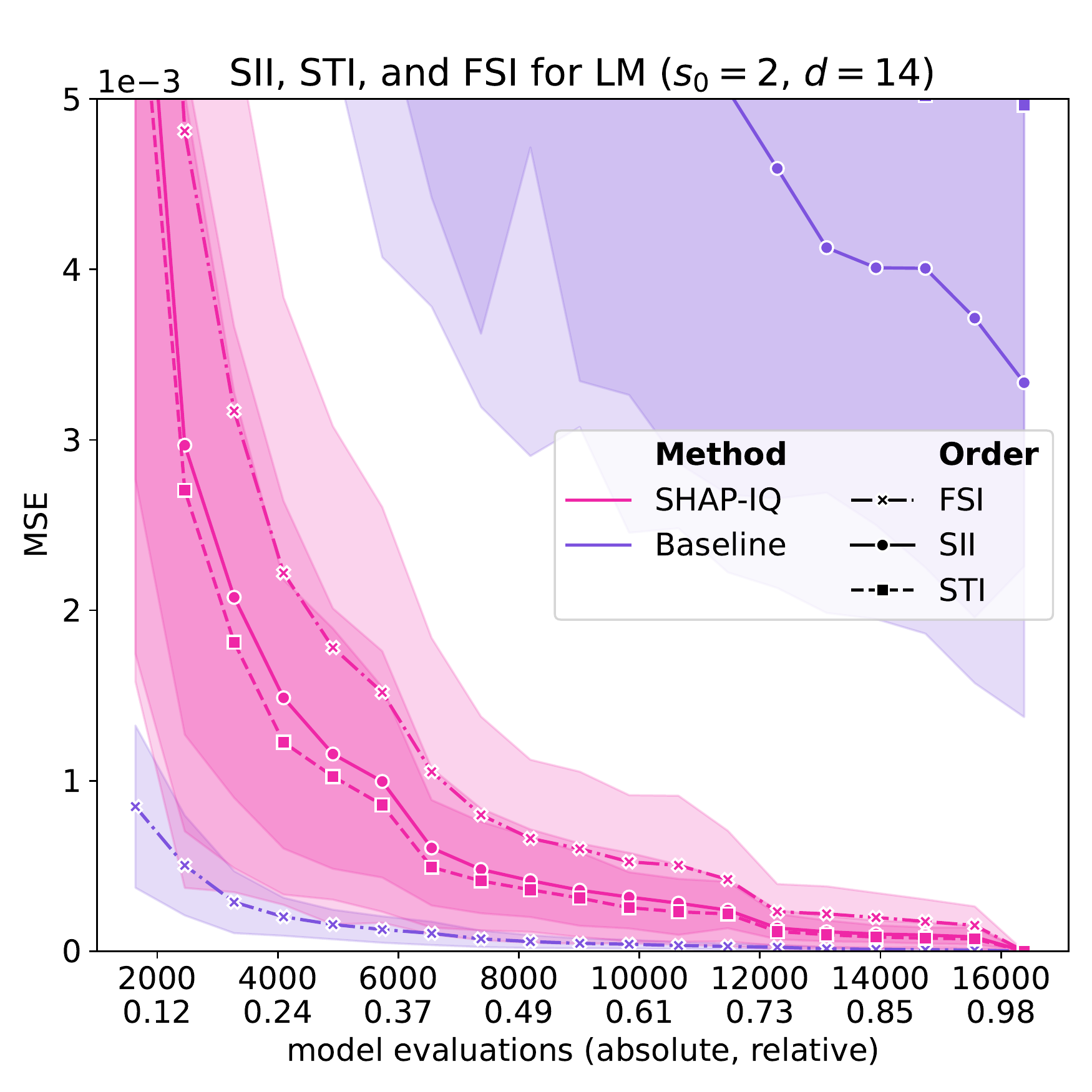}
     \end{minipage}
     \begin{minipage}[c]{0.32\textwidth}
         \includegraphics[width=\textwidth]{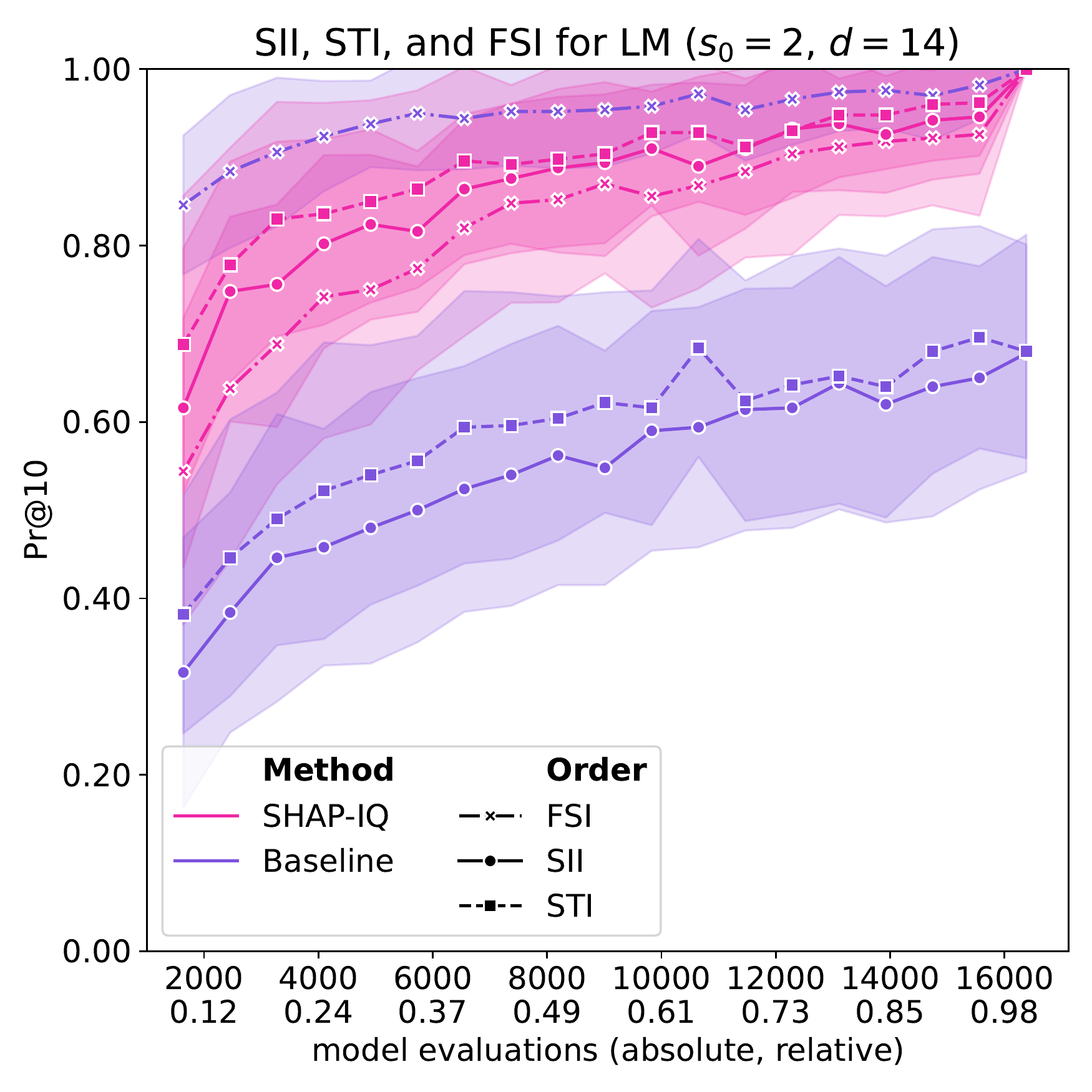}
     \end{minipage}
     \\
     \begin{minipage}[c]{0.32\textwidth}
         \includegraphics[width=\textwidth]{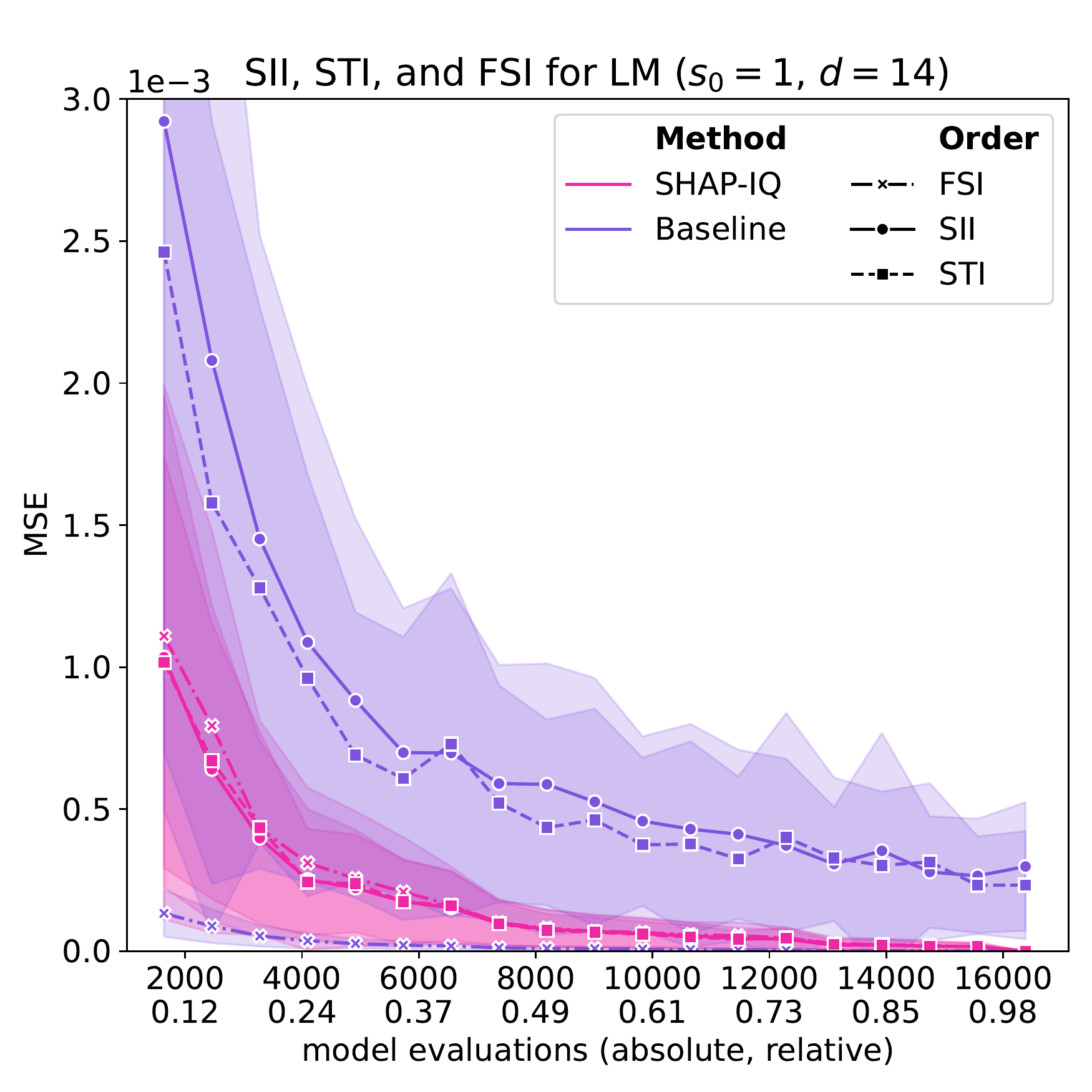}
     \end{minipage}
    \caption{Approximation Quality for LM with interaction order $s_0 = 4$ for $g = 50$ iterations (first row), with interaction order $s_0 = 3$ for $g = 50$ iterations (second row), with interaction order $s_0 = 2$ for $g = 50$ iterations (third row), and with interaction order $s_0 = 1$ (Shapley Value) for $g = 50$ iterations (fourth row).}
    \label{fig:app_language}
\end{figure}

\begin{figure}
    \centering
    \begin{minipage}[c]{0.32\textwidth}
         \includegraphics[width=\textwidth]{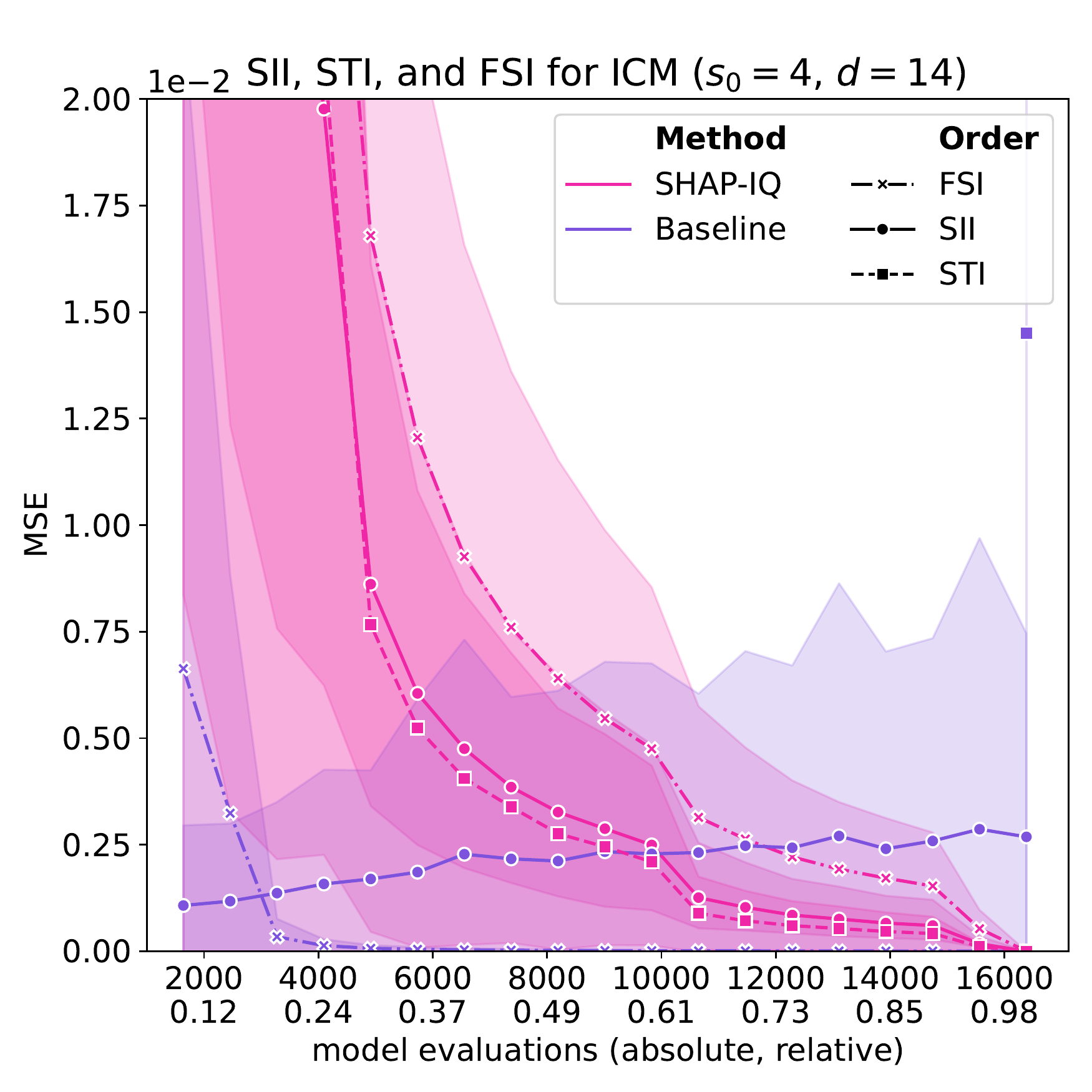}
     \end{minipage}
     \begin{minipage}[c]{0.32\textwidth}
         \includegraphics[width=\textwidth]{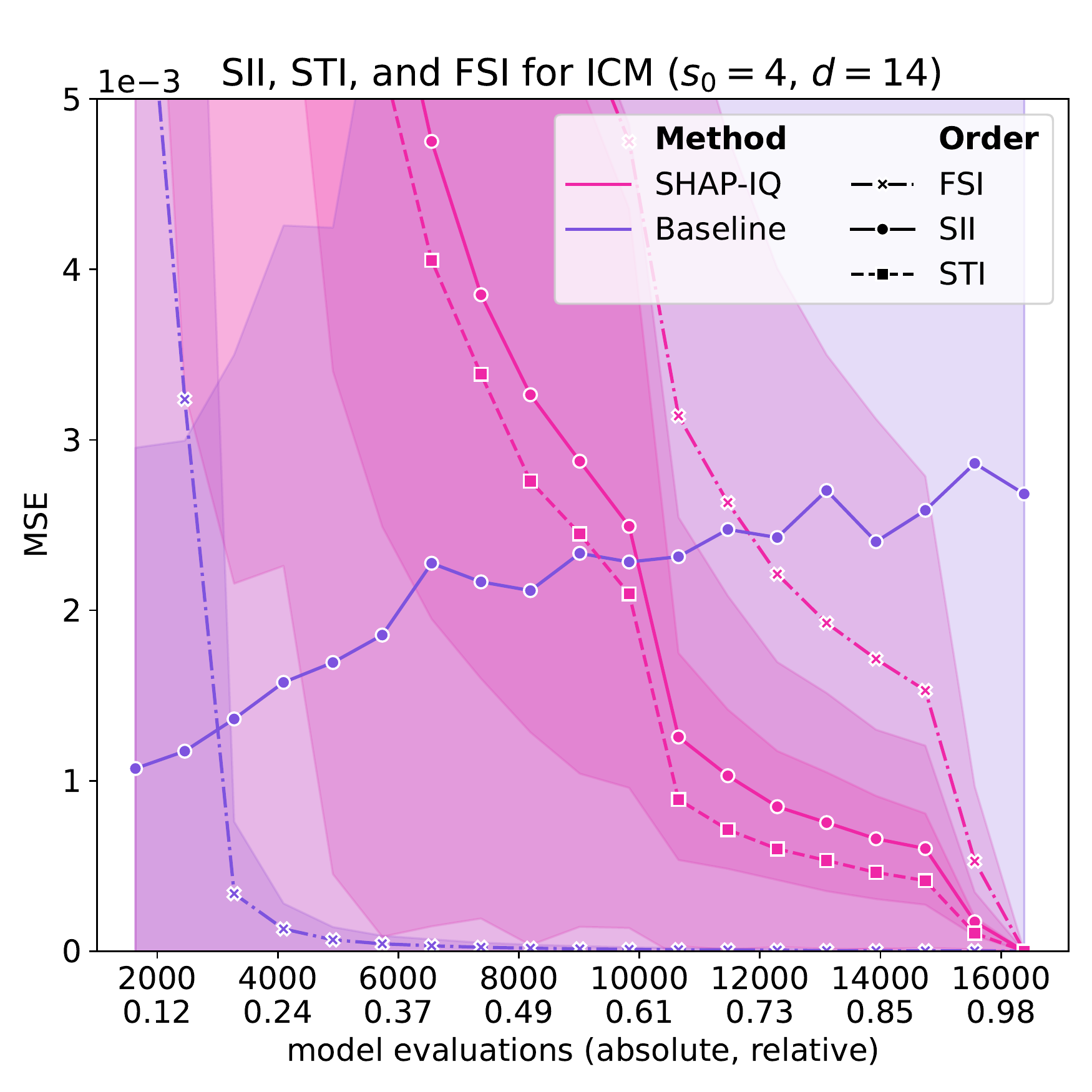}
     \end{minipage}
     \begin{minipage}[c]{0.32\textwidth}
         \includegraphics[width=\textwidth]{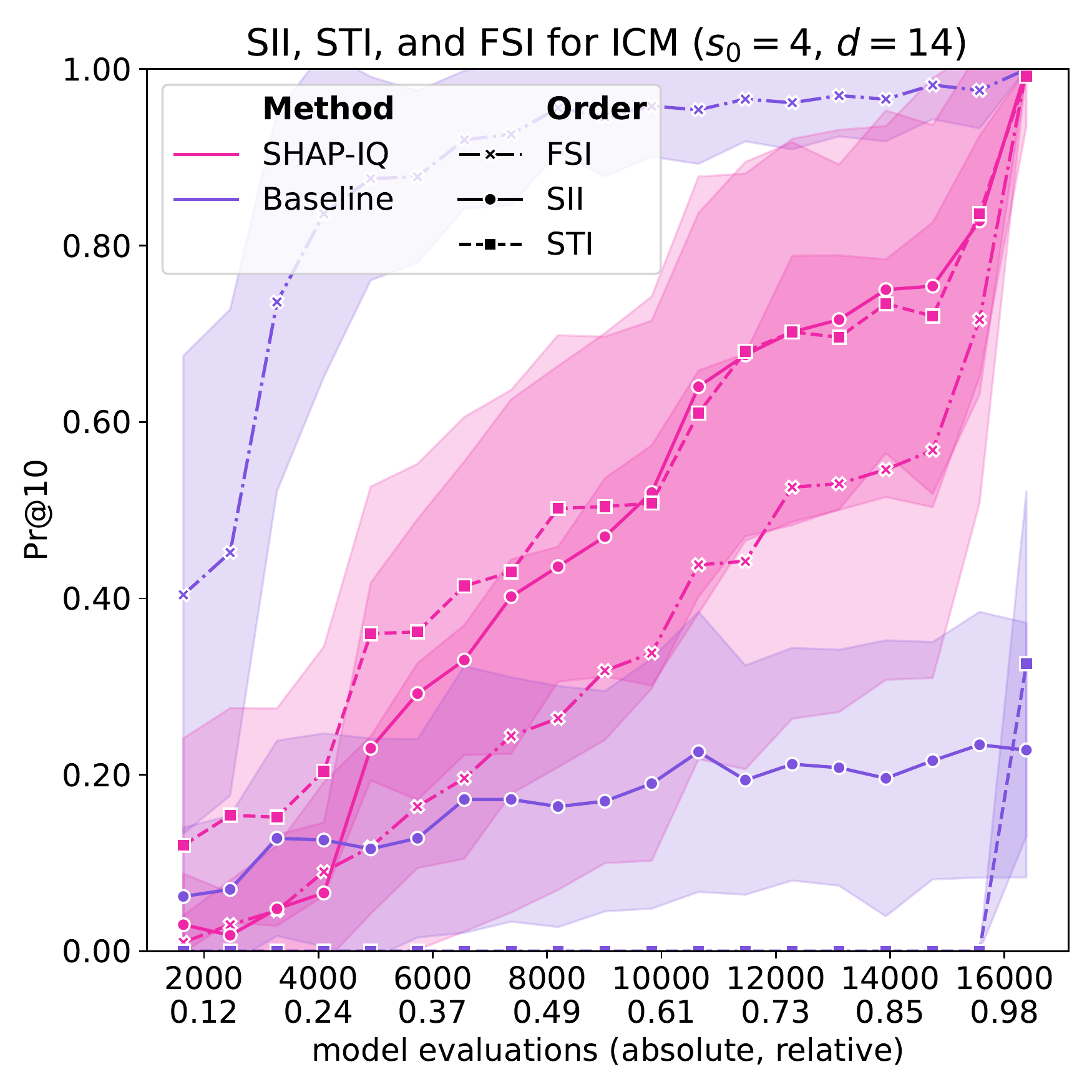}
     \end{minipage}
     \\
     \begin{minipage}[c]{0.32\textwidth}
         \includegraphics[width=\textwidth]{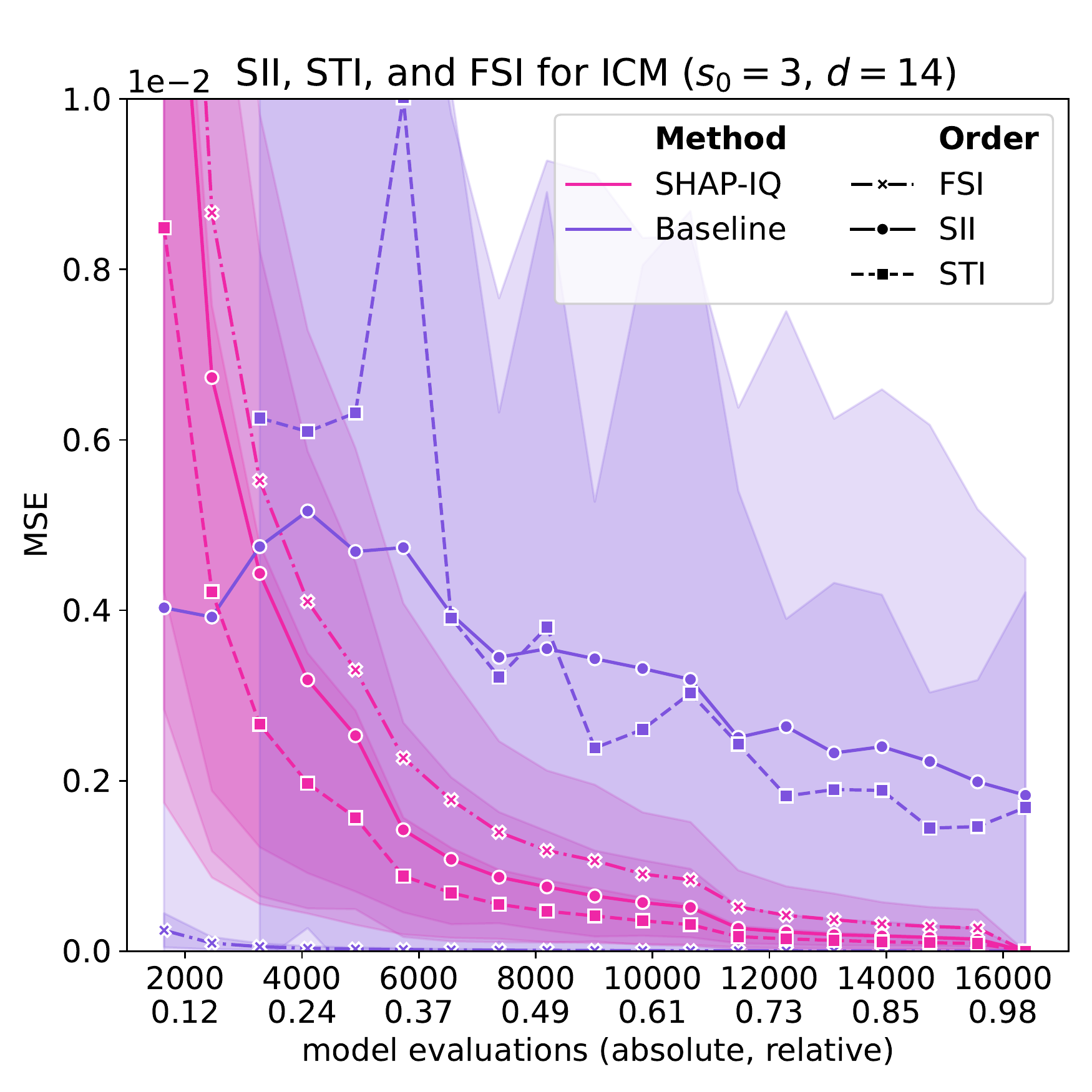}
     \end{minipage}
     \begin{minipage}[c]{0.32\textwidth}
         \includegraphics[width=\textwidth]{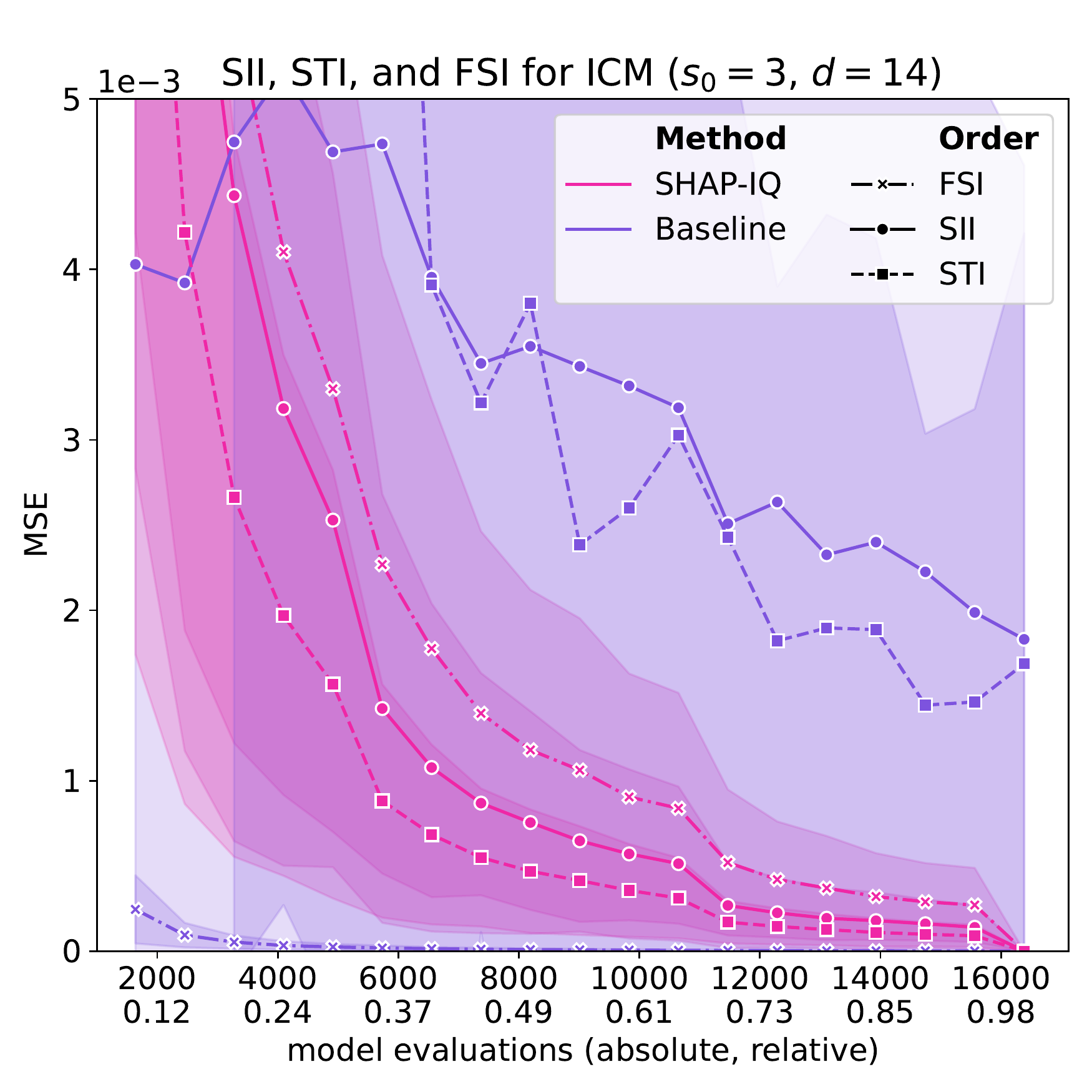}
     \end{minipage}
     \begin{minipage}[c]{0.32\textwidth}
         \includegraphics[width=\textwidth]{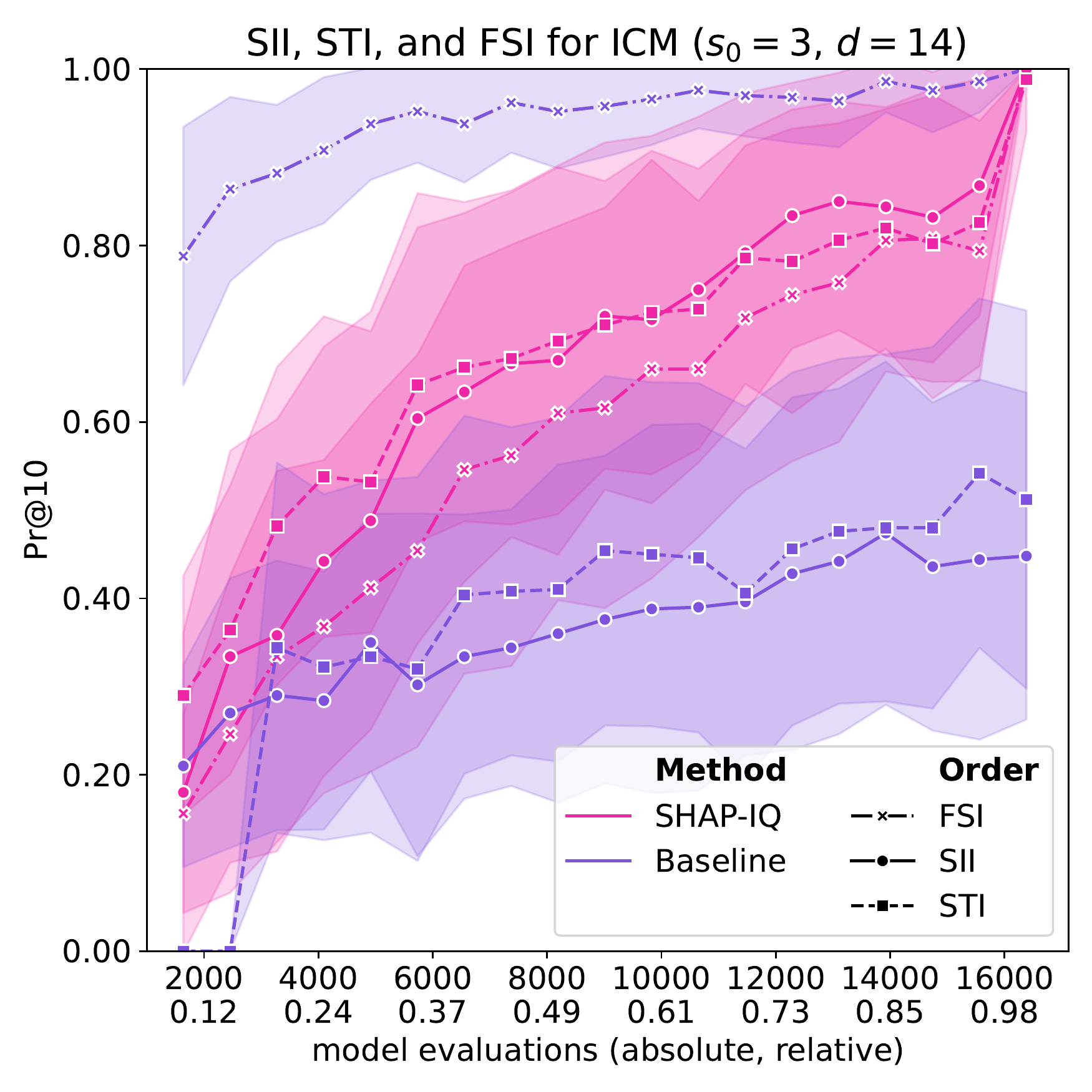}
     \end{minipage}
     \\
     \begin{minipage}[c]{0.32\textwidth}
         \includegraphics[width=\textwidth]{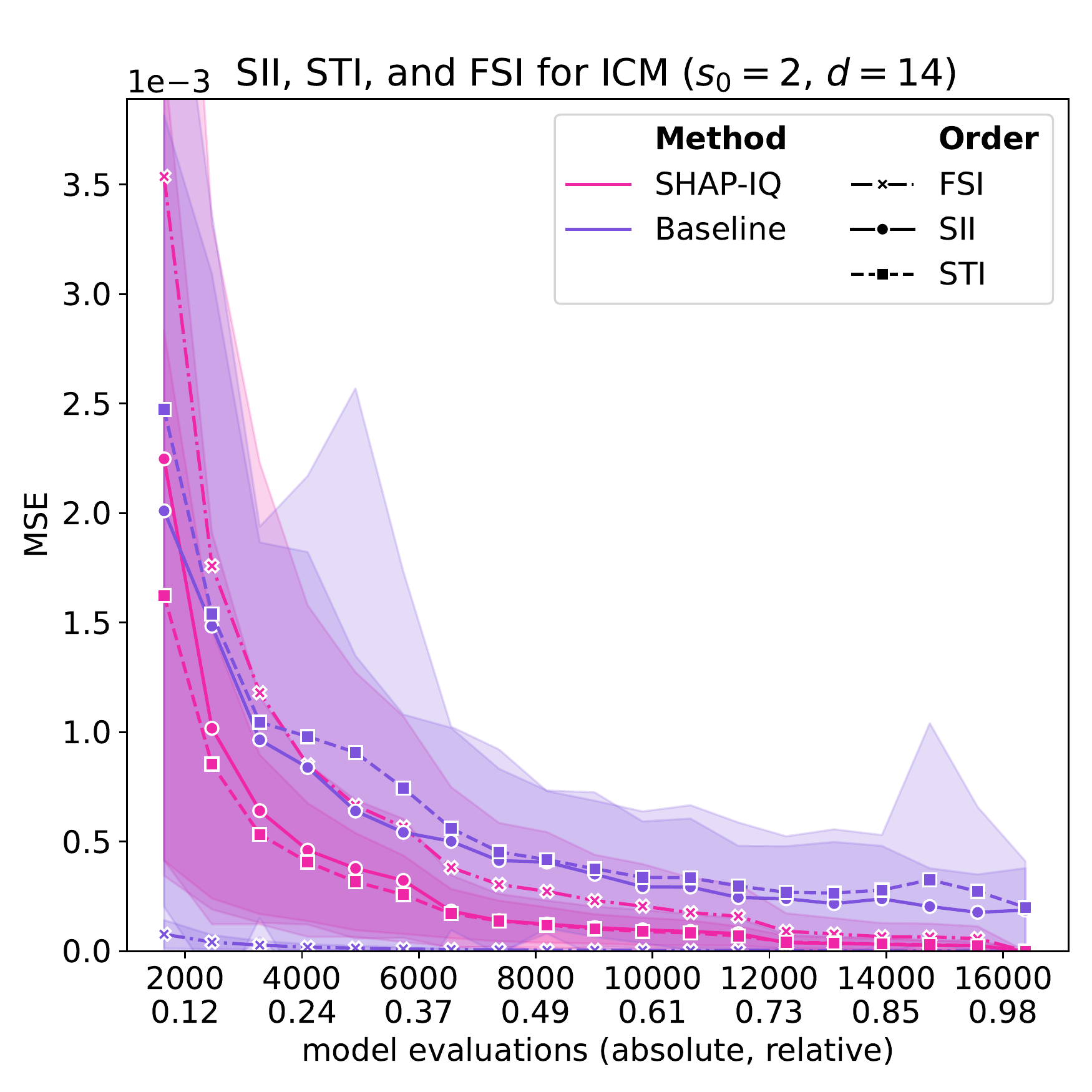}
     \end{minipage}
     \begin{minipage}[c]{0.32\textwidth}
         \includegraphics[width=\textwidth]{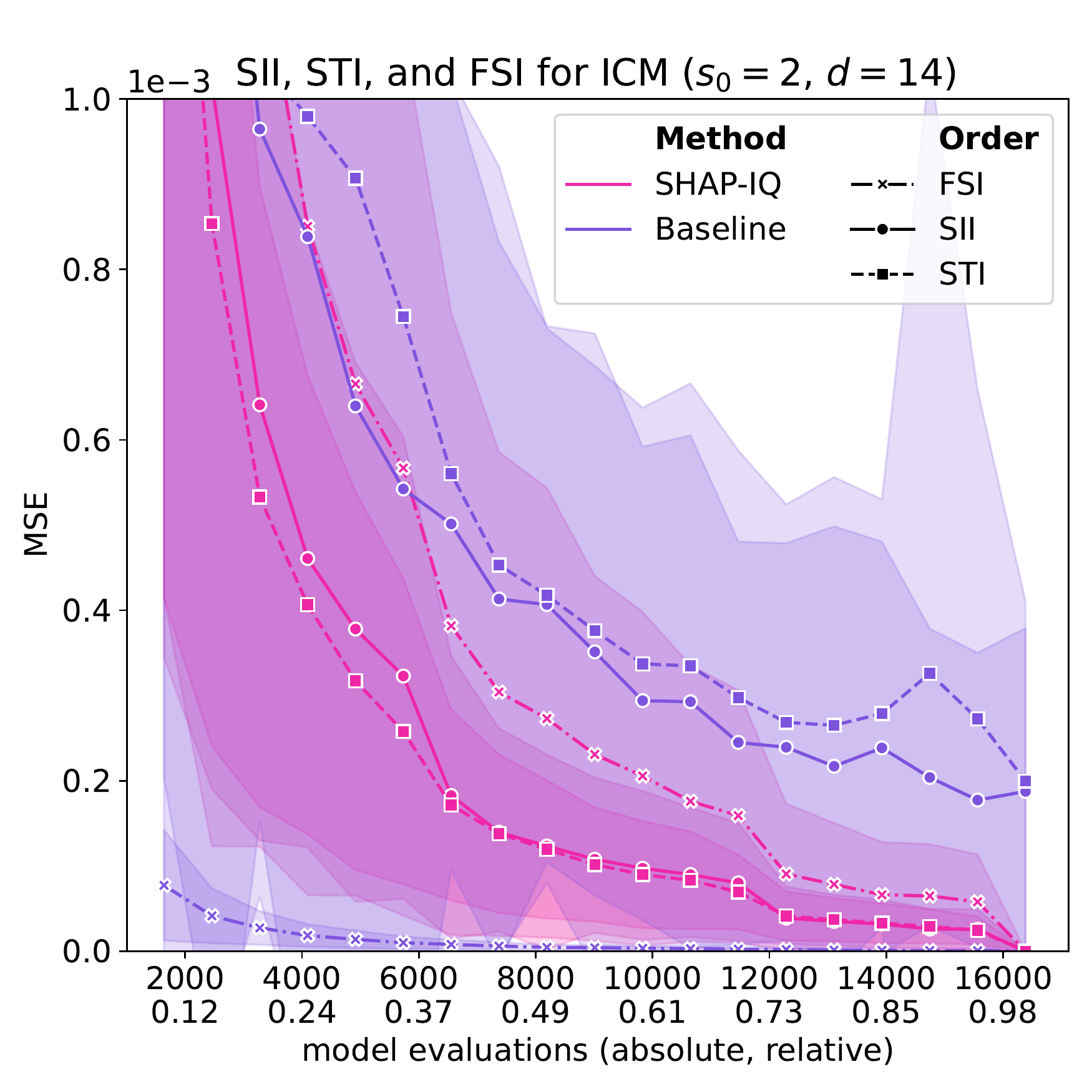}
     \end{minipage}
     \begin{minipage}[c]{0.32\textwidth}
         \includegraphics[width=\textwidth]{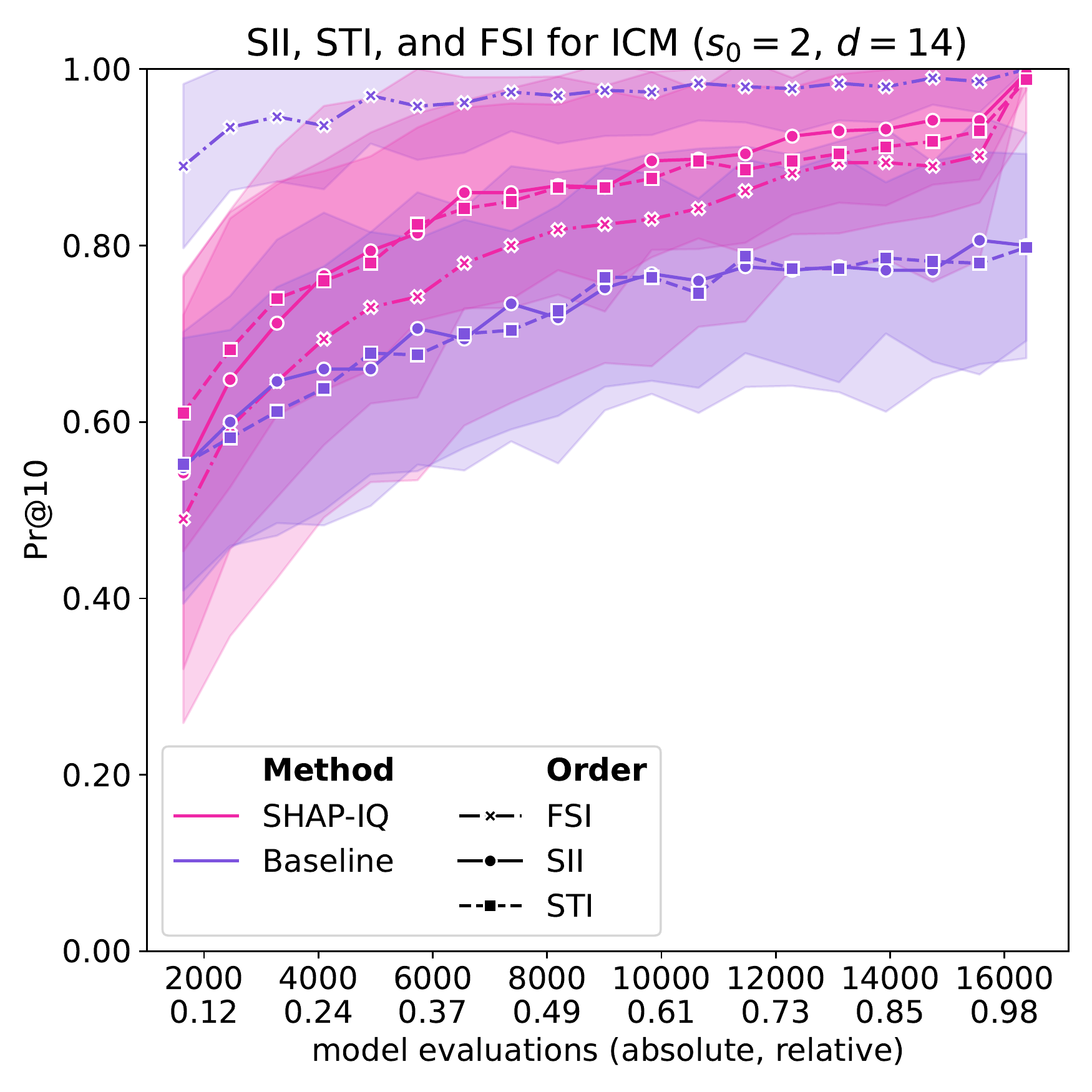}
     \end{minipage}
     \\
     \begin{minipage}[c]{0.32\textwidth}
         \includegraphics[width=\textwidth]{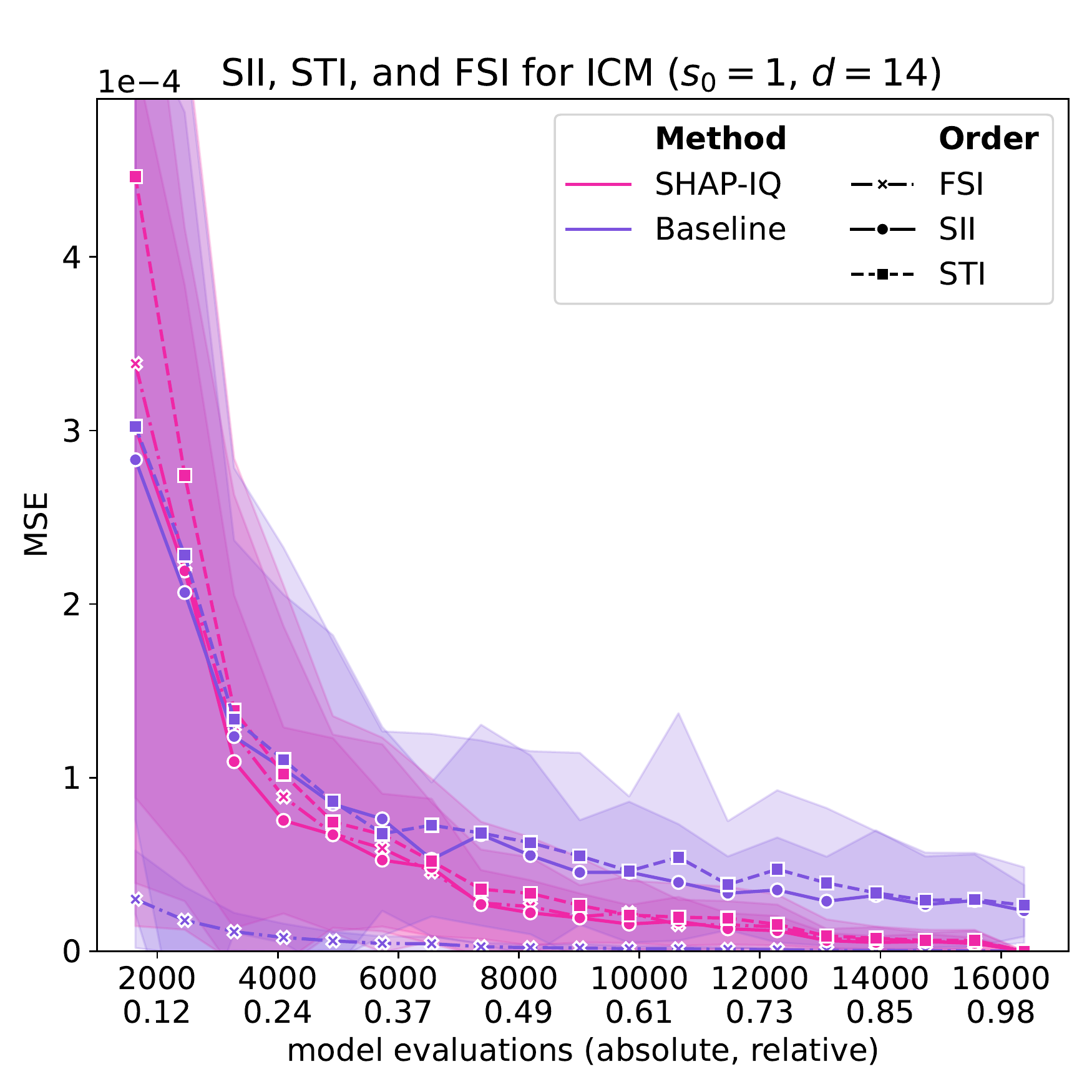}
     \end{minipage}
    \caption{Approximation Quality for ICM with interaction order $s_0 = 4$ for $g = 50$ iterations (first row), with interaction order $s_0 = 3$ for $g = 50$ iterations (second row), with interaction order $s_0 = 2$ for $g = 50$ iterations (third row), and with interaction order $s_0 = 1$ (Shapley Value) for $g = 50$ iterations (fourth row).}
    \label{fig:app_icm}
\end{figure}

\begin{figure}
    \centering
    \begin{minipage}[c]{0.32\textwidth}
         \includegraphics[width=\textwidth]{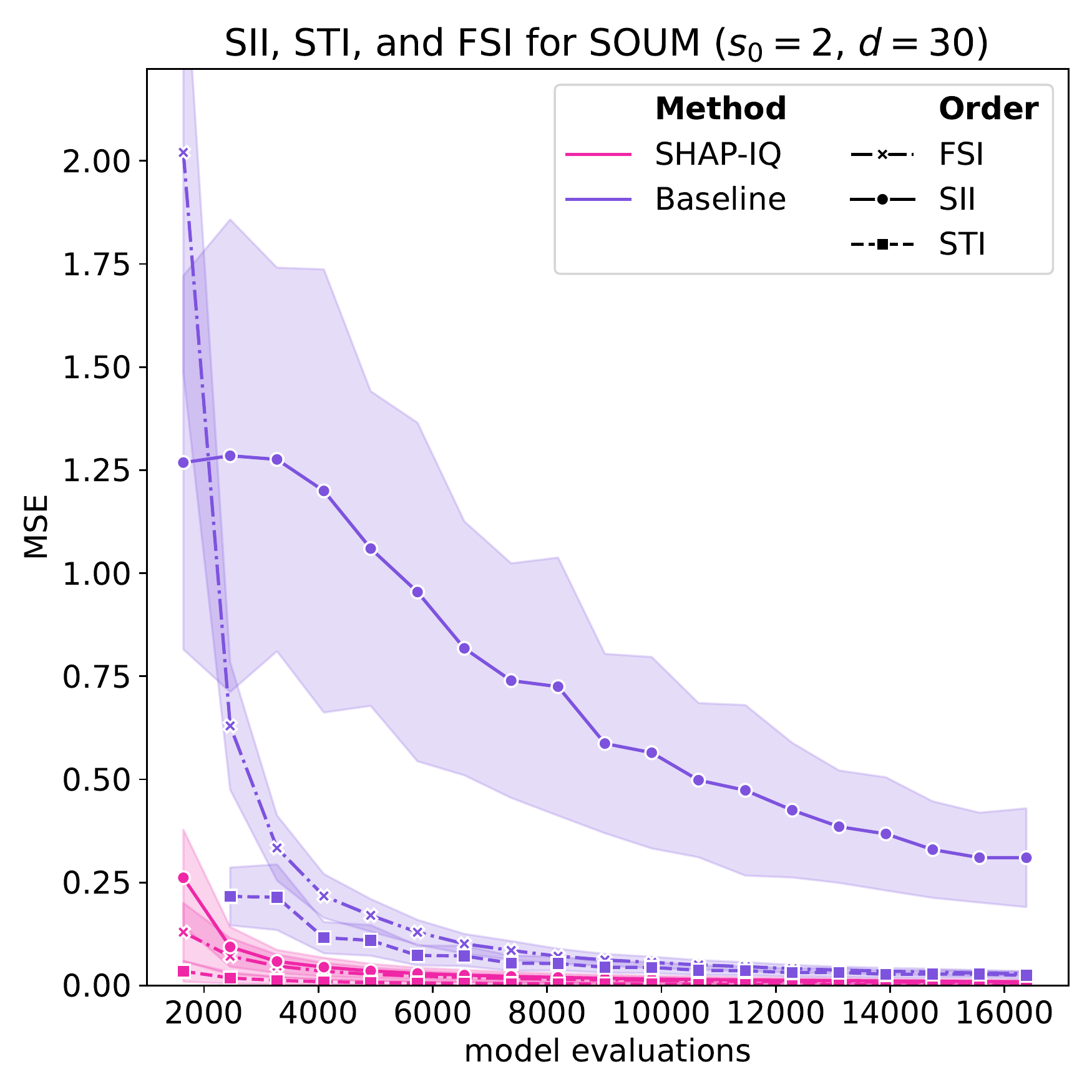}
     \end{minipage}
     \begin{minipage}[c]{0.32\textwidth}
         \includegraphics[width=\textwidth]{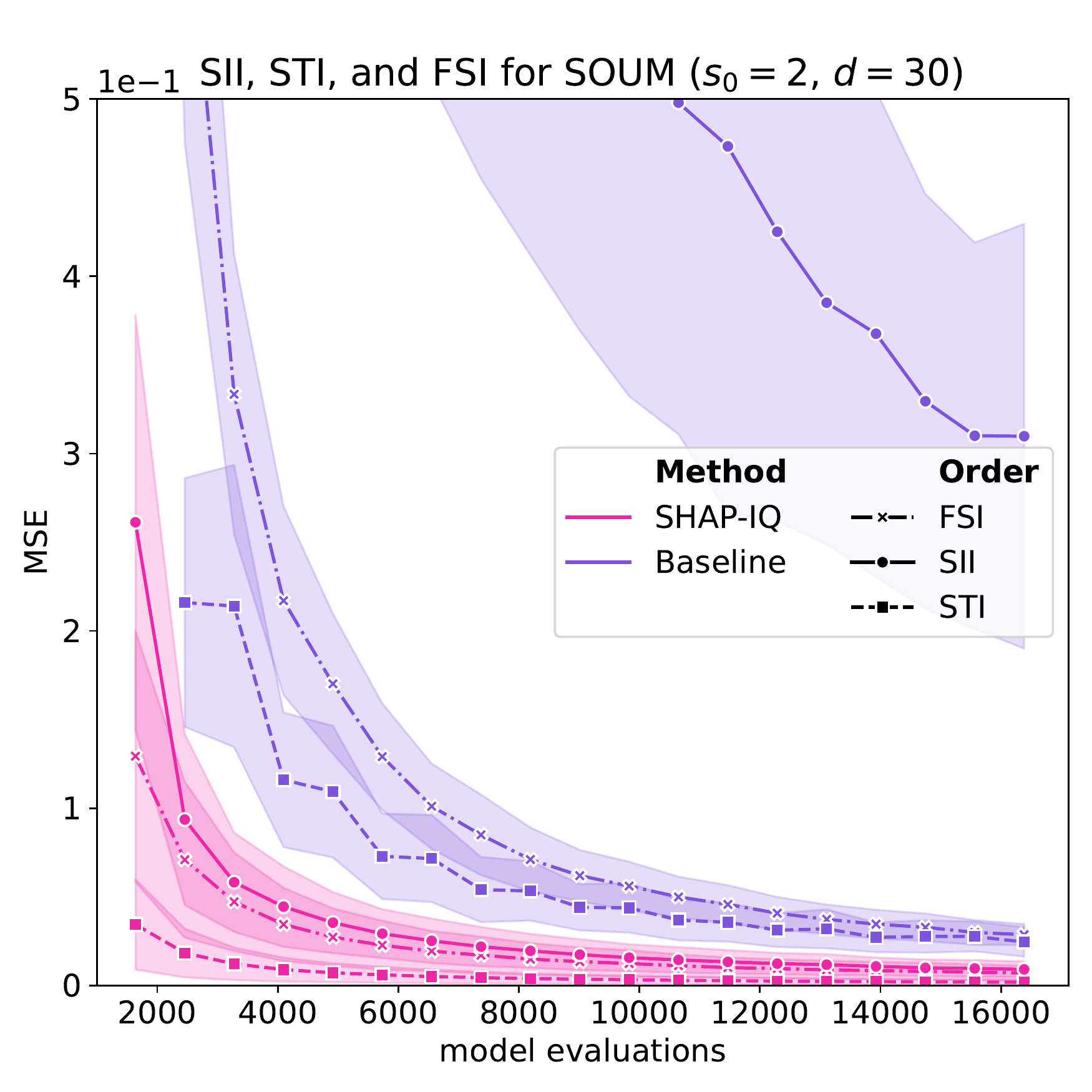}
     \end{minipage}
     \begin{minipage}[c]{0.32\textwidth}
         \includegraphics[width=\textwidth]{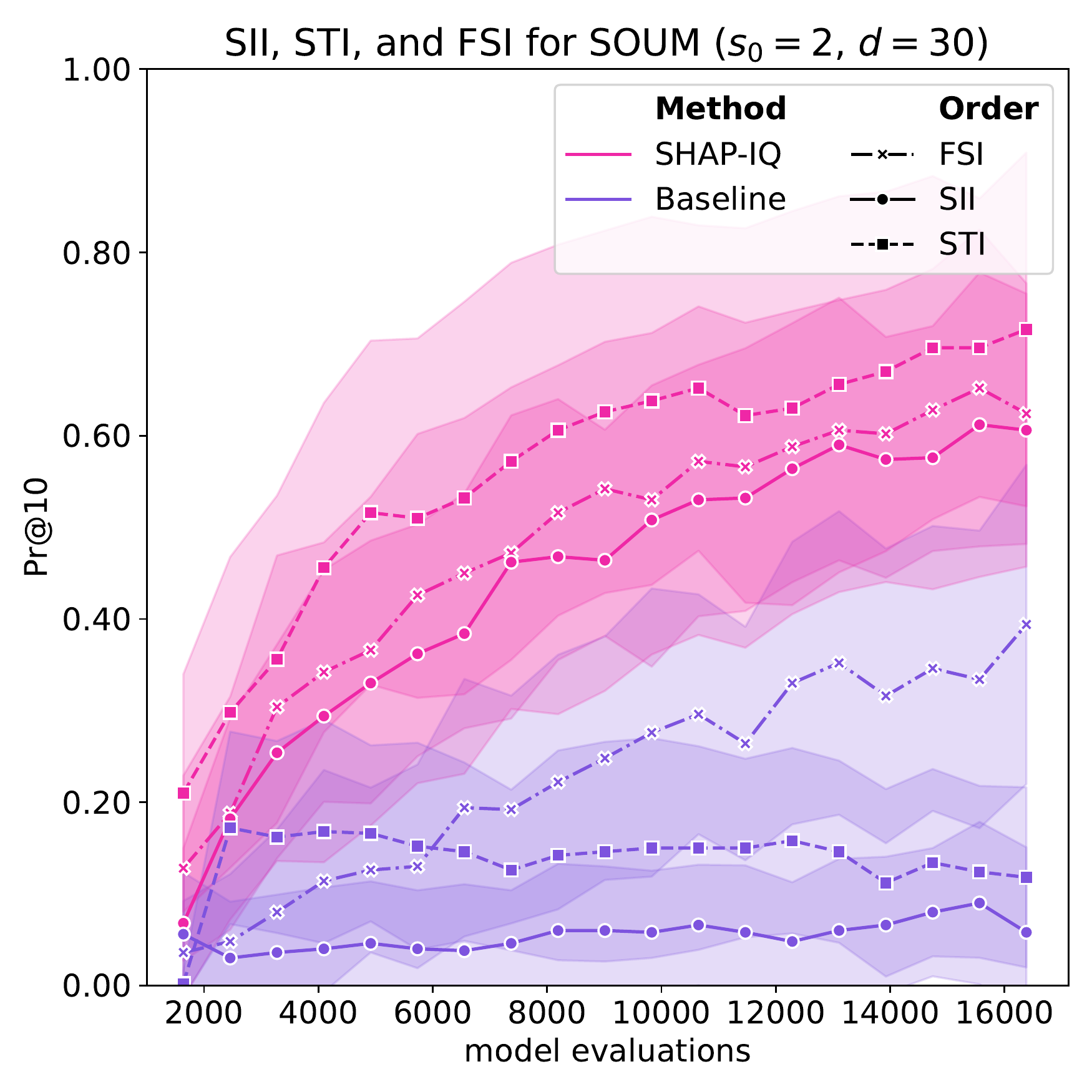}
     \end{minipage}
     \\
     \begin{minipage}[c]{0.32\textwidth}
         \includegraphics[width=\textwidth]{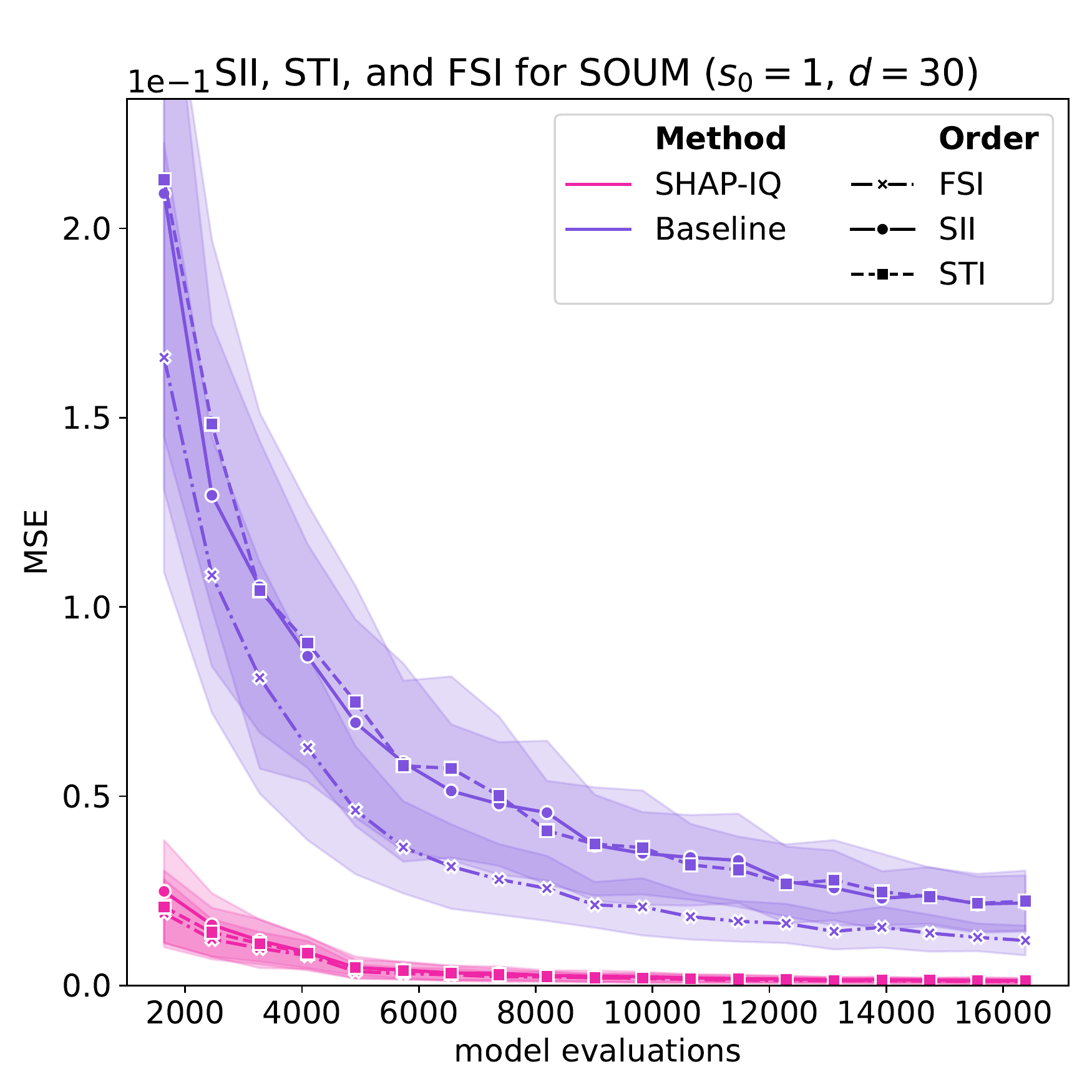}
     \end{minipage}
    \caption{Approximation Quality for SOUM order $s_0 = 2$ (first row) and $s_0 = 1$ (Shapley value, second row) for $g = 50$ iterations on the SOUM with $N = 100$ interactions, $d = 30$ features, and $\ell_{\max} = 30$.}
    \label{fig:app_soum}
\end{figure}

\subsubsection{n-SII Estimation on Example Sentences.}

We further probe the LM with randomly selected sentences from the \emph{IMDB} dataset and estimate the SII scores up to order $4$.
For the example sentence ``\textit{It is a gruesome cannibal movie. But it's not bad. If you like Hannibal, you'll love this.}'' in Section~\ref{sec_experiments_different_ciis} all orders of n-SII with $s_0=1,2,3,4$ are illustrated in Figure~\ref{fig:app_n_SII_1}.
Another example sentence ``\textit{I have never forgot this movie. All these years and it has remained in my life.}'' with all orders of n-SII up to the maximum interaction order $s_0=1,2,3,4$ is shown in Figure~\ref{fig:app_n_SII_2}.
Lastly, for four sentences n-SII for all orders with $s_0=4$ are visualized in Figure~\ref{fig:app_n_SII_3}.

\begin{figure}
    \centering
    \begin{minipage}[c]{0.47\textwidth}
         \includegraphics[width=\textwidth]{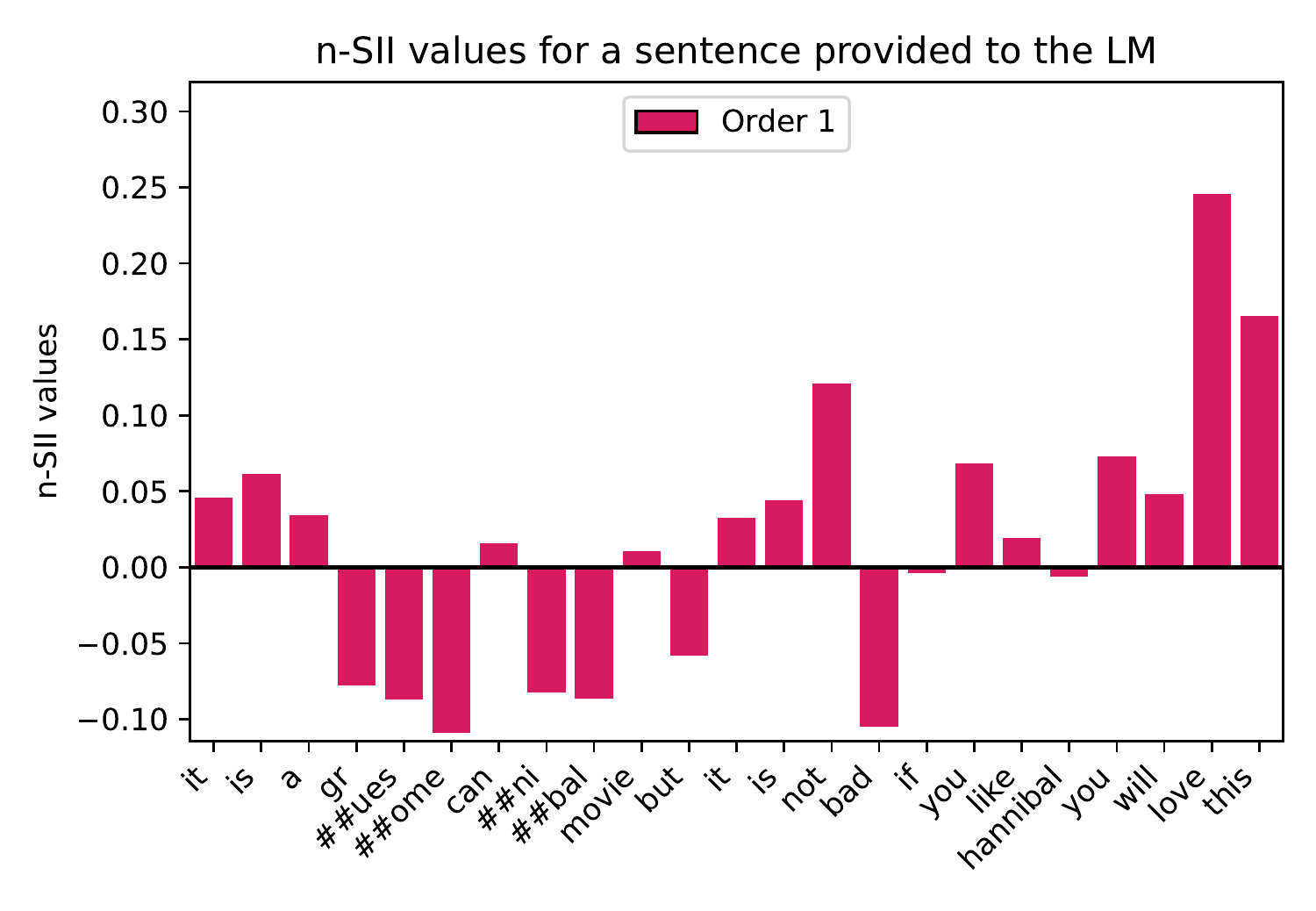}
     \end{minipage}
     \begin{minipage}[c]{0.47\textwidth}
         \includegraphics[width=\textwidth]{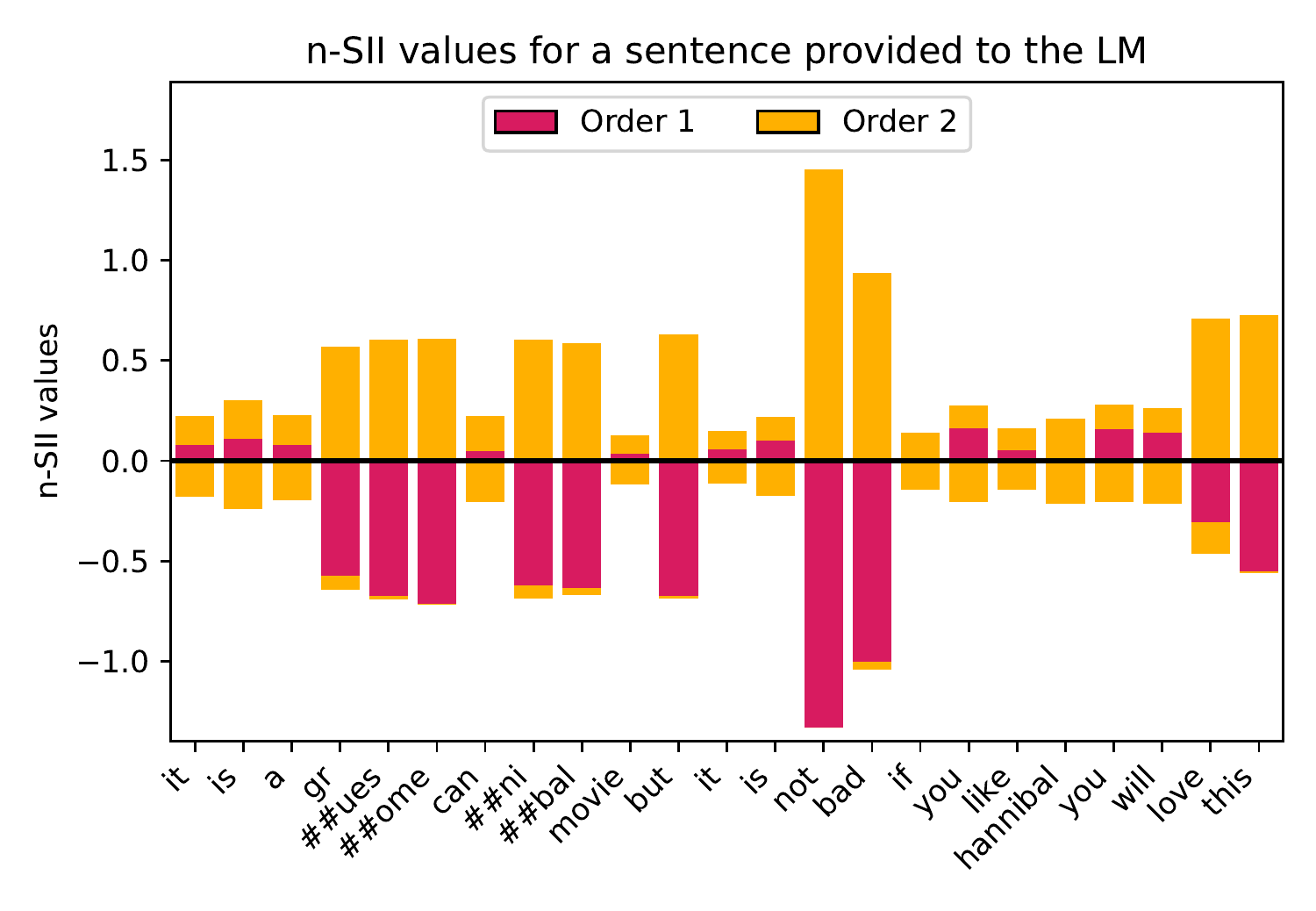}
     \end{minipage}
     \\
     \begin{minipage}[c]{0.47\textwidth}
         \includegraphics[width=\textwidth]{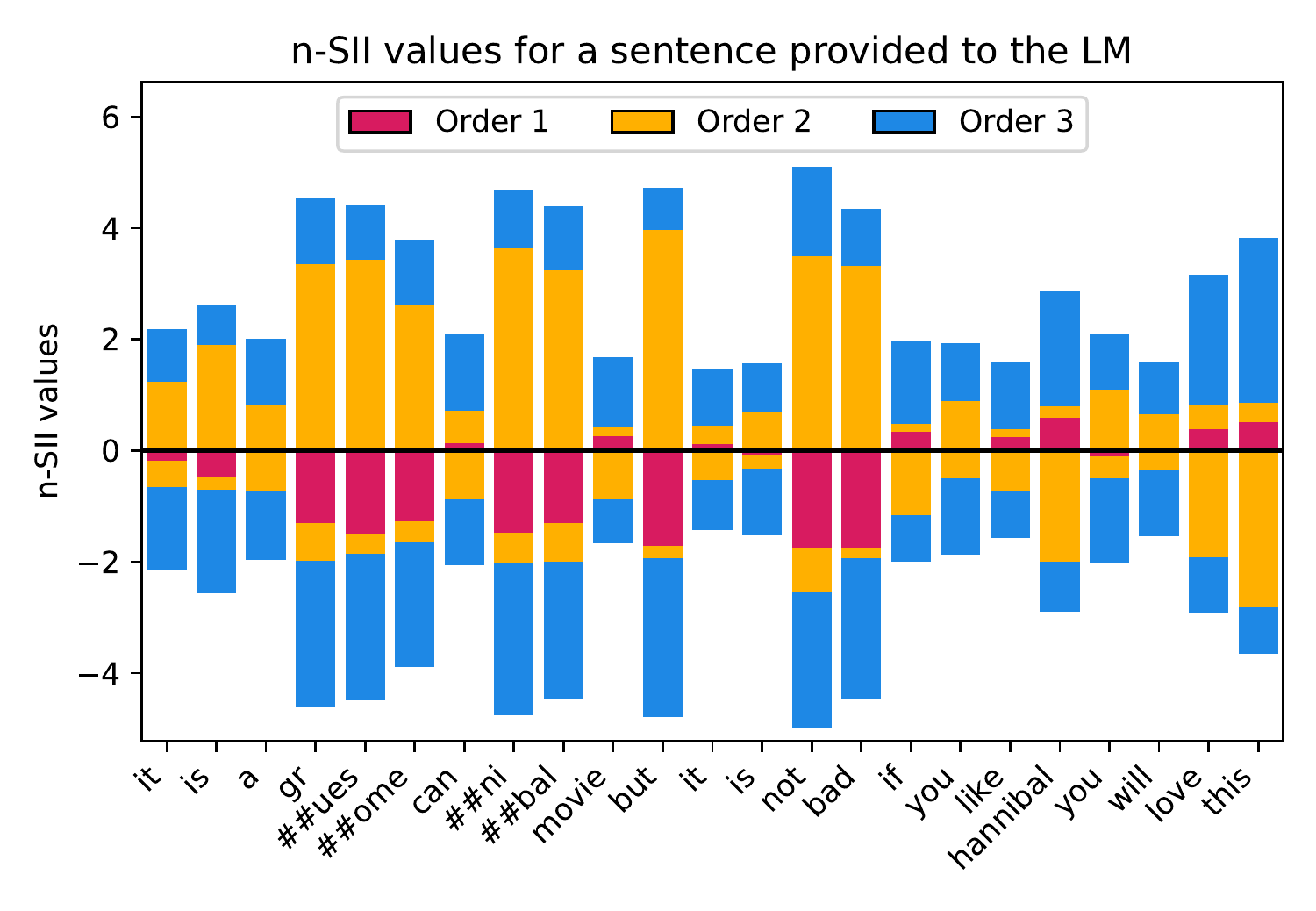}
     \end{minipage}
     \begin{minipage}[c]{0.47\textwidth}
         \includegraphics[width=\textwidth]{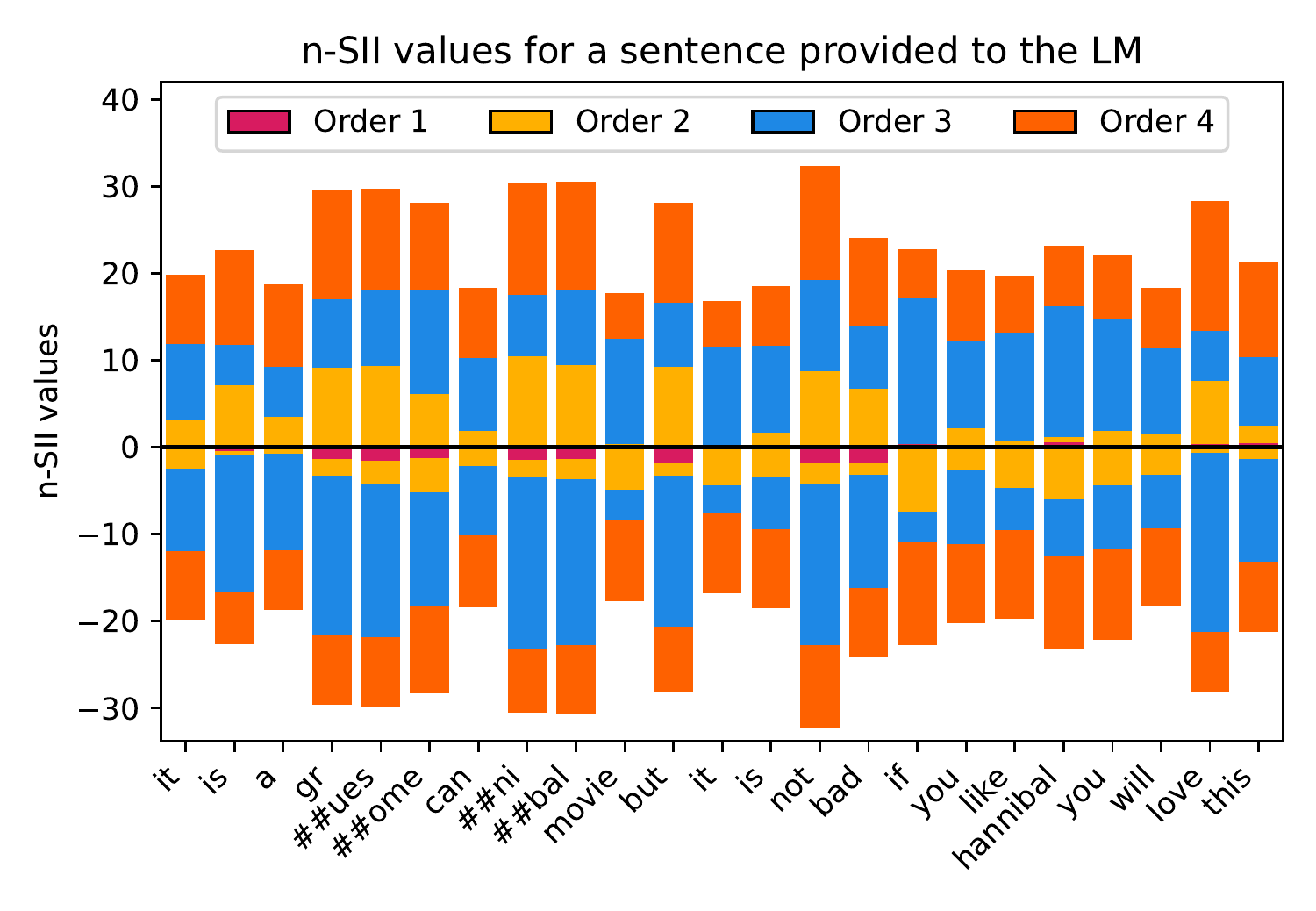}
     \end{minipage}
    \caption{Estimated SII values with orders $s=1,2,3,4$ for the sentence ``\textit{It is a gruesome cannibal movie. But it's not bad. If you like Hannibal, you'll love this.}'' ($d = 23$) provided to the LM. The plots show all orders of n-SII with maximum interaction order $s_0 = 1$ (top left), $s_0 = 2$ (top right), $s_0 = 3$ (bottom left), and $s_0 = 4$ (bottom right).}
    \label{fig:app_n_SII_1}
\end{figure}

\begin{figure}
    \centering
    \begin{minipage}[c]{0.47\textwidth}
         \includegraphics[width=\textwidth]{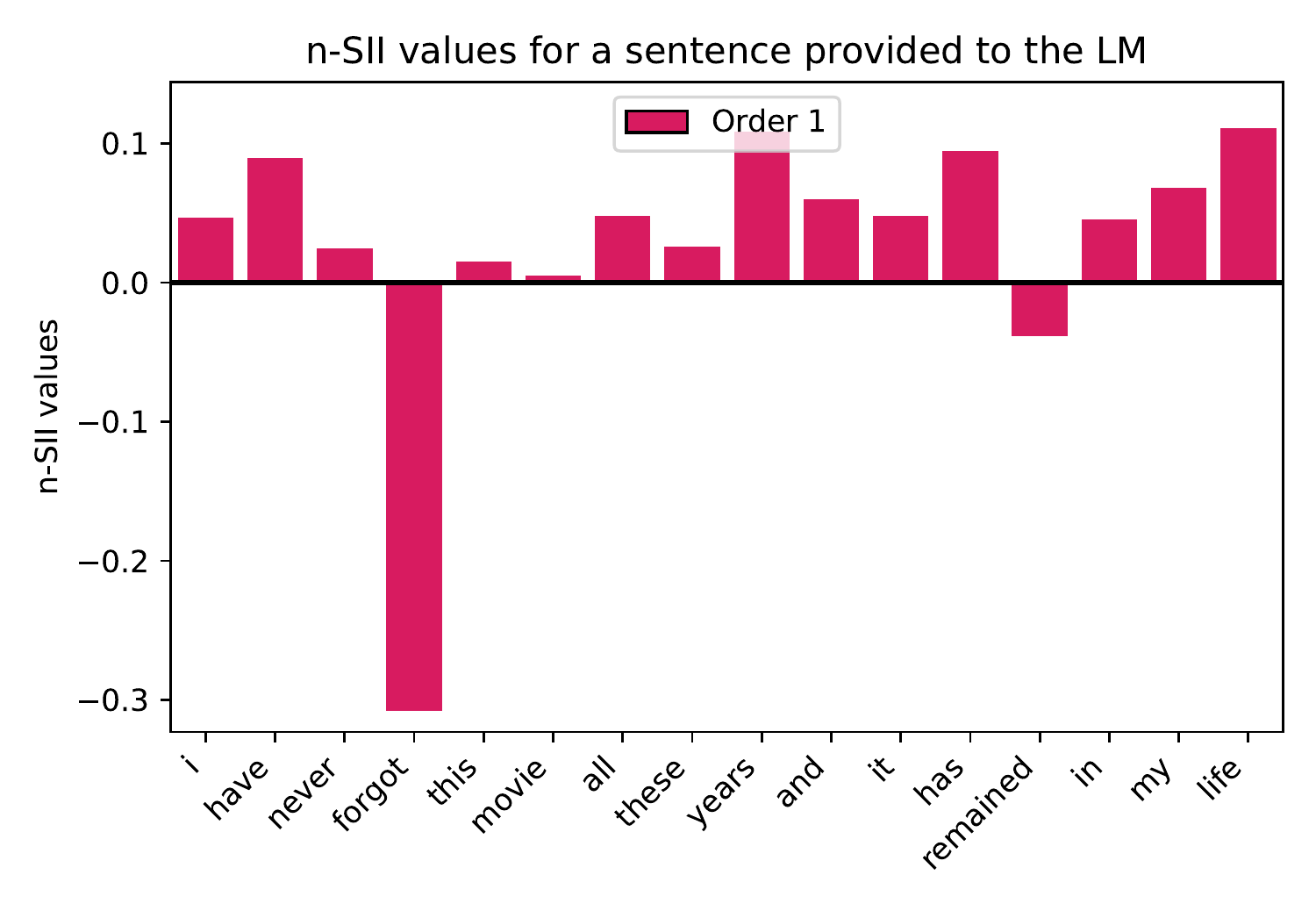}
     \end{minipage}
     \begin{minipage}[c]{0.47\textwidth}
         \includegraphics[width=\textwidth]{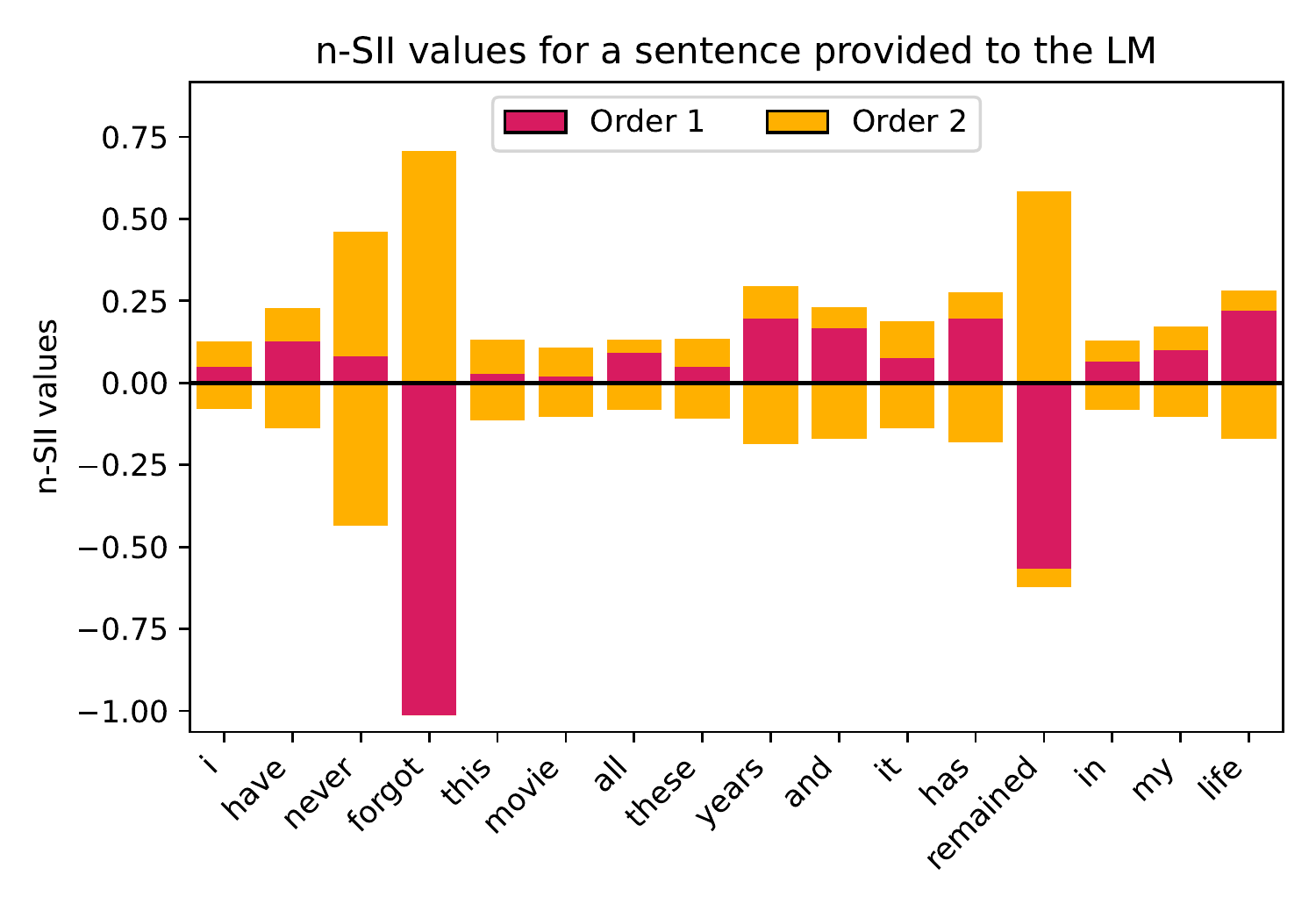}
     \end{minipage}
     \\
     \begin{minipage}[c]{0.47\textwidth}
         \includegraphics[width=\textwidth]{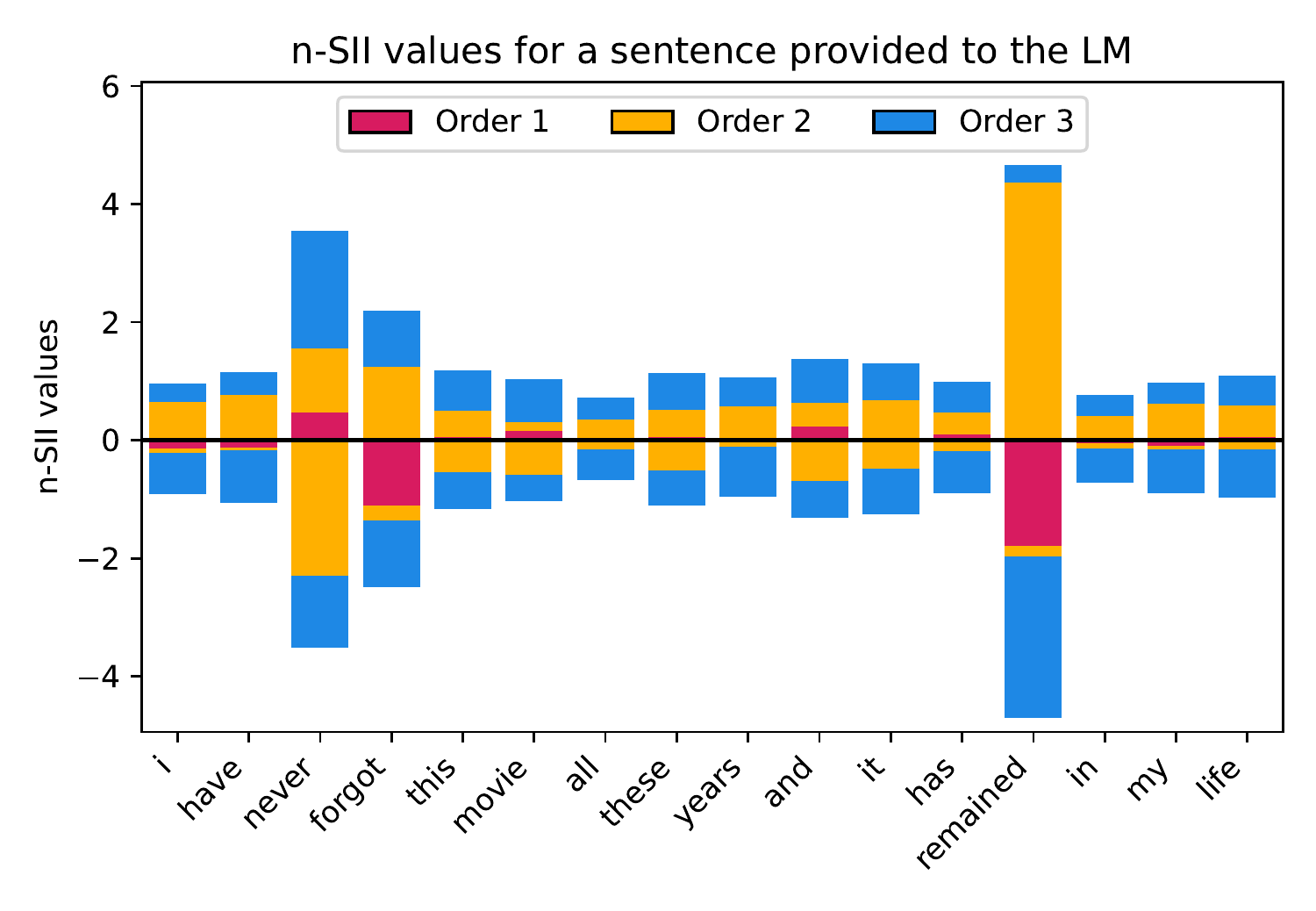}
     \end{minipage}
     \begin{minipage}[c]{0.47\textwidth}
         \includegraphics[width=\textwidth]{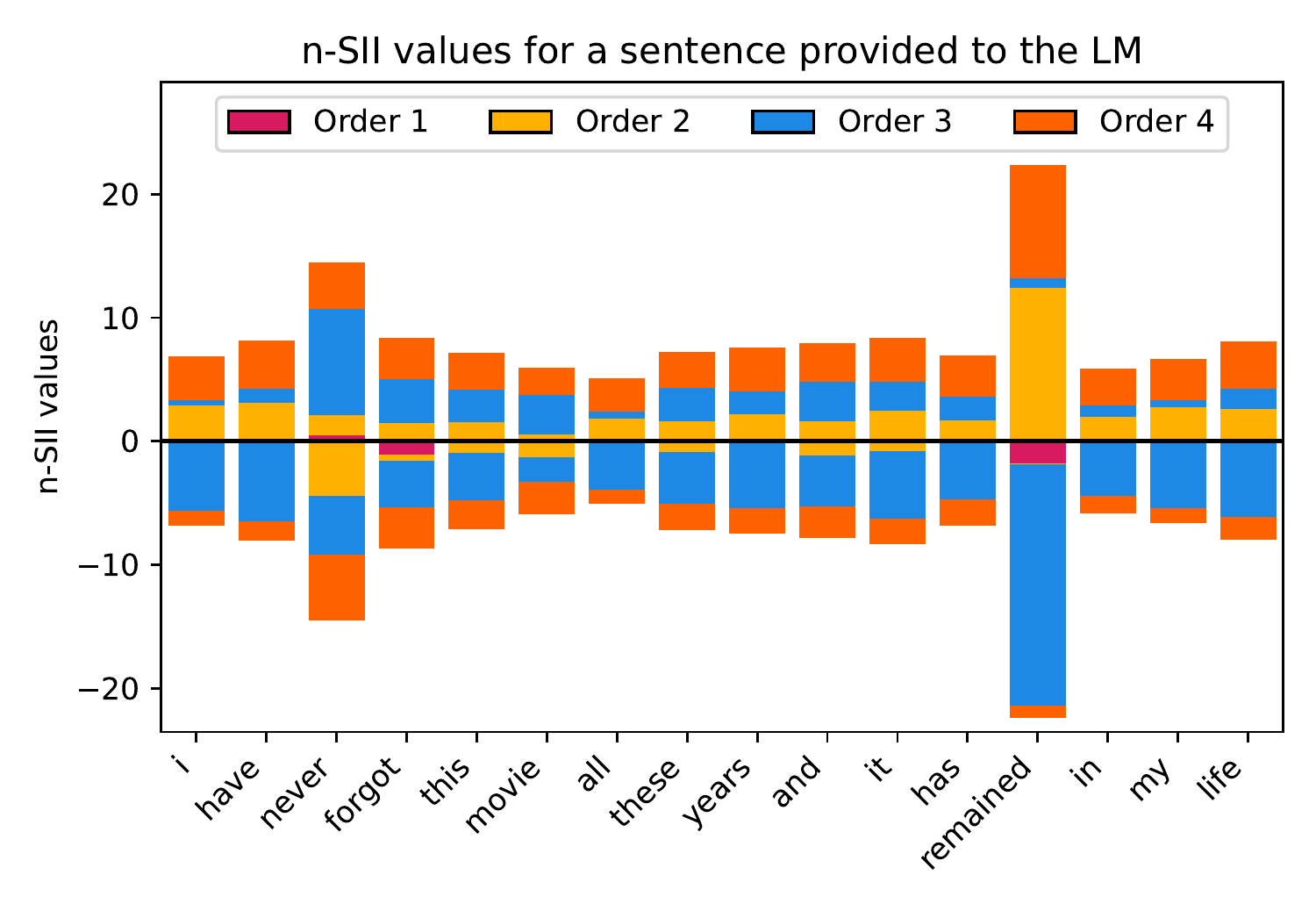}
     \end{minipage}
    \caption{Estimated SII values with orders $s=1,2,3,4$ for the sentence ``\textit{I have never forgot this movie. All these years and it has remained in my life.}'' ($d = 16$) provided to the LM. The plots show all orders of n-SII with maximum interaction order $s_0 = 1$ (top left), $s_0 = 2$ (top right), $s_0 = 3$ (bottom left), and $s_0 = 4$ (bottom right).}
    \label{fig:app_n_SII_2}
\end{figure}

\begin{figure}
    \centering
    \begin{minipage}[c]{0.47\textwidth}
         \includegraphics[width=\textwidth]{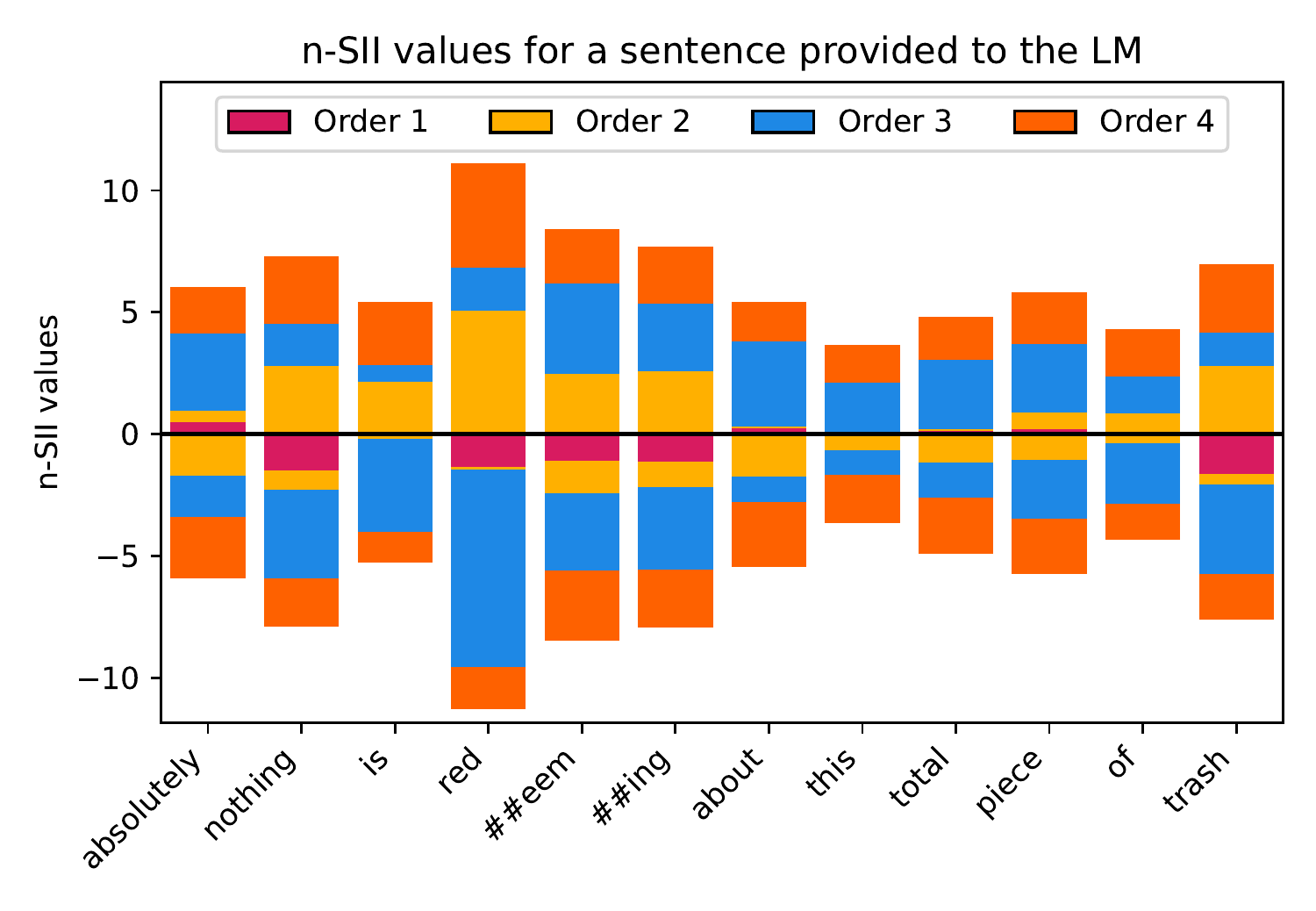}
     \end{minipage}
     \begin{minipage}[c]{0.47\textwidth}
         \includegraphics[width=\textwidth]{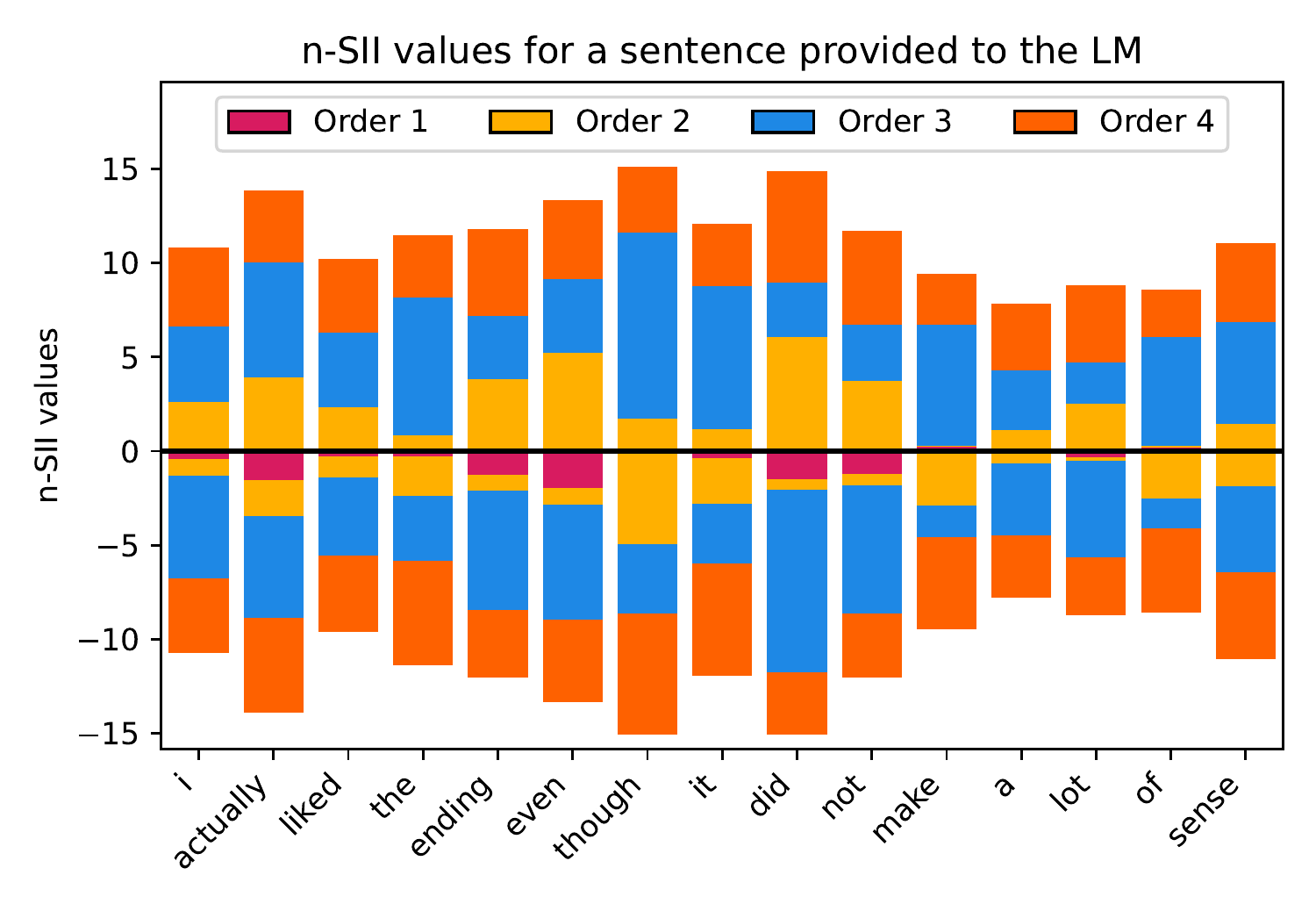}
     \end{minipage}
     \\
     \begin{minipage}[c]{0.47\textwidth}
         \includegraphics[width=\textwidth]{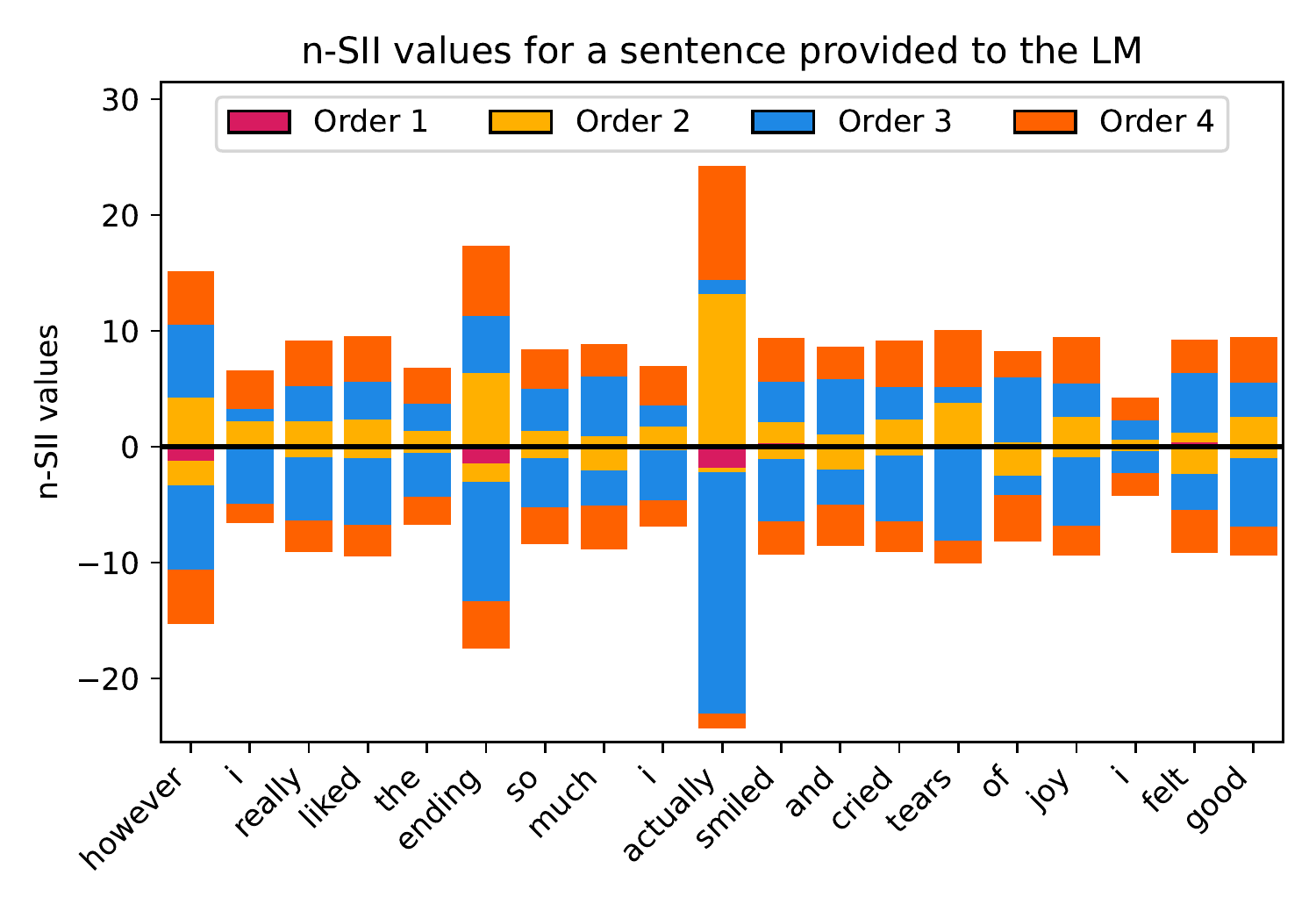}
     \end{minipage}
     \begin{minipage}[c]{0.47\textwidth}
         \includegraphics[width=\textwidth]{figures/appendix/n_SII_sentence_it_is_a_gr_10000_order-4.pdf}
     \end{minipage}
    \caption{Estimated n-SII values with $s_0=4$ for the sentences provided to the LM: ``\textit{Absolutely nothing is redeeming about this total piece of trash.}'' ($d=12$, top left), ``\textit{I actually liked the ending even though it did not make a lot of sense.}'' ($d=15$, top right), ``\textit{However, I really liked the ending so much, I actually smiled and cried tears of joy. I felt good.}'' with ($d=19$, bottom left), and ``\textit{It is a gruesome cannibal movie. But it's not bad. If you like Hannibal, you'll love this.}'' ($d=23$, bottom right).}
    \label{fig:app_n_SII_3}
\end{figure}

\clearpage
\section{Further theoretical Results for the Shapley Value}\label{appx::sv_theory}
In this section, we give two additional results for the special case of the SV.
We explicitly state the inverse of the covariance matrix from \cite{Covert_Lee_2021} and further present a simplified representation of SHAP-IQ, if applied on the SV, which aligns with Theorem~\ref{thm::SV_representation}.

\subsection{Explicit Inverse of Covariance Matrix of Unbiased KernelSHAP}\label{appx::inverse-uksh}

The covariance matrix has been explicitly computed in \cite[Appendix A]{Covert_Lee_2021}.
In this section, we provide the explicit form of the inverse $A^{-1}$.
This inverse admits the same structure and is a central element in the explicit calculation of U-KSH as a weighted sum, which is linked to SHAP-IQ.

\begin{proposition}[Explicit covariance matrix \cite{Covert_Lee_2021}]\label{appx::prop-cov-matrix}
    For the covariance matrix it holds
    \begin{align*}
        A := \mathbb{E}[ZZ^T] = \mu_2 \mathbf{J} + (\mu_1 - \mu_2) \mathbf{I},
    \end{align*}
    with $i,j \in \fset$ and constants
    \begin{align*}
     &A_{ii} := \mu_1 := \mathbb{P}(Z_i = 1) = \frac 1 2
     &A_{ij} := \mu_2 := \mathbb{P}(Z_i=Z_j=1) = \frac{1}{\fnum(\fnum-1)} \frac{\sum_{k=2}^{\fnum-1}\frac{k-1}{\fnum-k}}{\sum_{k=1}^{\fnum-1}\frac{1}{k(\fnum-k)}}.
    \end{align*}
\end{proposition}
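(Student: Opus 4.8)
The plan is to evaluate the entries of $A=\mathbb{E}[ZZ^{T}]$ directly under the sampling law $p(Z)\propto\mu(t)$ on $\mathcal T_1$. Since each coordinate $Z_i$ is $\{0,1\}$-valued we have $Z_i^2=Z_i$, so the diagonal entries are $A_{ii}=\mathbb{E}[Z_i]=\mathbb{P}(Z_i=1)$ while the off-diagonal entries are $A_{ij}=\mathbb{E}[Z_iZ_j]=\mathbb{P}(Z_i=Z_j=1)$ for $i\neq j$. Because $p$ depends on $T$ only through its cardinality it is invariant under permutations of $\fset$; hence $A_{ii}$ is independent of $i$ and $A_{ij}$ is independent of the pair $\{i,j\}$. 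Calling the common values $\mu_1$ and $\mu_2$ immediately gives $A=\mu_2\mathbf{J}+(\mu_1-\mu_2)\mathbf{I}$, which is the structural part of the claim.

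It then remains to identify $\mu_1$ and $\mu_2$. First I would pin down the induced distribution on the subset size $t=|T|$: since all $\binom{\fnum}{t}$ subsets of a given size are equiprobable, $\mathbb{P}(|T|=t)\propto \mu(t)\binom{\fnum}{t}$, and a short binomial simplification from $\mu(t)=\frac{1}{\fnum-1}\binom{\fnum-2}{t-1}^{-1}$ gives $\mu(t)\binom{\fnum}{t}=\frac{\fnum}{t(\fnum-t)}$. Hence $\mathbb{P}(|T|=t)=\frac{1}{R\,t(\fnum-t)}$ with normalizer $R:=\sum_{k=1}^{\fnum-1}\frac{1}{k(\fnum-k)}$. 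Conditioning on the size, the elementary counts $\mathbb{P}(Z_i=1\mid |T|=t)=\binom{\fnum-1}{t-1}/\binom{\fnum}{t}=t/\fnum$ and $\mathbb{P}(Z_i=Z_j=1\mid |T|=t)=\binom{\fnum-2}{t-2}/\binom{\fnum}{t}=\frac{t(t-1)}{\fnum(\fnum-1)}$ yield
\begin{align*}
\mu_1&=\sum_{t=1}^{\fnum-1}\frac{t}{\fnum}\cdot\frac{1}{R\,t(\fnum-t)}=\frac{1}{\fnum R}\sum_{t=1}^{\fnum-1}\frac{1}{\fnum-t},\\
\mu_2&=\sum_{t=2}^{\fnum-1}\frac{t(t-1)}{\fnum(\fnum-1)}\cdot\frac{1}{R\,t(\fnum-t)}=\frac{1}{\fnum(\fnum-1)R}\sum_{t=2}^{\fnum-1}\frac{t-1}{\fnum-t}.
\end{align*}
The $\mu_2$ expression is already the stated formula once $R$ is written out. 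For $\mu_1$ I would substitute $j=\fnum-t$ to obtain $\sum_{t=1}^{\fnum-1}\frac{1}{\fnum-t}=h_{\fnum-1}$ with $h_{\fnum-1}=\sum_{j=1}^{\fnum-1}j^{-1}$, and use the partial-fraction identity $R=\frac{1}{\fnum}\sum_{k=1}^{\fnum-1}\bigl(\tfrac1k+\tfrac1{\fnum-k}\bigr)=\frac{2h_{\fnum-1}}{\fnum}$ — exactly the identity already exploited in the proof of Theorem~\ref{thm::u_ksh}. Then $\mu_1=\frac{h_{\fnum-1}}{\fnum R}=\frac{h_{\fnum-1}}{\fnum}\cdot\frac{\fnum}{2h_{\fnum-1}}=\tfrac12$, as claimed.

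The only non-routine ingredients are the two closed forms — the binomial identity $\mu(t)\binom{\fnum}{t}=\fnum/(t(\fnum-t))$ and the harmonic identity $R=2h_{\fnum-1}/\fnum$ — and neither is hard; the remaining work is bookkeeping over the summation ranges (in particular that the $\mu_2$ sum starts at $t=2$). In fact this Proposition is essentially a refinement of the computation of $\mu_1-\mu_2=1/R$ performed inside the proof of Theorem~\ref{thm::u_ksh}, now resolving $\mu_1$ and $\mu_2$ individually, so it could equivalently be stated as a corollary of the intermediate steps there; the main obstacle, such as it is, is simply keeping the normalization constant and conditional probabilities straight.
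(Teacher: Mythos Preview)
Your proof is correct and complete. The paper itself does not give a proof here --- it simply cites \cite[Appendix~A]{Covert_Lee_2021} --- so there is no detailed argument to compare against; your approach (symmetry of $p$ to get the $\mu_2\mathbf{J}+(\mu_1-\mu_2)\mathbf{I}$ structure, then conditioning on $|T|=t$ and using the counts $\binom{\fnum-1}{t-1}/\binom{\fnum}{t}$ and $\binom{\fnum-2}{t-2}/\binom{\fnum}{t}$) is exactly the natural one and is in fact the argument in the cited reference.

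Your observation that this is a refinement of the $\mu_1-\mu_2=1/R$ step inside the proof of Theorem~\ref{thm::u_ksh} is also apt: the paper computes $\mu_1$ and $\mu_2$ as sums $\sum_t\binom{\fnum-1}{t-1}\mu(t)/R$ and $\sum_t\binom{\fnum-2}{t-2}\mu(t)/R$ there and only needs their difference, whereas here you resolve each separately via the identities $\mu(t)\binom{\fnum}{t}=\fnum/(t(\fnum-t))$ and $R=2h_{\fnum-1}/\fnum$. Both identities are established elsewhere in the paper (the latter in the proof of Proposition~\ref{prop::shap_x_for_SV}), so your write-up is consistent with --- and slightly more explicit than --- the paper's own treatment.
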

\begin{proof}
    The proof is given in \cite[Appendix A]{Covert_Lee_2021}.
\end{proof}
We have for $\sum_{k=1}^{\fnum-1}\frac{1}{k(\fnum-k)} = 2 h_{\fnum-1}$, where $h_n$ is the n-th harmonic number.
Furthermore $\mu_1-\mu_2 = \frac{1}{2 h_{\fnum-1}}$.
We now give the explicit form of $A^{-1}$.

\begin{proposition}[Explicit inverse of covariance matrix]
Let $A := \mathbb{E}[ZZ^T]$. Then, we have an explicit form of the inverse $A^{-1}$ as 
\begin{equation*}
    A^{-1} = \tilde \mu_2 \mathbf{J} + (\tilde\mu_1-\tilde\mu_2) \mathbf{I} 
\end{equation*}
with constants 
\begin{align*}
    &(A^{-1})_{ii} := \tilde\mu_1 = 2h_{\fnum-1}\frac{\mu_1+(\fnum-2)\mu_2}{\mu_1+(\fnum-1)\mu_2} 
    &(A^{-1})_{ij} := \tilde\mu_2 = 2h_{\fnum-1}\frac{-\mu_2}{\mu_1+(\fnum-1)\mu_2}
\end{align*}
\end{proposition}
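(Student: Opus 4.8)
The plan is to reduce this proposition directly to Lemma~\ref{lem::exact_inverse}, which already inverts any matrix of the form $\mu_2 \mathbf{J} + (\mu_1 - \mu_2)\mathbf{I}$, and then simplify the resulting constants using the closed forms for $\mu_1$ and $\mu_2$ recorded in Proposition~\ref{appx::prop-cov-matrix} together with the identity $\mu_1 - \mu_2 = \tfrac{1}{2 h_{\fnum - 1}}$ noted just after it.

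First I would check that the hypotheses of Lemma~\ref{lem::exact_inverse} are satisfied: by Proposition~\ref{appx::prop-cov-matrix} we have $\mu_1 = \tfrac12 > 0$, and $\mu_2 = \mathbb{P}(Z_i = Z_j = 1) > 0$, while $\mu_1 - \mu_2 = \tfrac{1}{2 h_{\fnum - 1}} \neq 0$ (for $\fnum \geq 2$), so $\mu_1 \neq \mu_2$. Hence $A = \mu_2 \mathbf{J} + (\mu_1 - \mu_2)\mathbf{I}$ is invertible and Lemma~\ref{lem::exact_inverse} gives $A^{-1} = \tilde\mu_2 \mathbf{J} + (\tilde\mu_1 - \tilde\mu_2)\mathbf{I}$ with
\begin{align*}
\tilde\mu_2 = \frac{-\mu_2}{(\mu_1 - \mu_2)\bigl(\mu_1 + (\fnum - 1)\mu_2\bigr)}, \qquad
\tilde\mu_1 = \frac{\mu_1 + (\fnum - 2)\mu_2}{(\mu_1 - \mu_2)\bigl(\mu_1 + (\fnum - 1)\mu_2\bigr)}.
\end{align*}

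Next I would substitute $\tfrac{1}{\mu_1 - \mu_2} = 2 h_{\fnum - 1}$, which follows from Proposition~\ref{appx::prop-cov-matrix} and the partial-fraction identity $\sum_{k=1}^{\fnum - 1} \tfrac{1}{k(\fnum - k)} = 2 h_{\fnum - 1}$ already used in the proof of Theorem~\ref{thm::u_ksh}. Pulling this scalar out of both denominators leaves $\mu_1 + (\fnum - 1)\mu_2$ alone in the denominator, yielding exactly
\[
(A^{-1})_{ij} = \tilde\mu_2 = 2 h_{\fnum - 1}\,\frac{-\mu_2}{\mu_1 + (\fnum - 1)\mu_2}, \qquad
(A^{-1})_{ii} = \tilde\mu_1 = 2 h_{\fnum - 1}\,\frac{\mu_1 + (\fnum - 2)\mu_2}{\mu_1 + (\fnum - 1)\mu_2},
\]
as claimed.

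There is no deep obstacle: the only points requiring care are the scalar bookkeeping of $\tfrac{1}{\mu_1 - \mu_2}$ and the non-degeneracy check $\mu_1 \neq \mu_2$ (equivalently $h_{\fnum - 1} < \infty$). As a self-contained alternative I would bypass Lemma~\ref{lem::exact_inverse} and verify the claim by direct multiplication: using $\mathbf{J}^2 = \fnum\, \mathbf{J}$ and $\mathbf{J}\mathbf{I} = \mathbf{J}$, compute $A\bigl(\tilde\mu_2 \mathbf{J} + (\tilde\mu_1 - \tilde\mu_2)\mathbf{I}\bigr)$, and confirm that the coefficient of $\mathbf{J}$ vanishes while the coefficient of $\mathbf{I}$ equals $1$; this also serves as a sanity check on the constants.
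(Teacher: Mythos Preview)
Your proposal is correct and follows essentially the same approach as the paper, which simply states that the result follows directly from Proposition~\ref{appx::prop-cov-matrix} and Lemma~\ref{lem::exact_inverse}. You have additionally spelled out the hypothesis check $\mu_1 \neq \mu_2$ and the substitution $\tfrac{1}{\mu_1-\mu_2}=2h_{\fnum-1}$, which the paper leaves implicit.
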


\begin{proof}
    The proof follows directly from Proposition \ref{appx::prop-cov-matrix} and Lemma \ref{lem::exact_inverse}.
\end{proof}

\subsection{Simplified Representation of SHAP-IQ for the Shapley Value}\label{appx::shap-iq-for-the-sv}
In this section, give an explicit form of SHAP-IQ for the SV that admits a similar form as the SV representation in Theorem \ref{thm::SV_representation}.
We consider the SV weights $m(t) := \frac{(\fnum-t-1)!t!}{\fnum!}$.

\begin{proposition}[SHAP-IQ for SV]\label{prop::shap_x_for_SV}
For SHAP-IQ with $p(T) \propto \mu(t)$ and sampling order $k_0=1$, it holds 
\begin{equation*}
    \hat I^m_1(i) = c_1(i) + \frac{2 h_{\fnum-1}}{K} \sum_{k=1}^K \nu_0(T_k)\left[ \mathbf{1}(i \in T_k) - \frac{t_k}{\fnum}\right],
\end{equation*}
where $h_n$ is the $n$-th harmonic number.
\end{proposition}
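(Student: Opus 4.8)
The plan is to specialize the general SHAP-IQ estimator of Definition~\ref{def::shapx} to the case $s_0 = k_0 = 1$ with $S = \{i\}$, identify its deterministic part with $c_1(i)$ from Theorem~\ref{thm::SV_representation}, and simplify the importance weights $\gamma^m_1(t,\cdot)/p_1(T)$ using the same factorial identities that already appear in the proof of Theorem~\ref{thm::SV_representation}. Throughout I write $m(t) := \tfrac{(\fnum-t-1)!\,t!}{\fnum!}$ and note that for $S = \{i\}$ one has $\vert T \cap S\vert = \mathbf{1}(i \in T) \in \{0,1\}$, so $\gamma^m_1(t,1) = m(t-1)$ when $i \in T$ and $\gamma^m_1(t,0) = -m(t)$ when $i \notin T$.

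I would first handle the deterministic part $c_1(\{i\})$. Since $\mathcal T_1 = \{T \subseteq \fset : 1 \le t \le \fnum - 1\}$, the only subsets outside $\mathcal T_{k_0}$ are $\emptyset$ and $\fset$. Because $\nu_0(\emptyset) = 0$, the empty set contributes nothing, and for $T = \fset$ we have $\vert T \cap \{i\}\vert = 1$ and $\gamma^m_1(\fnum,1) = m(\fnum-1) = 1/\fnum$, hence $c_1(\{i\}) = \nu_0(\fset)/\fnum = c_1(i)$, matching Theorem~\ref{thm::SV_representation}. Next I would compute the normalizing constant of the sampling distribution: with $p_1(T) \propto \mu(t)$ over $\mathcal T_1$ we have $p_1(T) = \mu(t)/R$ with $R = \sum_{t=1}^{\fnum-1}\binom{\fnum}{t}\mu(t)$. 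Using $\binom{\fnum}{t}\big/\binom{\fnum-2}{t-1} = \tfrac{\fnum(\fnum-1)}{t(\fnum-t)}$ together with $\mu(t) = \tfrac{1}{\fnum-1}\binom{\fnum-2}{t-1}^{-1}$ gives $\binom{\fnum}{t}\mu(t) = \tfrac{\fnum}{t(\fnum-t)}$, and the partial fraction $\tfrac{1}{t(\fnum-t)} = \tfrac{1}{\fnum}\!\left(\tfrac1t + \tfrac{1}{\fnum-t}\right)$ telescopes the sum to $R = 2\sum_{t=1}^{\fnum-1}\tfrac1t = 2h_{\fnum-1}$; this is precisely the identity invoked in the proof of Theorem~\ref{thm::u_ksh}.

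Finally I would simplify the per-sample weights and assemble the result. For $T \in \mathcal T_1$ (so $1 \le t \le \fnum-1$), a short factorial computation — identical to the one carried out in the proof of Theorem~\ref{thm::SV_representation} — shows $m(t-1)/\mu(t) = (\fnum-t)/\fnum$ and $m(t)/\mu(t) = t/\fnum$, so in both cases $\gamma^m_1(t,\mathbf{1}(i\in T))/\mu(t) = \mathbf{1}(i \in T) - t/\fnum$. Consequently each summand in Definition~\ref{def::shapx} satisfies $\nu_0(T_k)\,\gamma^m_1(t_k,\mathbf{1}(i\in T_k))/p_1(T_k) = 2h_{\fnum-1}\,\nu_0(T_k)\bigl(\mathbf{1}(i\in T_k) - t_k/\fnum\bigr)$, and adding $c_1(i)$ and dividing the sum by $K$ yields exactly the claimed expression for $\hat I^m_1(i)$.

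The only step demanding any care is the evaluation $R = 2h_{\fnum-1}$ via partial fractions; everything else is bookkeeping that reuses the algebra already established for Theorem~\ref{thm::SV_representation}, so I do not anticipate a conceptual obstacle.
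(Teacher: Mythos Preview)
Your proposal is correct and follows essentially the same approach as the paper: both compute the normalizing constant $R = 2h_{\fnum-1}$ via the partial-fraction identity $\tfrac{\fnum}{t(\fnum-t)} = \tfrac{1}{t}+\tfrac{1}{\fnum-t}$, and both reduce $\gamma^m_1(t,\mathbf{1}(i\in T))/\mu(t)$ to $\mathbf{1}(i\in T)-t/\fnum$ by the same factorial manipulation used in Theorem~\ref{thm::SV_representation}. Your treatment is slightly more explicit in verifying $c_1(\{i\}) = \nu_0(\fset)/\fnum$ directly from the definition, whereas the paper simply invokes it, but this is a cosmetic difference only.
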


\begin{proof}
    Recall the definition of SHAP-IQ of order $1$
\begin{align*}
    \hat I^m_{1}(i) :=  c_{1}(i) + \frac 1 K \cdot \sum_{k=1}^K \nu(T_k) \frac{\gamma^m_s(t_k,\vert T_k \cap \{i\}\vert)}{p(T_k)}.
\end{align*}
with $p(T_k) := \mu(t_k)/R \propto \mu(t_k)$.
We proceed by rewriting $\gamma^m_s(t_k,\vert T_k \cap \{i\}\vert)=\gamma^m_s(t_k,\mathbf{1}(i \in T_k))$ for $T_k \in \mathcal T_1$ as
\begin{align*}
\gamma^m_s(t_k,\mathbf{1}(i \in T_k)) &= (-1)^{1-\mathbf{1}(i \in T_k)}\frac{(\fnum-t-1+\mathbf{1}(i\in T_k))!(t-\mathbf{1}(i\in T_k))!}{\fnum!} 
\\
&= \mu(t) \frac{1}{\fnum}\left[\mathbf{1}(i \in T_k) (\fnum - t_k) - \mathbf{1}(i \notin T_k)t_k \right]
\\
&= \mu(t) \left[\mathbf{1}(i \in T_k) -\frac{t_k}{\fnum} \right].
\end{align*}
Hence,
\begin{align*}
\hat I^m_{1}(i) &:=  c_{1}(i) + \frac 1 K \cdot \sum_{k=1}^K \nu(T_k) \frac{\mu(t) \left[\mathbf{1}(i \in T_k) -\frac{t_k}{\fnum} \right]}{p(T_k)} 
\\
&=  c_{1}(i) + \frac R K \cdot \sum_{k=1}^K \nu(T_k) \left[\mathbf{1}(i \in T_k) -\frac{t_k}{\fnum} \right].
\end{align*}
For the normalizing constant, we have
\begin{align*}
R = \sum_{T \in \mathcal T_1} \mu(t) = \sum_{t=1}^{\fnum-1} \mu(t)\binom{\fnum}{t} = \sum_{t=1}^{\fnum-1} \frac{\fnum}{t(\fnum-t)} = \sum_{t=1}^{\fnum-1}\left(\frac{1}{t} + \frac{1}{\fnum-t}\right) = 2h_{\fnum-1},
\end{align*}
which finishes the proof.
\end{proof}

\clearpage
\section{Approximation Methods for the SV}\label{appx::approx_sv}
There are two prominent representations of the SV, which are used for sampling-based approximation.
Both allow to update all SVs simultaneously with one sample as well as maintaining the efficiency property.

\subsection{Permutation-based (PB) Approximation}
Permutation-based (PB) approximation was introduced for SV \cite{Castro.2009}.
It is based on the observation that the marginal contributions $\delta^\nu_{i}(T)=\nu(T\cup \{i\}) - \nu(T)$ can be computed by using permutations $\pi \in \mathfrak S_{\fset}$ of $\fset$ and $\nu(u^-_i(\pi))-\nu(u^+_i(\pi))$, where $u^-_i(\pi),u^+_i(\pi)$ are the sets that consist of all elements preceding $i$ in $\pi$ with and without $i$, respectively.
For for each subset $T \subseteq \fset \setminus \{i\}$ of size $t$ there are exactly $t!(\fnum-t-1)!$ permutations with $T = u_i^-(\pi)$ and thus
\begin{equation*}
    I^{\text{SV}}(i) = \frac{1}{\fnum!}\sum_{\pi \in \mathfrak S_{\fset}} \delta^\nu_{i}(u_S^-(\pi)) = \mathbb{E}_{\pi \sim \text{unif}(\mathfrak{S}_{\fset})}[\delta^\nu_{i}(u_S^-(\pi))].
\end{equation*}
This expectation can be efficiently approximated by sampling $\pi \sim \text{unif}(\mathfrak S_{\fset})$ and using a Monte Carlo estimate for the expectation.
As $\sum_{i \in \fset} \delta^\nu_i(u_i^-(\pi)) = \nu(\fset)-\nu(\emptyset)$ for arbitrary permutations $\pi$, the efficiency constraint is maintained throughout the sampling procedure.
The Monte Carlo estimates allows to apply well-established statistical results to obtain bounds on the approximation error \cite{Castro.2017}.

\subsection{Kernel-based (KB) Approximation}
Kernel Shapley Additive Explanation Values \cite{Lundberg_Lee_2017}, short KernelSHAP (KSH), and Unbiased KernelSHAP (U-KSH) \cite{Covert_Lee_2021} make use of the representation of the SV as the solution to a constrained quadratic optimization problem \cite{Charnes_Golany_Keane_Rousseau_1988}

\begin{align}\label{eq::wlreg-optimization}
\begin{aligned}
        &I^{\text{SV}} = \argmin_{\beta} \sum_{T \in \mathcal T_1} \mu(t)\left(\nu_0(T)-\sum_{i\in T}\beta_i\right)^2
        \\
        &\text{s.t. } \sum_{i \in \fset} \beta_i = \nu_0(\fset)
\end{aligned}
\end{align}
with $\nu_0(T) := \nu(T)-\nu(\emptyset)$, $\mathcal T_k := \{T\subseteq \fset: k\leq t \leq \fnum-k\}$ and $\mu(t) := \frac{1}{\fnum-1}\binom{\fnum-2}{t-1}^{-1}$.
This quadratic optimization problem can be solved explicitly using the weighted least square solution 
\begin{equation}\label{eq::wlreg-solution}
    I^{\text{SV}} = (\mathbf{Z}^T \mathbf{W} \mathbf{Z})^{-1} \mathbf{Z}^T \mathbf{W} \mathbf{y},
\end{equation}
where $\mathbf{Z} \in \{0,1\}^{2^\fnum \times \fnum}$ is a row-wise binary encoding of all subsets of $T \subseteq \fset$, $\mathbf{W} \in \mathbb{R}^{2^\fnum \times 2^\fnum}$ is a diagonal matrix with the subset weights $\mu$ and $\mathbf{y}$ consists of the evaluations of $\nu_0(T)$ for each subset.
To include the optimization constraint, the (otherwise undefined) weights $\mu(\fnum),\mu(0)$ of $\fset$ and $\emptyset$ are set to a high positive constant.
Solving (\ref{eq::wlreg-solution}) still requires $2^\fnum$ model evaluations and thus KSH \cite{Lundberg_Lee_2017} approximates $I^{\text{SV}}$ by considering (\ref{eq::wlreg-solution}) as an expectation
\begin{equation*}
    \sum_{T \in \mathcal T_1} \mu(t)\left(\nu_0(T)-\sum_{i\in T}\beta_i\right)^2 \propto \mathbb{E}_{T \sim p(T)}\left[\left(\nu_0(T)-\sum_{i\in T}\beta_i\right)^2\right]
\end{equation*}
with $p(T) \propto \mu(t)$.
Note that the optimization problem is invariant in terms of scaling.
This expectation is the approximated similarly to SHAP-IQ, by computing high and low subset sizes explicitly and using Monte Carlo integration for the center sizes.
The KSH estimator is difficult to analyze and it is only known that it is asymptotically unbiased \cite{Williamson_Feng_2020}.

KSH constructs a collection of subsets by determining a \emph{sampling order} $k_0$, such that subsets with $k_0\leq t\leq \fnum - k_0$ are sampled from $p(T) \propto \mu(t)$ and for $t<k_0$ or $t>\fnum-k_0$ all possible subsets are used.
The value of $k_0$ is thereby found by successively comparing the expected number of subsets with the total number of subsets of that size.
As the number of subsets $\binom{\fnum}{t}$ for fixed $\fnum$ is a symmetric log-concave sequence of positive terms, it has a maximum at the middle term(s) $\lfloor \frac \fnum 2 \rfloor, \lceil \frac \fnum 2 \rceil$ and grows monotonically and symmetrically as $\binom{\fnum}{t}=\binom{\fnum}{\fnum-t}$ towards this maximum from both sides.
Thus, the implementation starts the comparison at $k_0=0$ and  iteratively increases the $k_0$ candidate.


\end{document}